\pdfoutput=1
\documentclass[10pt,letterpaper]{article}
\usepackage{iclr2027_conference,times}
\iclrfinalcopy
\usepackage{amsmath,amssymb,amsthm}
\usepackage{booktabs}
\newtheoremstyle{canope}{\topsep}{\topsep}{\normalfont}{}{\bfseries}{.}{.5em}
  {\thmname{#1}\thmnumber{ #2}\thmnote{ \normalfont\itshape #3}}
\theoremstyle{canope}
\newtheorem{cntheorem}{Theorem}
\newtheorem{cnlemma}[cntheorem]{Lemma}
\newtheorem{cnproposition}[cntheorem]{Proposition}
\theoremstyle{canope}
\newtheorem{cnassumption}{Assumption}
\newcommand{\cnE}{\mathbb{E}}
\newcommand{\cnR}{\mathbb{R}}
\newcommand{\cnN}{\mathcal{N}}
\newcommand{\cnGap}{\Delta_{\mathrm{info}}}
\newcommand{\cnCE}{\mathcal{L}_{\mathrm{tok}}}
\newcommand{\cnRank}{r_{\mathrm{eff}}}
\newcommand{\cnKL}{D_{\mathrm{KL}}}
\DeclareMathOperator{\cnCov}{Cov}
\DeclareMathOperator{\cntr}{tr}
\DeclareMathOperator{\cnrank}{rank}
\DeclareMathOperator{\cnvec}{vec}
\DeclareMathOperator{\cnsoftmax}{softmax}
\DeclareMathOperator{\cnsg}{sg}
\DeclareMathOperator{\cnsign}{sign}
\DeclareMathOperator{\cnclip}{clip}
\DeclareMathOperator{\cnTC}{TC}

\usepackage{graphicx}
\usepackage{microtype}
\usepackage{placeins}
\usepackage{caption}
\newsavebox{\semanticcaptionbox}
\newsavebox{\controlscaptionbox}
\newlength{\pairedcaptionheight}
\usepackage{xcolor}
\usepackage{url}
\usepackage{hyperref}
\newcommand{\thead}[1]{{\bfseries\boldmath #1}}
\AddToHook{env/table/begin}{%
  \setlength{\abovecaptionskip}{0pt}%
  \setlength{\belowcaptionskip}{6pt}%
}
\hypersetup{hidelinks,pdfauthor={Kevin Putra Santoso; Naufal Zafran Fadil; Firania Putri Harsanti; Raden Venantius Hari Ginardi; Ganesh Neelakanta Iyer},pdftitle={Isotropic Yet Undecodable: The Sequential Content-Sufficiency Gap in Latent-Predictive Text Representations}}
\title{Isotropic Yet Undecodable: The Sequential Content-Sufficiency Gap in Latent-Predictive Text Representations}
\author{%
\bfseries Kevin Putra Santoso$^{1\ast,2,4}$, Naufal Zafran Fadil$^3$, Firania Putri Harsanti$^4$\\
\bfseries Raden Venantius Hari Ginardi$^2$, Ganesh Neelakanta Iyer$^1$\\[4pt]
\normalfont\small $^1$School of Computing, National University of Singapore\\
\normalfont\small $^2$Department of Information Technology, Institut Teknologi Sepuluh Nopember\\
\normalfont\small $^3$Faculty of Computer Science, Universitas Indonesia\\
\normalfont\small $^4$Avalon AI\\[3pt]
\normalfont\footnotesize\texttt{t0939544@u.nus.edu}, \texttt{naufal.zafran@ui.ac.id}, \texttt{firania@avalon-ai.org}\\
\normalfont\footnotesize\texttt{hari@its.ac.id}, \texttt{gni@nus.edu.sg}
}

\begin{document}
\maketitle
\begingroup
\renewcommand{\thefootnote}{\fnsymbol{footnote}}
\footnotetext[1]{This project was completed by Kevin Putra Santoso during his internship at the School of Computing, National University of Singapore.}
\endgroup
\fancyhead{}
\renewcommand{\headrulewidth}{0pt}
\begin{abstract}
  We study sequential content sufficiency by investigating whether a representation retains the ordered target information available in its input. An information-theoretic decomposition separates input ambiguity, representation loss, and readout mismatch. We construct recoverable views where perfect agreement and joint isotropic Gaussianity coexist with zero target information, and establish limits imposed by deterministic canonical anchors. Token log-loss provides a one-sided information-loss bound; a fixed-penalty ridge analysis shows why rank alone cannot determine prediction risk. These results motivate CANOPE, a nonautoregressive framework with ordered latent canvases, canonical-token supervision, and geometric regularization. On 40,000 validation sequences, latent-agreement (PL0) and token-grounded (PL2) have nearly identical pooled ranks but reach 13.5\% and 98.8\% positional Recall@1, respectively, under strong natural corruption when the correct target length is provided. On 3,930 LJSpeech validation utterances, frozen PL2 with a trained MatchaTTS readout yields 21.54\% word error rate (WER) on corrupted text, versus 99.22\% for frozen PL0, while end-to-end MatchaTTS reaches 10.93\%. These results show that geometric regularity alone does not guarantee recoverable sequential content or effective downstream access in the text settings studied here.
\end{abstract}

% Overview float: full text width; the text of Sections 1--5 is unchanged.
\begin{figure}[t]
  \centering
  \includegraphics[width=\textwidth]{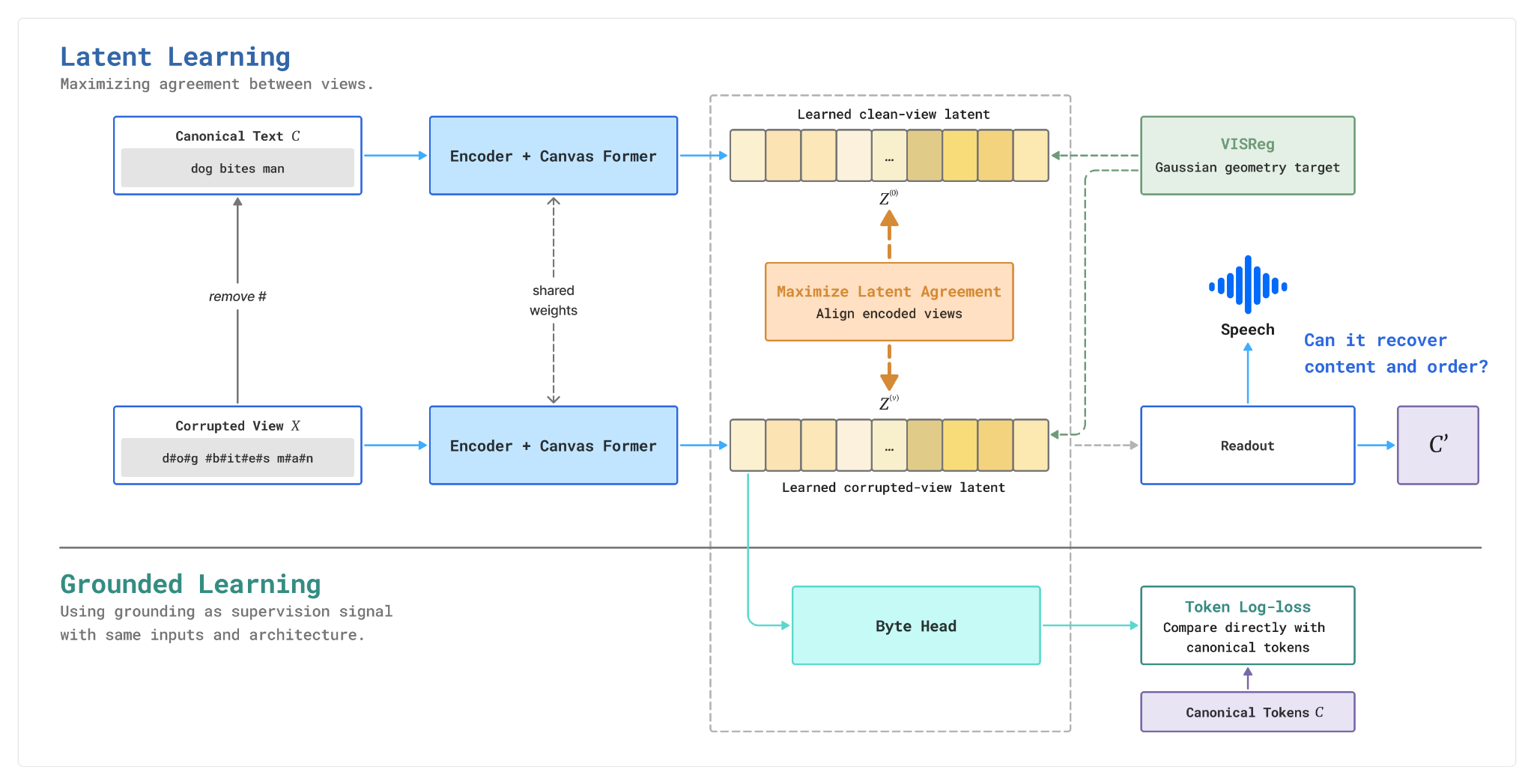}
  \caption{Latent agreement and canonical-token grounding. The recoverable CL channel isolates access to ordered content; downstream readouts test reconstruction and speech. Both objectives use the same architecture.}
  \label{fig:learning-overview}
\end{figure}

\section{Introduction}
The information a text representation must preserve depends on how it will be used. For identity retrieval, a representation may discard substantial surface detail while retaining enough information to distinguish relevant content. Sequential generation, however, requires a downstream readout to recover the target-specific content and order needed to control the output. In text-conditioned speech synthesis, for instance, the generator must say the intended words in the intended order, even when its input contains surface errors. Information treated as nuisance or non-shared across views may therefore still matter downstream~\citep{wang2022rethinking}. Canonicalization makes this requirement testable: surface variations may be discarded, but the distinctions needed to recover a specified clean sequence must remain accessible~\citep{tasawong2023typo,lewis2020bart}.

View-based self-supervised learning encourages agreement between related inputs~\citep{chen2020simclr,gao2021simcse}, while joint-embedding predictive architectures (JEPAs) learn by predicting latent targets~\citep{assran2023ijepa,assran2025vjepa2}. LeJEPA gives this approach a geometric foundation: under its assumptions, isotropic Gaussian representations minimize worst-case downstream risk, motivating Sketched Isotropic Gaussian Regularization (SIGReg)~\citep{balestriero2025lejepa}. VISReg pursues this target through variance and projected-shape constraints~\citep{wu2026visreg}. Learning the encoder also changes which target information its states retain. A favorable marginal distribution does not determine the conditional target distribution, so applying a geometric guarantee to sequential recovery additionally requires preserving the distinctions needed to identify the target.

Our questions are: are view agreement and isotropic Gaussian regularization sufficient to retain the ordered canonical content available in text, and how do grounding and readout affect its recovery? Answering this requires separating input ambiguity, representation loss, and readout mismatch. We first show why agreement and even exact Gaussianity leave target retention undetermined, then establish what canonical anchors and token supervision constrain. A fixed-penalty ridge analysis addresses the complementary case: information is preserved, yet its accessibility changes with geometry and target alignment.

Length controls potential content: for alphabet size $K$ and length $\ell$, sequence entropy is at most $\ell\log K$~\citep{cover1991elements,shannon1948communication}. Controlled-length (CL) experiments supply target lengths and use invertible corruption, so neither input ambiguity nor length error can explain failed recovery. Parametrized-length (PL) experiments then introduce natural corruption and predicted lengths, before trained speech readouts test downstream consequences. Figure~\ref{fig:learning-overview} summarizes this investigation, contrasting latent agreement with canonical-token grounding and illustrating how downstream readouts test whether the learned representations support recovery of the intended content and order. Our contributions are as follows:
\begin{itemize}
  \item[$\bullet$] Agreement and joint isotropic Gaussianity can coexist with zero target information. Nonconstant anchored codes retain information but can merge targets; token log-loss bounds the sufficiency gap, while ridge risk exposes target-dependent access.
  \item[$\bullet$] CANOPE's controlled comparison finds near-perfect recovery with token grounding and poor access with latent agreement, despite recoverable inputs. In PL, nearly equal pooled ranks conceal sharply different recovery.
  \item[$\bullet$] The access contrast persists with trained speech readouts, while external evaluations separate canonical matching from semantic generalization.
\end{itemize}

\section{Related Work}
\label{sec:related-work}

\paragraph{Joint embeddings and geometry.}
SimCLR and VICReg combine cross-view agreement with representation constraints~\citep{chen2020simclr,bardes2022vicreg}. I-JEPA predicts target image regions; data2vec extends contextual latent prediction across modalities~\citep{assran2023ijepa,baevski2022data2vec}. LeJEPA and VISReg target isotropic Gaussian geometry under explicit assumptions~\citep{balestriero2025lejepa,wu2026visreg}.

\paragraph{Task information and readouts.}
InfoMin requires useful views to preserve task information~\citep{tian2020infomin}; augmentation-based identifiability depends on assumptions about content and changing latent factors~\citep{vonkugelgen2021content}. Usable information and the Decodable Information Bottleneck formalize predictive-family constraints~\citep{xu2020usable,dubois2020decodable}. RankMe provides an empirical rank diagnostic~\citep{garrido2023rankme}.

\paragraph{Latent prediction for language.}
LLM-JEPA studies language embedding objectives, and latent prediction has been combined with masked-token reconstruction~\citep{huang2025llmjepa,boukhari2026predict}. Recent work examines conditional concentration, structured downstream geometry, and sequential latent prediction~\citep{dinh2026paradox,alvarez2026beyond,mostafa2026dynamic}. Our focus is ordered canonical content under a controlled, invertible corruption channel, together with fixed-readout constraints and regularization location.

\paragraph{Canonical and semantic representations.}
BART, TSDAE, and ByT5 establish noisy-to-clean supervision and robust representations~\citep{lewis2020bart,wang2021tsdae,xue2022byt5}; typo-robust retrieval studies misspelling invariance~\citep{tasawong2023typo}. Sentence-BERT, SimCSE, and Qwen3 provide embedding references~\citep{reimers2019sbert,gao2021simcse,zhang2025qwen3}; NegMPNet and SemAntoNeg address negation~\citep{anschutz2023negation,vahtola2022semantoneg}. PAWS tests lexical-overlap shortcuts, and ColBERT retains token interactions~\citep{zhang2019paws,khattab2020colbert}.

\section{Sequential Content Sufficiency}
\label{sec:content}
We begin with the information a downstream task needs, before asking what agreement and geometry guarantee about it. Proofs and extensions appear in Appendix~\ref{app:proofs}.

\subsection{What Is Preserved, and What Can a Readout Access?}
\label{sec:decomposition}
Let $C=(C_1,\ldots,C_L)$ be a canonical sequence over an alphabet $\mathcal A$ of size $K$, $X$ its corrupted view, and $Z_M\in\cnR^{M\times d}$ an ordered latent canvas. Side information $S$ is available to encoder and readout. CL and PL oracle controls use $S=L$, $M=L$. For PL predicted-length inference, $M$ is predicted from $X$; take $S$ constant and $Z_M$ as a variable-length object with observable shape. Detaching the length-head input changes training gradients, not inference information.

A sequence readout assigns normalized probabilities $q(C\mid Z,S)$ under the stated routing policy. Conditional entropy $H$ and mutual information $I$ use natural logarithms (nats); $\cnE$ denotes population expectation and $\cnKL$ Kullback--Leibler divergence. The notation $C\perp Z\mid(X,S)$ means conditional independence, $p(c,z\mid x,s)=p(c\mid x,s)p(z\mid x,s)$ almost everywhere, not orthogonality. Appendix~\ref{app:primer} collects the relevant identities.
\\

\begin{cnassumption}[Evaluation information]
  \label{ass:information}
  Regular conditional distributions (conditional probability laws given observed variables) exist, $H(C\mid S)<\infty$, and $C\perp Z\mid(X,S)$. The representation receives no target information beyond the input and stated side information. Readouts access only $(Z,S)$, without target-token prefixes or another target-token side channel. Evaluated expected log-losses are finite.
\end{cnassumption}
A deterministic encoder $F(X,S)$ satisfies this requirement. We define the content-sufficiency gap using conditional mutual information,
\begin{equation}
  \cnGap :=I(C;X\mid Z,S) =H(C\mid Z,S)-H(C\mid X,Z,S) =H(C\mid Z,S)-H(C\mid X,S).
  \label{eq:gap-definition}
\end{equation}
The last equality uses Assumption~\ref{ass:information}~\citep[Chapter~2]{cover1991elements}. Equivalently, $\cnGap=I(C;X\mid S)-I(C;Z\mid S)\geq0$. This existing information quantity measures what processing loses: zero gap means that $(Z,S)$ preserves the input-conditioned target posterior almost surely (Appendix~\ref{app:decomposition}).

\begin{cnlemma}[Log-loss decomposition]
  \label{lem:decomposition}
  Under Assumption~\ref{ass:information}, every normalized readout $q$ with finite expected log-loss satisfies
  \begin{equation}
  \cnE[-\log q(C\mid Z,S)] ={} \underbrace{H(C\mid X,S)}_{\text{input ambiguity}}+\underbrace{\cnGap}_{\substack{\text{representation}\\\text{loss}}}+\underbrace{\cnE_{Z,S}\cnKL\!\left(p(\cdot\mid Z,S)\!\Vert q(\cdot\mid Z,S)\right)}_{\text{readout mismatch}}.
  \label{eq:decomposition}
\end{equation}
  where $p$ is the true conditional target distribution.
\end{cnlemma}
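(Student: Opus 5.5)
The plan is to prove the identity in two stages. First, write the expected log-loss of $q$ as the cross-entropy of $q$ against the true conditional $p(\cdot\mid Z,S)$, which splits into $H(C\mid Z,S)$ plus the expected KL term. Second, split $H(C\mid Z,S)$ using the definition \eqref{eq:gap-definition} of $\cnGap$ and Assumption~\ref{ass:information}.

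\textbf{Stage 1 (cross-entropy).} Work conditionally on $(Z,S)$ using the regular conditional distribution $p(\cdot\mid z,s)$ guaranteed by Assumption~\ref{ass:information}. For each $(z,s)$, write
\[
-\log q(c\mid z,s) = -\log p(c\mid z,s) + \log\frac{p(c\mid z,s)}{q(c\mid z,s)}.
\]
Take the expectation over $C\sim p(\cdot\mid z,s)$ and then over $(Z,S)$, using the tower property. This gives
\[
\cnE[-\log q(C\mid Z,S)] = H(C\mid Z,S) + \cnE_{Z,S}\,\cnKL\bigl(p(\cdot\mid Z,S)\Vert q(\cdot\mid Z,S)\bigr).
\]
Since $C$ is discrete over the finite set $\mathcal A^L$, or over countably many lengths when $L$ varies, these are sums, and the regular conditional law is just a conditional pmf.

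\textbf{Stage 2 (entropy split).} By \eqref{eq:gap-definition}, $H(C\mid Z,S)=H(C\mid X,S)+\cnGap$. The second equality there, $H(C\mid X,Z,S)=H(C\mid X,S)$, is exactly where $C\perp Z\mid(X,S)$ enters, and it is taken as established in the text. Substituting into Stage 1 yields \eqref{eq:decomposition}.

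\textbf{Main obstacle: finiteness.} The expectation must not split as $\infty-\infty$. I would argue as follows.
\begin{itemize}
\item $H(C\mid Z,S)\le H(C\mid S)<\infty$, because conditioning reduces entropy.
\item By hypothesis, $\cnE[-\log q]<\infty$.
\item The pointwise log-ratio is not integrable in general. So I would avoid splitting expectations of possibly signed terms. Instead, compute the inner conditional expectation for each $(z,s)$, where the KL divergence is a well-defined sum of nonnegative-after-Gibbs quantities in $[0,\infty]$.
\item Then use linearity only between nonnegative or finite quantities.
\end{itemize}
Two consequences follow. First, $q(c\mid Z,S)>0$ almost surely wherever $p>0$. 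Second, the KL term is finite, since it equals a finite quantity minus a finite one. All three terms in \eqref{eq:decomposition} are nonnegative: $\cnGap\ge0$ as a conditional mutual information, and KL is nonnegative by Gibbs' inequality. As a corollary, this also yields the one-sided bound $\cnE[-\log q]\ge H(C\mid X,S)+\cnGap$.
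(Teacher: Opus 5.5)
Your proposal is correct and follows the paper's proof essentially step for step. You add and subtract $\log p(C\mid Z,S)$ to obtain $H(C\mid Z,S)$ plus the expected KL term, then use the conditional Markov property $H(C\mid X,Z,S)=H(C\mid X,S)$ to write $H(C\mid Z,S)=H(C\mid X,S)+\cnGap$. Your finiteness argument (pointwise in $(z,s)$ with nonnegative quantities, plus $H(C\mid Z,S)\le H(C\mid S)<\infty$) is a more careful version of the paper's remarks that finite $H(C\mid S)$ keeps the differences well defined and that zero-probability readouts are excluded by the finite-loss assumption.
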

The terms separate unavoidable input uncertainty, representation loss, and readout mismatch. A recoverable channel has $H(C\mid X,S)=0$, hence $\cnGap=H(C\mid Z,S)$. Low population log-loss upper-bounds the gap; high loss need not imply information loss because the readout may be restricted or poorly fitted~\citep{xu2020usable,dubois2020decodable}. Appendix~\ref{app:decomposition} separates family limits from fitting error.
Squared loss similarly separates input Bayes risk, its increase after encoding, and readout error (Appendix~\ref{app:squared-bridge}), connecting target preservation to downstream-risk guarantees.

\paragraph{Length and ordered content.}
For each $\ell$ with positive probability, the entropy chain rule gives
\begin{equation}
  H(C\mid L=\ell) = \sum_{t=1}^{\ell} H(C_t\mid C_{<t},L=\ell) \le \ell\log K.
  \label{eq:length-entropy}
\end{equation}
Here $C_{<t}=(C_1,\ldots,C_{t-1})$. Since $L$ is determined by $C$, $H(C)=H(L)+H(C\mid L)$ when finite. Conditioning on $L$ removes length information, not token dependence. CL fixes each example's target length, not a common corpus length (Appendix~\ref{app:primer}).

\subsection{Why Agreement and Isotropy Do Not Guarantee Content}
\label{sec:agreement}
Our counterexample imposes a stronger condition than the empirical diagnostics. Isotropic covariance means $\cnCov(R)=\sigma^2I$, $\sigma^2>0$; standard Gaussianity additionally requires $R\sim\cnN(0,I)$. We impose $\cnvec Z\sim\cnN(0,I_D)$ on the entire canvas, where $\cnvec$ stacks entries and $D=\ell d$, rather than only pooled vectors or sampled rows.

Perfect agreement means $Z^{(1)}=Z^{(2)}$ almost surely (a.s.), with zero squared disagreement $\mathcal L_{\mathrm{dis}}=\cnE\|Z^{(1)}-Z^{(2)}\|_F^2$. Even this agreement and joint Gaussianity do not force target retention.
\begin{cntheorem}[Invariant Gaussian representations without target information]
  \label{thm:counterexample}
  For every nonconstant $C\in\mathcal A^\ell$, there exist recoverable views and a shared deterministic encoder satisfying
  \begin{equation}
  H(C\mid X^{(v)})=0,\quad Z^{(1)}=Z^{(2)}\ \mathrm{a.s.},\qquad \cnvec Z^{(v)}\sim\cnN(0,I_D),\quad I(C;Z^{(v)})=0.
  \label{eq:counterexample}
\end{equation}
  Thus $\mathcal L_{\mathrm{dis}}=0$, its global minimum.
\end{cntheorem}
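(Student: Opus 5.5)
The plan is to build the views so that each one contains both the target $C$ and an independent Gaussian nuisance. The encoder will read only the nuisance.

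\textbf{Construction.} Fix the (nonconstant) law of $C$ on $\mathcal A^\ell$. Independently of $C$, draw $G\in\cnR^{\ell\times d}$ with $\cnvec G\sim\cnN(0,I_D)$. Set
\[
X^{(v)}:=(\pi_v(C),G),\qquad v=1,2,
\]
where $\pi_1,\pi_2$ are bijections of $\mathcal A^\ell$, for example the identity and a fixed invertible corruption such as a cyclic relabeling of symbols. Both views share the same nuisance component $G$. Take the shared deterministic encoder $F(c',g):=g$, so that $Z^{(v)}=F(X^{(v)})=G$.

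\textbf{Verification.} Each property follows in turn.
\begin{itemize}
\item[$\bullet$] \emph{Recoverability.} $C=\pi_v^{-1}$ applied to the first coordinate of $X^{(v)}$ is a measurable function of $X^{(v)}$, so $H(C\mid X^{(v)})=0$.
\item[$\bullet$] \emph{Perfect agreement.} $Z^{(1)}=Z^{(2)}=G$ holds identically, hence almost surely. Therefore $\mathcal L_{\mathrm{dis}}=0$. Since $\mathcal L_{\mathrm{dis}}\ge0$ always, this is its global minimum.
\item[$\bullet$] \emph{Gaussianity.} $\cnvec Z^{(v)}=\cnvec G\sim\cnN(0,I_D)$ by construction. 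This holds jointly over the whole canvas, not only for pooled vectors or sampled rows.
\item[$\bullet$] \emph{Zero target information.} $Z^{(v)}=G$ is independent of $C$, so $I(C;Z^{(v)})=0$. Because $C$ is nonconstant with finite support, $H(C)\in(0,\ell\log K]$ is finite. The gap in \eqref{eq:gap-definition} then equals $H(C)>0$, so the example is nontrivial.
\end{itemize}

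\textbf{Consistency with Assumption~\ref{ass:information}.} The Markov condition $C\perp Z\mid X$ holds because $Z$ is a deterministic function of $X$. No side information is needed, so take $S$ constant.

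\textbf{Main obstacle.} The subtle point is interpretive rather than technical. The views must be genuinely recoverable, and the Gaussianity must hold for the full canvas, not merely in moments. Both requirements come from making $G$ an exact independent Gaussian block carried inside the input.

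If one prefers views whose only content is text, with no appended noise, I would instead use randomization through a continuous coordinate. Encode $C$ together with an independent uniform $U$ into a single real input by interleaving digits. Then apply the inverse-CDF map to $U$, after splitting it into $D$ independent uniforms, to produce $\cnN(0,I_D)$. The verification is identical to the one above. I would mention this variant only as a remark.
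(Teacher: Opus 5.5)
Your construction is correct and is essentially the paper's: the paper also places a shared Gaussian $U$, independent of $C$, inside every view and lets the shared encoder output only $U$ reshaped as a canvas. The only difference is that the paper distinguishes the views with independent view-specific noise $N_v$, whereas you use distinct bijections $\pi_v$ of $C$. One caveat on your closing remark: interleaving the digits of a continuous uniform into the input is still appended continuous randomness, not a text-only view, and the paper notes that exact Gaussianity is impossible for a deterministic encoder of finite strings, giving instead an identity-covariance, small-$W_2$ finite-string counterpart.
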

\paragraph{Construction and interpretation.}
Let $X^{(v)}=(C,U,N_v)$, where $U\sim\cnN(0,I_D)$ is shared but independent of $C$, and $N_v$ is independent view-specific noise. Output only $U$, reshaped as a canvas. The input determines $C$, but the representation discards it while satisfying agreement and Gaussianity. Two equiprobable word orders are therefore undecodable beyond chance. Appendix~\ref{app:counterexample} proves the claim. Exact continuous Gaussianity is impossible for a deterministic finite-string encoder; Appendix~\ref{app:finite} gives finite-string counterparts with identity covariance and arbitrarily small Wasserstein-2 distance to a Gaussian.

\begin{cnproposition}[Canonical-anchor restriction]
  \label{prop:anchor}
  Let $X^{(0)}=a(C)$ be a deterministic canonical anchor, $C$ finite-valued, and the shared encoder deterministic. Exact agreement with the anchor forces $Z=g(C)$ and $I(C;Z)=H(Z)$. Nonconstant representations retain some information, but can still merge distinct sequences.
\end{cnproposition}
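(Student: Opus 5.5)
The proof goes through three steps: show that $Z$ is a deterministic function of $C$, compute the mutual information, and give an explicit example of merging. Let $F$ be the shared deterministic encoder, so $Z=F(X^{(0)})=F(a(C))$ for the anchor view and $Z^{(v)}=F(X^{(v)})$ for any other view.

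\textbf{Step 1: $Z$ is a function of $C$.} Exact agreement with the anchor means $Z^{(v)}=F(a(C))$ almost surely. Define $g:=F\circ a$. This is a composition of deterministic maps on the finite set $\mathcal A^\ell$ (or on the finite support of $C$), so $Z=g(C)$ almost surely.

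\textbf{Step 2: the mutual information.} Since $C$ is finite-valued, $Z=g(C)$ takes finitely many values, so $H(Z)\le H(C)<\infty$ and discrete entropies apply. Write $I(C;Z)=H(Z)-H(Z\mid C)$. Because $Z$ is a deterministic function of $C$, $H(Z\mid C)=0$, and therefore $I(C;Z)=H(Z)$. If $Z$ is nonconstant, meaning $g$ takes at least two values with positive probability, then $H(Z)>0$, so some information is retained. This contrasts with Theorem~\ref{thm:counterexample}: the independent shared randomness $U$ that made $I(C;Z)=0$ is now ruled out, because the anchor view carries no randomness beyond $C$.

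\textbf{Step 3: merging.} Merging means $g$ can fail to be injective, in which case $H(C\mid Z)=H(C)-H(Z)>0$. For an explicit instance, take $\ell\ge2$ and let $C$ be uniform over two word orders $c\neq c'$ plus a third sequence $c''$. Choose $F$ so that $g(c)=g(c')\neq g(c'')$. Then $Z$ is nonconstant, yet $H(C\mid Z)=\tfrac23\log2>0$. Combining this with Lemma~\ref{lem:decomposition} on the recoverable anchor channel, where $H(C\mid X^{(0)})=0$, gives $\cnGap=H(C\mid Z)>0$. Hence every readout incurs positive log-loss despite exact agreement.

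The main subtlety lies in Step 1. One must be explicit that the encoder is deterministic and that agreement holds almost surely, so that $Z$ equals $g(C)$ off a null set. In the finite case, null sets carry no mass on the support of $C$, so the equality is genuine rather than only almost-everywhere.
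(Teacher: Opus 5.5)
Your proposal is correct and follows essentially the paper's argument: $g=F\circ a$, then $H(Z\mid C)=0$ gives $I(C;Z)=H(Z)$, and a non-injective $g$ shows merging. Your three-point example with $H(C\mid Z)=\tfrac23\log 2$ works as well as the paper's two independent bits with $g$ keeping one. One small caveat is that the identity $\cnGap=H(C\mid Z)$ in your Step 3 requires $a$ to be injective on the support of $C$, which the statement does not assume; your positive-log-loss conclusion nonetheless follows from $H(C\mid Z)>0$ alone.
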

The anchor has no extra shared randomness, so it excludes the zero-information construction unless the representation is constant. Conditionally independent identically distributed views given $C$ yield the same conclusion for finite-second-moment outputs (Appendix~\ref{app:anchor}). Anchoring thus restricts the counterexample; token supervision specifies which distinctions must survive. CL latent arms use a differentiable mean of selected views including the clean view, while PL0 uses a detached clean target (Section~\ref{sec:objectives}).

\section{Readout Constraints and Target-Dependent Geometry}
\label{sec:readout}
We next connect token supervision to retained distinctions, then isolate how geometry affects readout access when information is preserved.

\subsection{What Canonical Token Supervision Constrains}
\label{sec:token}
Fix $L=\ell$, $S=L$ as in CL and the PL oracle control. Token grounding predicts each canonical token from its canvas position. For a fixed head $q_W(\cdot\mid z)=\cnsoftmax(Wz+b)$, $W\in\cnR^{K\times d}$, $b\in\cnR^K$, a common logit shift leaves probabilities unchanged. Let $P_K=I_K-\mathbf1\mathbf1^\top/K$, $A=P_KW$, and $r_W=\cnrank(A)$. The population average token log-loss is
\begin{equation}
  \cnCE=\frac1\ell\cnE\sum_{t=1}^{\ell}-\log q_W(C_t\mid z_t).
  \label{eq:token-loss}
\end{equation}
\begin{cnproposition}[Observable directions and content bound]
  \label{prop:readout}
  For finite logits, fixed $(W,b)$, and any perturbation $\delta\in\cnR^d$,
  \begin{equation}
  q_W(\cdot\mid z+\delta)=q_W(\cdot\mid z)\iff A\delta=0,\qquad r_W\leq\min(d,K-1).
  \label{eq:kernel}
\end{equation}
  Under Assumption~\ref{ass:information}, $0\leq\cnGap\leq H(C\mid Z,S)\leq\ell\cnCE$.
\end{cnproposition}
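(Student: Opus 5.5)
The proof has two independent parts: a linear-algebra characterization of the invariance directions of the softmax head, and an information inequality obtained from Lemma~\ref{lem:decomposition} together with a positionwise readout.

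\textbf{Kernel characterization.} Write the logits as $\eta(z)=Wz+b$. Softmax is invariant exactly under common shifts: $\cnsoftmax(u)=\cnsoftmax(v)$ iff $u-v\in\operatorname{span}(\mathbf1)$.

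For the ``if'' direction, this holds because the normalizer absorbs a common factor $e^{c}$. For the ``only if'' direction, equal probabilities give $u_k-\log\sum_j e^{u_j}=v_k-\log\sum_j e^{v_j}$ for every $k$, so $u-v$ is constant across coordinates. Here finiteness of the logits makes the logarithms well defined.

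Apply this with $u-v=W\delta$. Then $q_W(\cdot\mid z+\delta)=q_W(\cdot\mid z)$ iff $W\delta\in\operatorname{span}(\mathbf1)$. Since $P_K$ is the orthogonal projector whose kernel is exactly $\operatorname{span}(\mathbf1)$, this is equivalent to $P_KW\delta=A\delta=0$.

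For the rank bound, $A=P_KW$ has $d$ columns, and its column space lies in $\operatorname{range}(P_K)=\mathbf1^\perp$, which has dimension $K-1$. Hence $r_W\le\min(d,K-1)$.

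\textbf{Information chain.} The bound $0\le\cnGap$ is nonnegativity of conditional mutual information. The bound $\cnGap\le H(C\mid Z,S)$ follows from \eqref{eq:gap-definition}, since $\cnGap=H(C\mid Z,S)-H(C\mid X,S)$ and $H(C\mid X,S)\ge0$ for finite-valued $C$.

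For the last inequality, build the sequence readout $q(c\mid Z,S)=\prod_{t=1}^{\ell}q_W(c_t\mid z_t)$. This is a normalized distribution on $\mathcal A^\ell$ because it is a product of normalized factors, and it reads only $(Z,S)$ because $S=L=\ell$ fixes the canvas length. It therefore satisfies the readout requirement of Assumption~\ref{ass:information}.

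Its expected log-loss is $\cnE\sum_t-\log q_W(C_t\mid z_t)=\ell\cnCE$. This is finite whenever $\cnCE$ is; if $\cnCE=\infty$ the bound is trivial.

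Lemma~\ref{lem:decomposition} writes this log-loss as $H(C\mid X,S)+\cnGap$ plus a nonnegative KL term. Combined with \eqref{eq:gap-definition}, the first two terms sum to $H(C\mid Z,S)$, so
\[
H(C\mid Z,S)\le\ell\cnCE .
\]

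\textbf{Main obstacle.} The most delicate step is justifying that the product of per-position heads is a legitimate readout under Assumption~\ref{ass:information}. It must be measurable in $(Z,S)$ only, and it must have finite loss, or the loss must be handled as infinite. The PL variable-length case should be restricted to the fixed-$\ell$ setting stated before the proposition.

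As a fallback, one can avoid the lemma: the cross-entropy of any normalized $q$ upper-bounds the true conditional entropy $H(C\mid Z,S)$ (Gibbs' inequality applied conditionally on $(Z,S)$), which gives the same bound directly.
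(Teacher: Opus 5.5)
Your proposal is correct and follows essentially the same route as the paper: softmax shift-invariance (which the paper states as $\log(p_i/p_j)=u_i-u_j$) identifies the invariant directions with $\ker P_KW$, and $\cnrank(P_K)=K-1$ gives the rank bound. The normalized product readout $\prod_t q_W(C_t\mid z_t)$ inserted into Lemma~\ref{lem:decomposition} then yields $\ell\cnCE=H(C\mid Z,S)+\cnE\cnKL(p\Vert q_{\mathrm{prod}})\geq H(C\mid Z,S)\geq\cnGap\geq0$; your treatment of the infinite-loss case and the Gibbs-inequality fallback spell out points the paper leaves implicit.
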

The fixed head observes logit differences and is blind to the $d-r_W$ dimensional null space $\ker A$. Applying Lemma~\ref{lem:decomposition} to $\prod_tq_W(C_t\mid z_t)$ gives the bound without assuming independent true tokens. Low population loss over every target position certifies a small gap, but does not constrain all latent directions. Appendix~\ref{app:readout} proves the variable-length and curvature extensions; prefix-only supervision cannot certify the suffix.

\paragraph{Local access is stricter than full-canvas access.}
A local head sees only $z_t$, whereas a sequence readout sees all of $Z$. For independent fair bits and $Z=(C_2,C_1)$, each local prediction is at chance, but swapping positions recovers the target. Equation~\eqref{eq:local-readout} decomposes optimal local loss into full-canvas uncertainty, residual token dependence, and information at other positions. A fixed linear head is further restricted; its failure alone cannot prove information loss.

\subsection{Why Higher Effective Rank Can Help or Hurt a Fixed Readout}
\label{sec:ridge}
We rescale representation directions while preserving target information, then fit a population predictor with fixed ridge penalty. This isolates geometry--readout interaction without finite-sample estimation error.

\paragraph{The target and its coefficients.}
Let $G\sim\cnN(0,I_d)$ and $Y=\theta^\top G+\varepsilon$. Fixed $\theta$ selects target-relevant factors; $w$ denotes learned readout coefficients and $\|\cdot\|$ the Euclidean norm.
\begin{cnassumption}[Population ridge model]
  \label{ass:ridge}
  Fix $\theta\ne0$, penalty $\mu>0$, and noise with $\cnE[\varepsilon\mid G]=0$ and $\cnE\varepsilon^2=\sigma^2<\infty$. For a positive-definite covariance $\Sigma$ with total variance $\cntr\Sigma=d$, define
  \begin{equation}
  Z_\Sigma=\Sigma^{1/2}G,\qquad w_\Sigma=\arg\min_w\left\{\cnE(Y-w^\top Z_\Sigma)^2+\mu\|w\|^2\right\}.
  \label{eq:ridge-setup}
\end{equation}
\end{cnassumption}
Invertibility, $G=\Sigma^{-1/2}Z_\Sigma$, preserves target information and unrestricted risk $\sigma^2$. Fixing total variance prevents trivial penalty reduction by uniformly scaling features.

\paragraph{Spectrum, effective rank, and excess risk.}
In a covariance eigenbasis, let $\lambda_i$ be variances and $\theta_i$ target coefficients. With $p_i=\lambda_i/\cntr\Sigma$, define
\begin{equation}
  \cnRank(\Sigma)=\exp\!\left(-\sum_{i=1}^{d}p_i\log p_i\right).
  \label{eq:effective-rank}
\end{equation}
Effective rank approaches 1 for one-direction concentration and equals $d$ for equal eigenvalues. It measures neither target alignment nor Gaussian shape; unlike RankMe~\citep{garrido2023rankme}, it uses covariance eigenvalues. Ridge normal equations give excess risk, excluding the penalty:
\begin{equation}
  R_\mu :=\cnE(Y-w_\Sigma^\top Z_\Sigma)^2-\sigma^2 =\mu^2\sum_{i=1}^{d}\frac{\theta_i^2}{(\lambda_i+\mu)^2}.
  \label{eq:ridge-risk}
\end{equation}
Small-variance target directions require larger coefficients and are penalized more strongly (derivation: Appendix~\ref{app:ridge}).

\paragraph{A path toward isotropy.}
Fix the eigenbasis and split it into blocks of sizes $r$ and $d-r$, with $1\leq r<d$. Set their variances to $a=[d-(d-r)\rho]/r$ and $b=\rho$, where $0<\rho<1$. This preserves total variance $d$. As $\rho$ increases, both variances approach 1 and effective rank increases. Let $\eta=\sum_{i\leq r}\theta_i^2/\|\theta\|^2$ be the target's energy share in the first block. This analytical family is not an assumed trained-model spectrum.

\begin{cntheorem}[Target-alignment threshold]
  \label{thm:threshold}
  Under Assumption~\ref{ass:ridge}, along the fixed-basis family above,
  \begin{equation}
  \cnsign\!\left(\frac{dR_\mu}{d\cnRank}\right)=\cnsign(\eta-\eta_*),\qquad \eta_*:=\frac{r(a+\mu)^3}{r(a+\mu)^3+(d-r)(b+\mu)^3}.
  \label{eq:threshold}
\end{equation}
\end{cntheorem}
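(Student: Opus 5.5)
Parametrize the family by $\rho\in(0,1)$ and use the chain rule $dR_\mu/d\cnRank=(dR_\mu/d\rho)/(d\cnRank/d\rho)$. The sign statement then reduces to two facts. First, $\cnRank$ is strictly increasing in $\rho$. Second, $dR_\mu/d\rho$ has the sign of $\eta-\eta_*$.

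For the first fact, write $p=a/d$ for each of the $r$ large directions and $q=\rho/d$ for each of the $d-r$ small ones, with $rp+(d-r)q=1$. The entropy is $-r p\log p-(d-r)q\log q$. Differentiating in $\rho$ uses $da/d\rho=-(d-r)/r$ and $db/d\rho=1$; the terms from differentiating $p$ and $q$ inside the logarithms sum to zero because total probability is conserved. This gives
\[
\frac{d}{d\rho}\Big(-\textstyle\sum p_i\log p_i\Big)=\frac{d-r}{d}\,\log\frac{a}{\rho}.
\]
On $0<\rho<1$ we have $a=[d-(d-r)\rho]/r>1>\rho$, so $\log(a/\rho)>0$. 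Hence $\cnRank=\exp(\text{entropy})$ is strictly increasing, $d\cnRank/d\rho>0$, and the sign of the ratio equals the sign of $dR_\mu/d\rho$. It may be worth stating explicitly that $a>b$ throughout the open interval. This also justifies treating $R_\mu$ as a function of $\cnRank$ via the inverse function.

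For the second fact, start from Equation~\eqref{eq:ridge-risk} and group by blocks, writing $\Theta_1=\sum_{i\le r}\theta_i^2=\eta\|\theta\|^2$ and $\Theta_2=(1-\eta)\|\theta\|^2$:
\[
R_\mu=\mu^2\Big[\Theta_1(a+\mu)^{-2}+\Theta_2(b+\mu)^{-2}\Big].
\]
Differentiating,
\[
\frac{dR_\mu}{d\rho}=2\mu^2\Big[\Theta_1\frac{d-r}{r}(a+\mu)^{-3}-\Theta_2(b+\mu)^{-3}\Big].
\]
After dividing by the positive factor $2\mu^2\|\theta\|^2(d-r)/r$, this is positive exactly when
\[
\eta (b+\mu)^3\,(d-r) > (1-\eta)\,r\,(a+\mu)^3 .
\]
Rearranging gives $\eta\big[r(a+\mu)^3+(d-r)(b+\mu)^3\big]>r(a+\mu)^3$, that is $\eta>\eta_*$. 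Equality and reversed inequality follow in the same way, which yields $\cnsign(dR_\mu/d\cnRank)=\cnsign(\eta-\eta_*)$.

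The only delicate step is the effective-rank monotonicity: the cancellation in the entropy derivative and the check that $a>\rho$ on the whole interval. The remaining steps are routine algebra. We also note that $\eta_*$ depends on $\rho$ through $a$ and $b$, so the statement is pointwise along the path.
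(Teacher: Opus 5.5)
Your proposal is correct and follows the paper's proof essentially step for step. The paper likewise shows $\frac{d}{d\rho}\log\cnRank=\frac{d-r}{d}\log(a/b)>0$, differentiates the two-block risk, clears the positive factor $r(a+\mu)^3(b+\mu)^3$ to isolate $\eta-\eta_*$, and passes from $\rho$ to $\cnRank$ by the chain rule; it also notes, as you do, that $\eta_*$ varies with $\rho$, so the result is a local, pointwise statement along the path.
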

Higher effective rank lowers risk when $\eta<\eta_*$ and raises it when $\eta>\eta_*$. Equality gives zero local derivative. The threshold depends on $\rho$, so this is a local comparison, not a universal ordering of representations (proof: Appendix~\ref{app:threshold}).

\paragraph{A two-dimensional example.}
For $d=2$, $r=\mu=1$, $\sigma^2=0$, increasing $\rho$ from 0.2 to 0.8 changes the spectrum from $(1.8,0.2)$ to $(1.2,0.8)$ and effective rank from 1.3841 to 1.9601. Risk rises from 0.1276 to 0.2066 for $\theta=(1,0)^\top$, but falls from 0.6944 to 0.3086 for $\theta=(0,1)^\top$. No information is lost; variance moves away from the first target direction toward the second (Figure~\ref{fig:ridge}).

\paragraph{Scope and implication.}
Penalty retuning is excluded; excess risk vanishes as $\mu\to0$ for fixed positive-definite $\Sigma$. Yet $I_d$ uniquely minimizes worst-target risk at fixed target norm and trace $d$ (Appendix~\ref{app:minimax}). LeJEPA-type guarantees~\citep{balestriero2025lejepa} therefore coexist with target-specific differences: rank changes need not reflect information changes.

\section{CANOPE Framework}
\label{sec:framework}
CANOPE (Canonicalization-Oriented Predictive Embedding) makes these distinctions testable by mapping corrupted text to ordered latent canvases. Its common architecture separates the choice of predictive target, regularization location, and readout.

\subsection{Architecture and Canvas Length}
\label{sec:architecture}
For input $x$ containing $N$ UTF-8 bytes, a Transformer encoder produces $H\in\cnR^{N\times d}$~\citep{vaswani2017attention}. A resampler $\mathcal R_{N\to M}$ forms $M$ query vectors from $H$; sinusoidal positions $P_M$ specify their output order. A noncausal decoder attends to both these queries and the encoder states.
\begin{equation}
  H=E_{\vartheta}(x),\qquad p_t=\cnsoftmax(Wz_t+b),\qquad Z_M=D_{\vartheta}\!\left(\operatorname{LN}(\mathcal R_{N\to M}(H)+P_M),H\right).
  \label{eq:architecture}
\end{equation}
Here $\vartheta$ denotes network parameters, distinct from ridge target $\theta$, and $\operatorname{LN}$ is layer normalization. The head predicts 256 bytes and PAD. All positions are computed in parallel without target-token prefixes~\citep{gu2018nat}. Runs use width 512, four encoder blocks, two decoder blocks, and eight heads (Appendix~\ref{app:architecture}).

Oracle routing sets $M=L$; predicted routing uses $M=\max(1,\lceil\widehat L\rceil)$ from a learned head; input routing sets $M=N$ as a stress test. The resampler Gaussian-averages valid source states, with explicit identity at $M=N$ (Appendix~\ref{app:architecture}). CL removes final decoder normalization; PL retains it, so raw geometry comparisons stay within each series.

\subsection{Objectives and the CL/PL Comparisons}
\label{sec:objectives}
\paragraph{Controlled-Length (CL): Isolate the predictive objective and regularization location.}
CL uses oracle lengths. Grounding (G) averages canonical-byte negative log-probabilities; latent agreement (L) minimizes squared distance to a differentiable mean of designated views. Both add $0.01\mathcal R_{\mathrm{VIS}}$, with no application auxiliary loss (Appendix~\ref{app:mechanism}).

Pooled (P) regularizes mean canvas rows; sampled token (T) regularizes one uniformly sampled valid position per item, shared across views. CL-GP/GT/LP/LT form a two-by-two design with equal vector counts per batch; T constrains a position mixture, not the joint canvas.
The four configurations are summarized in Table~\ref{tab:e-config}.

Our VISReg adaptation penalizes coordinate means and nonunit standard deviations, then matches sorted standardized projections to Gaussian quantiles. Its center, scale, and shape constraints do not certify exact Gaussianity. Equation~\eqref{eq:visreg} specifies normalization, stop-gradients, and 256 projection directions.

\paragraph{Parametrized-Length (PL): Compare parametrized-length recipes.}
The PL setting adds objectives for length, surface invariance, and premise--negation distinctions. Its token-grounded recipe is
\begin{equation}
  \mathcal L_{\mathrm{app}}={} \mathcal L_{\mathrm{CE}}+\eta_{\mathrm e}\mathcal L_{\mathrm{empty}}+\gamma\mathcal L_{\mathrm{len}}+\alpha\mathcal L_{\mathrm{inv}}+\beta\mathcal L_{\mathrm{neg}}+\zeta\mathcal R_{\mathrm{VIS}}.
  \label{eq:application}
\end{equation}
CE supervises canonical bytes and a short PAD band; empty-tail loss discourages confident later predictions; length loss predicts $L$; pooled invariance aligns views; negation loss ranks same-content anchors above negated and unrelated anchors. We use $(\eta_{\mathrm e},\gamma,\alpha,\beta)=(0.1,1,0.5,0.3)$ and pooled VISReg weights $\zeta=0.1/0.5$ for PL2/PL3 (Appendix~\ref{app:objective}).

PL0 uses detached-anchor latent agreement, geometric weight 0.3, and detached token-head inputs: token loss trains the head but not the backbone. PL0--PL2 thus compares complete recipes (Appendix~\ref{app:application}); Table~\ref{tab:run-names} maps all variants to checkpoints.

\subsection{Readouts: Identity Retrieval and Token Reconstruction}
\label{sec:retrieval-readouts}
Retrieval matches a corrupted query to its canonical target in a fixed gallery. We compare either mean-canvas cosines or the average cosine at 16 linearly interpolated relative positions. The latter retains coarse order across different lengths without target-token alignment; it does not certify word-order understanding. Appendix~\ref{app:readout-implementation} gives the exact score.

Native reconstruction uses per-position argmax and stops at PAD. These diagnostics fit no new probes and exclude untrained CL-LP/LT token heads. Section~\ref{sec:tts} separately trains a speech readout. Each task measures access through a specified readout, not $\cnGap$ itself.

\section{Experiments and Results}
\label{sec:results}
We test the contrast in Figure~\ref{fig:learning-overview} on 40,000 validation targets per series, with one training seed and 500 paired bootstrap resamples. Pretraining uses English--Singlish G2P text~\citep{english_singlish_g2p}, without phonemes; Appendix~\ref{app:evaluation} details data and evaluation.

\begin{figure}[!htb]
    \centering
    \includegraphics[width=\linewidth]{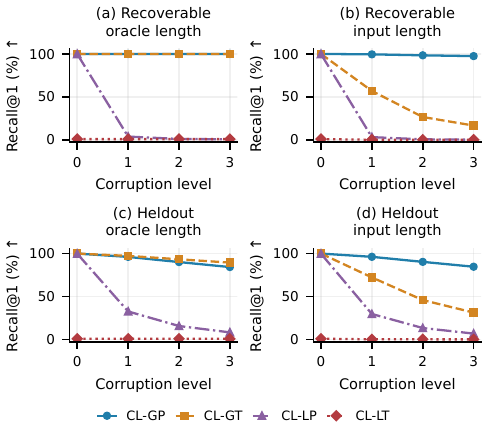}
    \caption{CL relative-position retrieval with oracle or input lengths (40,000 targets; level 0: clean). CL-GT is strongly length-sensitive (Table~\ref{tab:e-main}).}
    \label{fig:cl-length}
\end{figure}

\subsection{Controlled Content Recovery}
\label{sec:cl-results}
Under recoverable severity 3 and oracle lengths, CL-GP/GT reach 99.985\%/99.980\% retrieval and 98.97\%/98.78\% exact reconstruction, while CL-LP/LT reach only 0.52\%/0.75\% retrieval (Table~\ref{tab:e-decode}). Grounding enables recovery at either location.

Input-length routing instead leaves CL-GP/GT at 97.48\%/16.53\% retrieval (Figure~\ref{fig:cl-length}); CL-GP's 91.84\% CER separates identity matching from reconstruction. Centering CL-LT raises clean retrieval from 0.76\% to 100\%, but corrupted retrieval only from 0.75\% to 1.58\%, revealing clean distinctions without restoring cross-view access (Appendix~\ref{app:geometry-results}).

\begin{figure}[!htb]
    \centering
    \includegraphics[width=\linewidth]{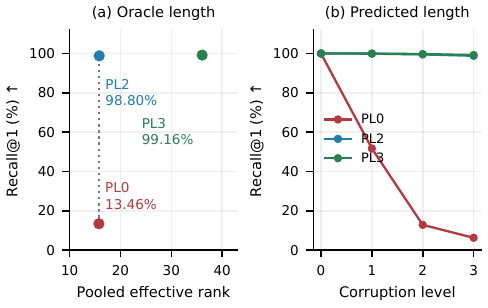}
    \caption{PL identity access despite similar pooled ranks: oracle (a) and predicted (b) lengths, 40,000 targets per query. Checkpoints and reconstruction: Table~\ref{tab:a-full}.}
    \label{fig:pl-evidence}
\end{figure}

\subsection{Natural Corruption}
\label{sec:pl-results}
With natural severity 3 and oracle lengths, PL0/PL2 have nearly equal pooled ranks (15.76/15.82), yet retrieval differs sharply (13.46\%/98.80\%; Figure~\ref{fig:pl-evidence}). Rank misses this contrast.

\begin{table}[!htb]
\centering
\normalsize\setlength{\tabcolsep}{3pt}\renewcommand{\arraystretch}{1.12}
\caption{All evaluated PL setting checkpoints at natural severity 3. Rank and reconstruction use oracle lengths; retrieval uses predicted lengths. R1, CER, and exact match (EM) are percentages.}
\label{tab:a-full}
\begin{tabular*}{\linewidth}{@{\extracolsep{\fill}}lrrrrrr@{}}
\toprule
\thead{Model} & \multicolumn{2}{c}{\thead{Eff. rank}} & \multicolumn{2}{c}{\thead{R1 $\uparrow$}} & \thead{CER $\downarrow$} & \thead{EM $\uparrow$} \\
 & \thead{Pool} & \thead{Token} & \thead{Pool} & \thead{Rel.} & & \\
\midrule
PL0 & 15.76 & 15.76 & 6.15 & 6.31 & 87.50 & 0.00 \\
PL1 & 4.79 & 9.72 & 86.76 & 97.34 & 21.38 & 12.81 \\
PL2 & 15.82 & 33.04 & 94.45 & 98.80 & 23.09 & 12.64 \\
PL3 & 36.06 & 65.11 & 97.57 & 99.19 & 22.51 & 12.53 \\
PL4 & 83.59 & 126.10 & 91.31 & 99.30 & 21.63 & 12.80 \\
PL5$^\dagger$ & 10.19 & 36.46 & 68.68 & 78.41 & 28.35 & 6.68 \\
PL6 & 7.28 & 16.89 & 82.72 & 97.24 & 21.95 & 12.68 \\
PL7 & 6.16 & 19.13 & 74.44 & 96.63 & 22.46 & 12.64 \\
\bottomrule
\end{tabular*}
\par\smallskip\begin{minipage}{\linewidth}\footnotesize PL0: latent agreement; PL1: token baseline; PL2/3: weak/stronger VISReg; PL4/5/6: no invariance/negation/empty loss; PL7: zero second margin. PL4--7 modify the historical unregularized recipe, not PL2. $^\dagger$PL5 has a different training budget and is excluded from controlled component comparisons. Checkpoint IDs: Table~\ref{tab:run-names}.\end{minipage}
\end{table}

Predicted lengths and centering leave this contrast intact. PL2's 23.09\% CER and 12.64\% exact match nevertheless limit what retrieval establishes, while PL7's positional gain shows that readout choice also matters (Table~\ref{tab:a-full}).

\subsection{External Identity and Semantic Transfer}
\label{sec:semantic-results}
LexNorm2015~\citep{baldwin2015lexnorm} tests natural noncanonical text against frozen references (Table~\ref{tab:external}).

\begin{table}[t]
\centering
\normalsize\setlength{\tabcolsep}{3pt}\renewcommand{\arraystretch}{1.12}
\caption{LexNorm identity retrieval (1,941 queries, 1,954 targets). R1: original Recall@1; WS3: strongest whitespace perturbation. CANOPE uses predicted lengths and relative positions; external readouts are in Appendix~\ref{app:evaluation}.}
\label{tab:external}
\begin{tabular*}{\linewidth}{@{\extracolsep{\fill}}lrrr@{}}
\toprule
\thead{Model} & \thead{R1 (\%) $\uparrow$} & \thead{MRR $\uparrow$} & \thead{WS3 (\%) $\uparrow$} \\
\midrule
PL0 & 23.34 & 0.281 & 1.96 \\
PL2 & 88.82 & 0.908 & 83.82 \\
PL3 & 97.37 & 0.979 & 81.20 \\
ByT5 encoder & 87.48 & 0.894 & 22.05 \\
SimCSE & 98.71 & 0.991 & 98.71 \\
NegMPNet & 99.12 & 0.994 & 99.12 \\
Qwen3-0.6B & 99.69 & 0.998 & 99.64 \\
\bottomrule
\end{tabular*}
\end{table}

PL3 leads CANOPE on original queries, but PL2 leads under whitespace perturbation. Relative positions improve both (Figure~\ref{fig:position-access}), yet weak external semantics limit generalization (Table~\ref{tab:semantic}). Strong synthetic-foil results (Table~\ref{tab:controls}) indicate sensitivity to specified changes, rather than general semantic equivalence.

\begin{table}[!htbp]
\centering
\sbox{\semanticcaptionbox}{\begin{minipage}[t]{.49\linewidth}
\caption{External semantic results on complete common cohorts. SAN (SemAntoNeg): three-choice accuracy (\%); PAWS: ROC AUC; STS-B and packaged SICK-R: Spearman correlation. CANOPE uses predicted lengths and relative positions.}
\label{tab:semantic}
\end{minipage}}
\sbox{\controlscaptionbox}{\begin{minipage}[t]{.49\linewidth}
\caption{Synthetic semantic-foil decisions (\%): 512 base groups per type, two directions per group. Neg.: negation; Num.: number; Role: role swap. Chance is 50\%. All encoders have zero failures.}
\label{tab:controls}
\end{minipage}}
\setlength{\pairedcaptionheight}{\dimexpr\ht\semanticcaptionbox+\dp\semanticcaptionbox\relax}
\ifdim\dimexpr\ht\controlscaptionbox+\dp\controlscaptionbox\relax>\pairedcaptionheight
  \setlength{\pairedcaptionheight}{\dimexpr\ht\controlscaptionbox+\dp\controlscaptionbox\relax}
\fi
\begin{minipage}[t]{.49\linewidth}
\vspace{0pt}
\vbox to \pairedcaptionheight{\usebox{\semanticcaptionbox}\vfil}
\small\setlength{\tabcolsep}{1pt}\renewcommand{\arraystretch}{1.12}
\begin{tabular*}{\linewidth}{@{\extracolsep{\fill}}lrrrr@{}}
\toprule
\thead{Model} & \thead{SAN $\uparrow$} & \thead{PAWS $\uparrow$} & \thead{STS-B $\uparrow$} & \thead{SICK-R $\uparrow$} \\
\midrule
PL0 & 26.24 & 0.488 & 0.003 & 0.143 \\
PL2 & 2.28 & 0.585 & 0.154 & 0.333 \\
PL3 & 9.11 & 0.587 & 0.177 & 0.375 \\
ByT5 encoder & 8.12 & 0.584 & 0.248 & 0.283 \\
SimCSE & 63.80 & 0.647 & 0.842 & 0.808 \\
NegMPNet & 69.57 & 0.623 & 0.842 & 0.767 \\
Qwen3-0.6B & 45.11 & 0.684 & 0.846 & 0.812 \\
\bottomrule
\end{tabular*}
\end{minipage}\hfill
\begin{minipage}[t]{.49\linewidth}
\vspace{0pt}
\vbox to \pairedcaptionheight{\usebox{\controlscaptionbox}\vfil}
\small\setlength{\tabcolsep}{1pt}\renewcommand{\arraystretch}{1.12}
\begin{tabular*}{\linewidth}{@{\extracolsep{\fill}}lrrrr@{}}
\toprule
\thead{Model} & \thead{Neg. $\uparrow$} & \thead{Num. $\uparrow$} & \thead{Entity $\uparrow$} & \thead{Role $\uparrow$} \\
\midrule
PL0 & 42.48 & 50.39 & 79.10 & 81.93 \\
PL2 & 100.00 & 90.62 & 100.00 & 100.00 \\
PL3 & 96.19 & 97.75 & 100.00 & 100.00 \\
ByT5 encoder & 85.94 & 53.42 & 67.09 & 57.71 \\
SimCSE & 100.00 & 100.00 & 100.00 & 100.00 \\
NegMPNet & 100.00 & 100.00 & 100.00 & 100.00 \\
Qwen3-0.6B & 98.63 & 97.85 & 100.00 & 57.71 \\
\bottomrule
\end{tabular*}
\end{minipage}
\par\smallskip\begin{minipage}{\linewidth}\footnotesize Cohort sizes: 3,152 groups; 8,000 pairs; 1,379 pairs; and 9,927 pairs, respectively. The SICK-derived split is the packaged mteb/sickr-sts test split, not a claim about the original SICK test-set size.\end{minipage}
\end{table}

\subsection{Speech from Frozen Canvases}
\label{sec:tts}
We train MatchaTTS~\citep{mehta2024matcha} on frozen PL0/PL2 predicted-length canvases, without decoded text, alongside an end-to-end phoneme baseline. All use identical LJSpeech pairs, 70\% corrupted inputs, and pretrained acoustic modules (Appendix~\ref{app:tts}).

\begin{table}[!htb]
\centering\normalsize\setlength{\tabcolsep}{3pt}\renewcommand{\arraystretch}{1.12}
\caption{Downstream speech WER (\% $\downarrow$); 3,930 utterances and 66,489 reference words per cell. $\Delta$: augmented minus clean WER (percentage points), reported descriptively.}
\label{tab:tts-main}
\begin{tabular*}{\linewidth}{@{\extracolsep{\fill}}lrrr@{}}
\toprule
\thead{System} & \thead{Clean $\downarrow$} & \thead{Augmented $\downarrow$} & \thead{$\Delta$} \\
\midrule
MatchaTTS & \textbf{2.21} & \textbf{10.93} & +8.71 \\
Frozen PL2 + Matcha & 13.60 & 21.54 & +7.94 \\
Frozen PL0 + Matcha & 99.12 & 99.22 & +0.09 \\
\bottomrule
\end{tabular*}
\end{table}

Whisper large-v3~\citep{radford2023whisper,openai2023whisperv3} gives corrupted WER of 21.54\% for PL2 versus 99.22\% for PL0, extending the contrast to speech (Table~\ref{tab:tts-main}). The end-to-end baseline remains stronger (10.93\%), despite PL2's smaller degradation.

\section{Discussion}
\label{sec:discussion}
The central issue is what an agreement target requires the encoder to preserve. Both CL objectives see canonical text, but the latent target is learned along with the encoder. If two sequences become indistinguishable, agreement with their encoded anchors need not separate them again. Proposition~\ref{prop:anchor} makes this precise at exact agreement. A nonconstant anchored code retains some information, yet can still merge targets. Canonical-token grounding instead penalizes distinctions lost at the supervised positions. The relevant difference is a fixed content target versus a jointly learned latent target, not cross-entropy versus squared loss itself.

CL connects this distinction to Lemma~\ref{lem:decomposition}. Invertible corruption and oracle lengths make input ambiguity zero, leaving representation loss and readout mismatch as the possible sources of failure. Near-perfect grounded reconstruction and low empirical token loss are consistent with Proposition~\ref{prop:readout}, which bounds the gap through population token loss while leaving latent null-space directions unconstrained. Failed latent retrieval cannot separate the remaining terms, but shows that agreement training left available content difficult to access through these readouts.

The centering control further explains why small disagreement is an inadequate diagnostic. CL-LT has a large common mean and tiny residual variance (Appendix~\ref{app:geometry-results}); small absolute discrepancies can then coexist with unstable content distinctions. Removing the mean exposes clean identity but barely helps corrupted views. This behavior is consistent with the scale mechanism in Eq.~\eqref{eq:agreement-scale}: shrinking residuals reduces raw disagreement without improving their separation relative to corruption. Theorem~\ref{thm:counterexample} establishes a stronger logical limit under ideal geometry, while these trained states expose a practical failure before that ideal is attained.

This also clarifies the relationship to LeJEPA. Its optimality results concern downstream risk under specified learner, target, and distributional assumptions~\citep{balestriero2025lejepa}; a learned encoder must additionally preserve the target-conditioned information to which a risk guarantee applies. Equation~\eqref{eq:squared-bridge} separates the resulting Bayes-risk increase from readout error. Our ridge analysis holds the former at zero by invertibility, and Theorem~\ref{thm:threshold} shows how target alignment still changes fixed-penalty access. Isotropy remains minimax in that model (Appendix~\ref{app:minimax}). Thus favorable worst-target geometry and poor access to a particular target are compatible. PL's matched pooled ranks support the need for target-specific evaluation; they neither certify joint Gaussianity nor instantiate the ridge spectrum path.

CL identifies a controlled access gap, PL extends it to natural corruption, and speech shows its downstream consequence. Retrieval--reconstruction differences and weak semantics delimit what canonical matching establishes. Causal attribution remains strongest in CL: PL compares complete recipes with different auxiliary losses and weights. Bootstrap intervals omit seed variation; speech uses validation data with pretraining exposure, and frontend, freezing, and alignment differences remain. ASR WER measures lexical fidelity, not naturalness. These factors motivate broader readout and pretraining-disjoint evaluations.

\section{Conclusion}
Agreement and geometric regularity alone do not guarantee recoverable sequential content. CANOPE’s controlled experiments reveal recovery failures despite recoverable inputs and known target lengths. Canonical-token grounding supports accurate recovery, with benefits extending to downstream speech. These findings motivate evaluating ordered-content recovery alongside geometric diagnostics for sequential generation.
\label{main-text-end}

\clearpage
\bibliography{references}
\bibliographystyle{iclr2027_conference}
\clearpage
\appendix
\section{Proofs and Scope of the Theoretical Results}
\label{app:proofs}
All information quantities concern the discrete target $C$, including when $Z$ is continuous. The analysis requires no differential-entropy identity. Appendix~\ref{app:decomposition} proves the content-gap decomposition; Appendix~\ref{app:squared-bridge} connects it to squared prediction risk; Appendices~\ref{app:counterexample}--\ref{app:anchor} give the Gaussian construction and its channel restrictions; Appendix~\ref{app:readout} treats token-head observability and local access; and Appendices~\ref{app:ridge}--\ref{app:minimax} derive the ridge comparison. The main text states the definitions and assumptions used to interpret each result.

\subsection{Information Decomposition and Readout Mismatch}
\label{app:decomposition}
\begin{proof}[Proof of Lemma~\ref{lem:decomposition}]
  The conditional Markov property gives $H(C\mid X,Z,S)=H(C\mid X,S)$. Hence
  \begin{equation}
  I(C;X\mid Z,S) =H(C\mid Z,S)-H(C\mid X,S) =I(C;X\mid S)-I(C;Z\mid S).
  \label{eq:app-gap}
\end{equation}
  Finite $H(C\mid S)$ makes these differences well defined. Adding and subtracting $\log p(C\mid Z,S)$ yields
  \begin{align}
    \cnE[-\log q(C\mid Z,S)]
     & =\cnE[-\log p(C\mid Z,S)]+\cnE\log\frac{p(C\mid Z,S)}{q(C\mid Z,S)}\nonumber \\
     & =H(C\mid Z,S)+\cnE_{Z,S}\cnKL(p(\cdot\mid Z,S)\Vert q(\cdot\mid Z,S)).
  \end{align}
  Substituting~\eqref{eq:app-gap} proves~\eqref{eq:decomposition}. A readout assigning zero probability to a positive-probability target event has infinite loss and is excluded by the assumption. Nonnegativity of the three terms gives the stated bounds. Moreover,
  \begin{equation}
    I(C;X\mid Z,S)=\cnE_{X,Z,S}\cnKL(p(\cdot\mid X,S)\Vert p(\cdot\mid Z,S)),
  \end{equation}
  so $\cnGap=0$ is equivalent to preserving the input-conditioned target posterior almost surely.
\end{proof}

For a readout family $\mathcal Q$ containing a finite-loss predictor, define $L(q)=\cnE[-\log q(C\mid Z,S)]$, $R_{\mathcal Q}=\inf_{q\in\mathcal Q}L(q)$, and $\Delta_{\mathcal Q}=\inf_{q\in\mathcal Q}\cnE\cnKL(p\Vert q)$. Every fitted $\widehat q\in\mathcal Q$ satisfies
\begin{equation}
  L(\widehat q)=H(C\mid X,S)+\cnGap+\Delta_{\mathcal Q}+[L(\widehat q)-R_{\mathcal Q}].
\end{equation}
The final terms separate approximation limits from excess population loss of the fitted predictor; neither infimum needs to be attained. An empirical estimate additionally has sampling uncertainty. This distinction motivates readout-family comparisons without interpreting a failed probe as a proof of information loss.

\subsection{From Target Preservation to Squared Prediction Risk}
\label{app:squared-bridge}
The information decomposition has a standard squared-loss counterpart that makes its connection to downstream-risk guarantees explicit. Let $T$ be a square-integrable vector target satisfying $T\perp Z\mid(X,S)$, and define $m_X=\cnE[T\mid X,S]$ and $m_Z=\cnE[T\mid Z,S]$. For any square-integrable readout $g(Z,S)$,
\begin{equation}
  \cnE\|T-g(Z,S)\|^2 ={} \underbrace{\cnE\|T-m_X\|^2}_{\text{input Bayes risk}} +\underbrace{\cnE\|m_X-m_Z\|^2}_{\substack{\text{increase in Bayes risk}\\\text{after encoding}}}+\underbrace{\cnE\|m_Z-g(Z,S)\|^2}_{\substack{\text{readout error above the}\\\text{representation Bayes risk}}}.
  \label{eq:squared-bridge}
\end{equation}
\begin{proof}
  Write $T-g=(T-m_X)+(m_X-m_Z)+(m_Z-g)$. Conditional independence gives $\cnE[T\mid X,Z,S]=m_X$, so $T-m_X$ is orthogonal in $L^2$ to both remaining terms. The tower property gives $\cnE[m_X\mid Z,S]=m_Z$, making the last two terms orthogonal as well. Expanding the squared norm proves the identity.
\end{proof}
For finite-valued $C$, choose $T$ to be its one-hot encoding. Then $m_X$ and $m_Z$ are the input- and representation-conditioned target probability vectors. The middle term vanishes exactly when these posteriors agree almost surely, equivalently $\cnGap=0$ under Assumption~\ref{ass:information}. Its value is not $\cnGap$: it measures squared posterior discrepancy rather than expected KL divergence. For a general vector target, it instead certifies preservation of the conditional mean relevant to squared loss.

This identity separates two ways in which a representation can affect downstream prediction. Geometry can change a learner's error above the representation Bayes risk, while a learned encoder can also change that Bayes risk by discarding target information. The ridge family in Section~\ref{sec:ridge} makes the latter increase zero through invertibility, isolating the former effect. The counterexample below makes target information vanish despite exact agreement and Gaussianity. Thus the two analyses address complementary terms rather than competing claims about the same risk.

\subsection{Invariant Gaussian Counterexample}
\label{app:counterexample}
\begin{proof}[Proof of Theorem~\ref{thm:counterexample}]
  Choose $U\sim\cnN(0,I_D)$ independently of $C$, and view-specific $N_v$ independent of $(C,U)$. Set $X^{(v)}=(C,U,N_v)$ and $F(c,u,n)=\operatorname{reshape}_{\ell\times d}(u)$. The first input coordinate determines $C$, whereas the encoder retains only the shared second coordinate. Therefore $H(C\mid X^{(v)})=0$, $Z^{(1)}=Z^{(2)}$ almost surely, $\cnvec Z^{(v)}\sim\cnN(0,I_D)$, and $I(C;Z^{(v)})=0$. Disagreement is nonnegative and equals zero, proving global optimality for that objective.
\end{proof}

Independence implies $p(C\mid Z)=p(C)$. Thus every readout, including randomized ones with randomness independent of $C$, has sequence error at least $1-\max_c p(c)$; the optimal log-loss is $H(C)$. For two equiprobable orders $(a,b)$ and $(b,a)$, these are $1/2$ and $\log2$. Each row $z_t$ is standard Gaussian, as is $\ell^{-1/2}\sum_tz_t$. Ordinary mean pooling instead has covariance $I_d/\ell$; this scaling is distinct from normalizing vector length.

For a finite-variance vector target $T$ and $m(Z)=\cnE[T\mid Z]$, expansion and conditional centering give
\begin{equation}
  \cnE\|T-h(Z)\|^2=\cnE\|T-m(Z)\|^2+\cnE\|m(Z)-h(Z)\|^2.
\end{equation}
The cross term vanishes because $\cnE[T-m(Z)\mid Z]=0$. If $T$ is the one-hot encoding of $C\perp Z$, the Bayes risk is $1-\sum_cp(c)^2>0$. A perfectly fitted conditional mean therefore need not imply preservation of the original target. Likewise, the Gaussian construction is a population existence result, not a claim that a finite-batch regularizer returns zero or that optimization finds this encoder.

\subsection{Finite-String Construction}
\label{app:finite}
A deterministic encoder of countably many strings has countable image, which a nondegenerate Gaussian assigns probability zero. Exact equality of these laws is impossible. We instead construct, for every $\epsilon>0$, a finite code $V$ with zero target information, exact identity covariance, and $W_2(\mathcal L(V),\cnN(0,I_D))<\epsilon$.

Let $G\sim\cnN(0,I_D)$ independently of $C$. For an integer $m\geq1$, clip coordinates to $[-m,m]$ and round to a grid of spacing $1/m$, symmetrically at ties. The resulting $Q_m(G)$ has finite support, zero mean, and covariance $s_m^2I_D$ with $s_m>0$, because identical odd coordinate maps preserve symmetry and independence and each coordinate remains nonconstant. The $L^2$ triangle inequality gives
\begin{equation}
  \|Q_m(G)-G\|_{L^2}\leq\frac{\sqrt D}{2m}
  +\|\cnclip(G,-m,m)-G\|_{L^2}\longrightarrow0.
\end{equation}
The clipping term converges by dominated convergence using $\cnE\|G\|^2<\infty$. Consequently $s_m\to1$. Set $V_m=Q_m(G)/s_m$. Then $\cnE V_m=0$, $\cnCov(V_m)=I_D$, and
\begin{equation}
  \|V_m-G\|_{L^2}\leq|s_m^{-1}-1|\|Q_m(G)\|_{L^2}+\|Q_m(G)-G\|_{L^2}\longrightarrow0.
\end{equation}
This coupling bounds $W_2(\mathcal L(V_m),\cnN(0,I_D))$ by $\|V_m-G\|_{L^2}$. Choose $m$ large enough for the prescribed $\epsilon$.

Let $J_m$ index the finite support of $V_m$. Encode $(C,J_m,N_v)$ injectively as a finite string, with finite-valued nuisances $N_v$. Such an encoding exists because the tuple space is finite. The input decoder recovers $C$; the representation decoder returns the vector indexed by $J_m$. Sharing $J_m$ across views gives exact agreement, and $J_m\perp C$ gives zero target information. This establishes the finite-string counterpart of Theorem~\ref{thm:counterexample} on a synthetic channel.

Approximation is in Wasserstein-2, not total variation or KL: the discrete law remains singular to the Gaussian. For any unit $u$ and frequency $\omega$, however,
\begin{equation}
  \left|\cnE e^{\mathrm i\omega u^\top V_m}-e^{-\omega^2/2}\right|
  \leq |\omega|\cnE\|V_m-G\|\leq|\omega|\|V_m-G\|_{L^2}.
\end{equation}
The first inequality follows from $|e^{\mathrm i a}-e^{\mathrm i b}|\leq|a-b|$. Squaring and integrating on bounded frequencies shows convergence of the population characteristic-function diagnostic in Appendix~\ref{app:mechanism}. Finite-sample fluctuations remain present even under an exact Gaussian null.

\subsection{Canonical Anchors and Conditionally Independent Views}
\label{app:anchor}
\begin{proof}[Proof of Proposition~\ref{prop:anchor}]
  For $X^{(0)}=a(C)$ and a deterministic shared encoder, exact agreement forces $Z^{(v)}=F(a(C))=:g(C)$. Since $C$ is finite-valued, $H(Z\mid C)=0$ and $I(C;Z)=H(Z)$. Zero mutual information implies zero entropy and an almost-sure constant. Nonconstant $g$ must retain some information, but need not be injective: if $C$ contains two independent bits and $g$ retains only one, one bit remains unrecoverable.
\end{proof}

More generally, let $C=(A,B)$ encode two independent, nonconstant finite factors as an equal-length canonical sequence. An agreeing encoder $Z=h(A)$ with injective $h$ satisfies
\begin{equation}
  I(C;Z)=H(A)>0,\qquad H(C\mid Z)=H(B)>0.
  \label{eq:anchor-partial}
\end{equation}
Every view can contain enough information to recover $(A,B)$, while its encoder retains only $A$. Agreement with a clean anchor then holds exactly because the anchor is encoded in the same way. The example establishes partial retention under anchoring; it imposes no Gaussian marginal. Its relevance to CL is that access to clean text during training does not itself force the latent target to distinguish every canonical sequence.

If views are conditionally independent and identically distributed given $C$, and their encoded outputs have finite second moments, then
\begin{equation}
  \cnE[\|Z^{(1)}-Z^{(2)}\|_F^2\mid C] =2\cnE[\|\cnvec Z\|^2\mid C]-2\|\cnE[\cnvec Z\mid C]\|^2 =2\cntr\cnCov(\cnvec Z\mid C).
\end{equation}
Conditional independence factors the cross moment. Zero disagreement forces the nonnegative conditional variance to vanish, hence $Z=g(C)$ almost surely and the preceding argument applies. Shared nuisance in Theorem~\ref{thm:counterexample} violates this conditional-independence premise.

\subsection{Fixed Token Heads and Local Readouts}
\label{app:readout}
\begin{proof}[Proof of Proposition~\ref{prop:readout}]
  For finite logits $u$, the softmax probabilities satisfy $\log(p_i/p_j)=u_i-u_j$. Two softmax outputs therefore agree exactly when their logits differ by a constant multiple of $\mathbf1$. For logits $W(z+\delta)+b$ and $Wz+b$, this is $P_KW\delta=0$. Since $\cnrank(P_K)=K-1$, $r_W\leq\min(d,K-1)$, and rank--nullity gives $\dim\ker A=d-r_W$.

  The product $q_{\mathrm{prod}}(C\mid Z,S)=\prod_tq_W(C_t\mid z_t)$ is normalized because each factor sums to one. Its population log-loss is $\ell\cnCE$. Lemma~\ref{lem:decomposition} gives
  \begin{equation}
    \ell\cnCE=H(C\mid Z,S)+\cnE\cnKL(p\Vert q_{\mathrm{prod}})
    \geq H(C\mid Z,S)\geq\cnGap\geq0.
  \end{equation}
  This model factorization imposes no independence assumption on the true tokens. For variable $L$ and $S=L$, condition on each length and average, obtaining $H(C\mid Z,L)\leq\cnE\sum_{t=1}^L-\log q_W(C_t\mid z_t)$. Prefix-only supervision bounds prefix uncertainty only.
\end{proof}

The complete probability vector identifies $Az$ through $P_K\log p=Az+P_Kb$; a scalar CE value need not. For one label $c$,
\begin{equation}
  \nabla_z(-\log p_c)=W^\top(p-e_c),\qquad
  \nabla_z^2(-\log p_c)=W^\top(\operatorname{diag}(p)-pp^\top)W.
\end{equation}
For any direction $\delta$, the Hessian quadratic form equals $\operatorname{Var}_{k\sim p}((W\delta)_k)$. Since all probabilities are positive, it vanishes exactly when $W\delta$ is constant across classes, equivalently $A\delta=0$. Thus the Hessian has kernel $\ker A$ and rank $r_W$. These are fixed-head statements; joint training can change the observable subspace.

At fixed length $\ell$, let $Z_{-t}$ denote all canvas rows except $z_t$. Each unrestricted local factor is optimized by its local posterior, so
\begin{align}
  \inf_{\{q_t\}}\cnE[-\log\textstyle\prod_t q_t(C_t\mid z_t,S)]
   & =\sum_tH(C_t\mid z_t,S)\nonumber                                                      \\
   & =\sum_tH(C_t\mid Z,S)+\sum_tI(C_t;Z_{-t}\mid z_t,S)\nonumber                          \\
   & =H(C\mid Z,S)+\cnTC(C\mid Z,S)+\sum_tI(C_t;Z_{-t}\mid z_t,S).\label{eq:local-readout}
\end{align}
Here $\cnTC(C\mid Z,S)=\sum_tH(C_t\mid Z,S)-H(C\mid Z,S)$ is conditional total correlation, the dependence remaining among target tokens after observing the canvas. It is nonnegative because it equals the expected KL divergence between $p(C\mid Z,S)$ and $\prod_tp(C_t\mid Z,S)$. The positional terms measure additional information from other positions beyond $z_t$, including possible synergistic dependencies. With independent fair bits and $Z=(C_2,C_1)$, the full canvas determines $C$, but the local optimum is $2\log2$; a sequence readout can swap the positions and recover $C$ exactly. A shared linear head is more restricted than the unrestricted factors and may incur additional loss.

\subsection{Population Ridge Risk}
\label{app:ridge}
Under Assumption~\ref{ass:ridge}, $\cnE[Z_\Sigma Z_\Sigma^\top]=\Sigma$ and $\cnE[Z_\Sigma Y]=\Sigma^{1/2}\theta$, since $\cnE[G\varepsilon]=\cnE[G\cnE(\varepsilon\mid G)]=0$. Thus
\begin{equation}
  J(w)=\cnE Y^2-2w^\top\Sigma^{1/2}\theta+w^\top(\Sigma+\mu I)w.
\end{equation}
Its positive-definite Hessian $2(\Sigma+\mu I)$ gives the unique solution $w_\Sigma=(\Sigma+\mu I)^{-1}\Sigma^{1/2}\theta$. Functions of $\Sigma$ commute, so
\begin{equation}
  Y-w_\Sigma^\top Z_\Sigma =\varepsilon+\theta^\top[I-\Sigma(\Sigma+\mu I)^{-1}]G =\varepsilon+\mu\theta^\top(\Sigma+\mu I)^{-1}G.
\end{equation}
The noise cross term is zero. Taking the expectation of the squared residual gives $\sigma^2+\mu^2\theta^\top(\Sigma+\mu I)^{-2}\theta$, proving~\eqref{eq:ridge-risk}. Since $G=\Sigma^{-1/2}Z_\Sigma$, all representations yield Bayes risk $\sigma^2$. Excess risk arises from the fixed ridge penalty, not information loss. The comparison fixes $\theta$ in the factor coordinates; fixing coefficients in $Z_\Sigma$ coordinates while changing $\Sigma$ would instead change the target.

\subsection{Effective-Rank Threshold with All Intermediate Steps}
\label{app:threshold}
The basis and the target vector $\theta$ remain fixed while $\rho$ varies. In particular, $q_0=\|\theta\|^2$ and $\eta=\sum_{i\leq r}\theta_i^2/q_0$ are constants in the derivatives below. The two eigenvalues are
\begin{equation}
  a(\rho)=\frac{d-(d-r)\rho}{r},\qquad b(\rho)=\rho,\qquad
  a'=-\frac{d-r}{r},\qquad b'=1.
  \label{eq:app-ab-derivatives}
\end{equation}
There are $r$ copies of $a$ and $d-r$ copies of $b$. Their sum is $d$, so their normalized weights are $a/d$ and $b/d$. Restating the definition in Eq.~\eqref{eq:effective-rank} makes the dependence on $\rho$ explicit,
\begin{equation}
  h(\rho) :=\log\cnRank(\Sigma(\rho)) =-r\frac ad\log\frac ad-(d-r)\frac bd\log\frac bd,\qquad \cnRank(\Sigma(\rho)) =\exp(h(\rho)).
  \label{eq:app-reff-function}
\end{equation}
\paragraph{Constructing a path toward isotropy.}
Fix the eigenbasis; split its directions into blocks of sizes $r$ and $d-r$, $1\leq r<d$. Give the second block variance $b=\rho\in(0,1)$. The trace constraint determines first-block variance $a$:
\begin{equation}
  \underbrace{ra}_{\text{first block}}+\underbrace{(d-r)\rho}_{\text{second block}}=d \Longrightarrow\quad a=\frac{d-(d-r)\rho}{r},\qquad b=\rho.
  \label{eq:two-block-construction}
\end{equation}
The spectrum is $(\underbrace{a,\ldots,a}_{r},\underbrace{b,\ldots,b}_{d-r})$. As $\rho$ increases, $a>1>b>0$ both approach 1; every map remains invertible. This analytical family is not an assumed trained-model spectrum.

Let $q_0=\|\theta\|^2$ and $\eta=\sum_{i\leq r}\theta_i^2/q_0$. Fixed target and basis keep $\eta$ constant, so Eq.~\eqref{eq:ridge-risk} becomes
\begin{equation}
  R_\mu(\rho)=\mu^2q_0\left[\frac{\eta}{(a+\mu)^2}+\frac{1-\eta}{(b+\mu)^2}\right].
  \label{eq:two-block-risk}
\end{equation}
Increasing $\rho$ raises the first contribution and lowers the second. Their balance gives the threshold below.
\begin{proof}[Proof of Theorem~\ref{thm:threshold}]
  \textbf{Step 1. Differentiate the effective-rank function.}
  For any positive differentiable $x(\rho)$, the product and chain rules give
  \begin{equation}
    \frac{d}{d\rho}[x\log x]=x'\log x+x\frac{x'}x=x'(\log x+1).
  \end{equation}
  Apply this identity to $x=a/d$ and $x=b/d$, noting that $d$ is fixed,
  \begin{align}
    h'(\rho)
     & =-\frac{ra'}d\left(\log\frac ad+1\right)
    -\frac{(d-r)b'}d\left(\log\frac bd+1\right)\nonumber \\
     & =\frac{d-r}{d}\left(\log\frac ad+1\right)
    -\frac{d-r}{d}\left(\log\frac bd+1\right)\nonumber   \\
     & =\frac{d-r}{d}\log\frac ab>0.
    \label{eq:app-rank-derivative}
  \end{align}
  The last inequality follows from $d-r>0$ and $a>b>0$ for $0<\rho<1$. Differentiating the exponential in Eq.~\eqref{eq:app-reff-function} then gives
  \begin{equation}
    \frac{d\cnRank}{d\rho}=e^{h(\rho)}h'(\rho)=\cnRank\,h'(\rho)>0.
    \label{eq:app-rank-positive}
  \end{equation}
  Thus increasing $\rho$ strictly increases effective rank on the open interval.

  \textbf{Step 2. Differentiate prediction risk.}
  Grouping Eq.~\eqref{eq:ridge-risk} by the two blocks yields
  \begin{equation}
    R_\mu(\rho)=\mu^2q_0\left[\eta(a+\mu)^{-2}+(1-\eta)(b+\mu)^{-2}\right].
  \end{equation}
  Since $d(x+\mu)^{-2}/d\rho=-2x'(x+\mu)^{-3}$,
  \begin{equation}
  R_\mu'(\rho) =\mu^2q_0\left[-\frac{2\eta a'}{(a+\mu)^3} -\frac{2(1-\eta)b'}{(b+\mu)^3}\right] =2\mu^2q_0\left[\frac{\eta(d-r)}{r(a+\mu)^3} -\frac{1-\eta}{(b+\mu)^3}\right].
  \label{eq:app-risk-derivative}
\end{equation}
  The positive term is the cost of reducing variance in the first block; the negative term is the benefit of increasing variance in the second block.

  \textbf{Step 3. Solve for the sign change.}
  Multiply the bracket by $r(a+\mu)^3(b+\mu)^3>0$. This preserves its sign and gives
  \begin{align}
     & \eta(d-r)(b+\mu)^3-(1-\eta)r(a+\mu)^3\nonumber                      \\
     & \quad=\eta\bigl[(d-r)(b+\mu)^3+r(a+\mu)^3\bigr]-r(a+\mu)^3\nonumber \\
     & \quad=\bigl[(d-r)(b+\mu)^3+r(a+\mu)^3\bigr](\eta-\eta_*),
  \end{align}
  where
  \begin{equation}
    \eta_*=
    \frac{r(a+\mu)^3}{r(a+\mu)^3+(d-r)(b+\mu)^3}.
  \end{equation}
  All prefactors are positive. Therefore $\cnsign(R_\mu')=\cnsign(\eta-\eta_*)$.

  \textbf{Step 4. Change from $\rho$ to effective rank.}
  Because Eq.~\eqref{eq:app-rank-positive} is strictly positive, effective rank is locally invertible as a function of $\rho$, and
  \begin{equation}
    \frac{dR_\mu}{d\cnRank}
    =\frac{dR_\mu/d\rho}{d\cnRank/d\rho}
    =\frac{R_\mu'(\rho)}{\cnRank\,h'(\rho)}.
  \end{equation}
  Its denominator is positive, so the sign is again $\cnsign(\eta-\eta_*)$, proving the theorem.
\end{proof}

The threshold tends to $r/d$ as $\rho\uparrow1$. It can change relative to a fixed $\eta$ along the path, which is why the theorem states a local derivative. At $\rho=1$, $h'=0$ and the ratio above is not asserted. For $\mu\to0$ at a fixed positive-definite covariance, the excess risk tends to zero.

For $d=2$, $r=1$, and $\mu=q_0=1$, moving from $\rho=0.2$ to $0.8$ changes the eigenvalues from $(1.8,0.2)$ to $(1.2,0.8)$ and effective rank from $1.3841$ to $1.9601$. Risk increases from $0.1276$ to $0.2066$ for $Y=G_1$, and decreases from $0.6944$ to $0.3086$ for $Y=G_2$. Figure~\ref{fig:ridge} plots these exact population quantities.

\begin{figure}[!htbp]
  \centering
  \includegraphics[width=\linewidth]{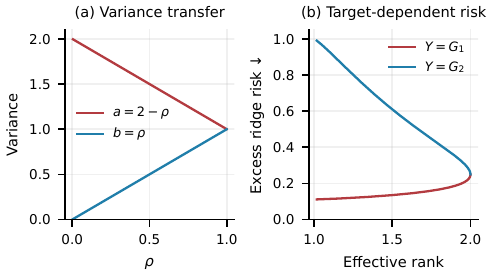}
  \caption{The same increase in effective rank can help or hurt a fixed readout. (a) Increasing $\rho$ transfers variance from the first direction to the second while keeping $a+b=2$. (b) Ridge risk rises when the target uses the first direction and falls when it uses the second. The map remains invertible for every $0<\rho<1$, so both targets remain fully represented. These are exact population curves for $d=2$, $r=1$, $\mu=q_0=1$, and zero observation noise, illustrating Theorem~\ref{thm:threshold}.}
  \label{fig:ridge}
\end{figure}

\subsection{Minimax Compatibility and Scope Relative to LeJEPA}
\label{app:minimax}
For fixed $\Sigma$, $R_\mu=\sum_i b_i\theta_i^2$ with $b_i=\mu^2/(\lambda_i+\mu)^2$. Under $\sum_i\theta_i^2=q_0$, the maximum is $q_0\max_i b_i$, attained by placing all energy in a minimum-eigenvalue direction. This proves the main-text supremum. The trace constraint implies $\lambda_{\min}\leq1$, and the supremum decreases strictly with $\lambda_{\min}$. Equality $\lambda_{\min}=1$ forces every eigenvalue to equal one, hence uniquely $\Sigma=I_d$. This is uniqueness of covariance within the assumed Gaussian family, not uniqueness among arbitrary distributions.

LeJEPA's linear results compare fixed feature spans and energy. Its nonlinear results assume smooth embedding densities and regression functions; the radial-neighbor result additionally uses a mean-zero isotropic target-gradient prior, while the kernel result controls a worst-case integrated-bias bound over a smooth function class~\citep{balestriero2025lejepa}. Atomic laws induced by deterministic string encoders do not automatically meet the density assumptions. Likewise, the fixed ridge target has gradient $\Sigma^{-1/2}\theta$, whose outer product is rank one for $d>1$, rather than a positive isotropic gradient second moment. These examples delimit direct applicability; continuous encoder noise or a different target prior changes the assumptions. Our information decomposition addresses target retention, whereas the ridge analysis preserves information and isolates geometry--readout interaction.

\section{Implementation and Evaluation Details}
\label{app:implementation}

\subsection{Readout Implementation}
\label{app:readout-implementation}
Retrieval matches a corrupted query to its canonical target in a fixed gallery. Pooling compares mean-canvas cosines; relative-position scoring preserves coarse order. For $Z\in\cnR^{M\times d}$, linearly interpolate $\widetilde z_j$ at $j(M-1)/15$ and define
\begin{equation}
  \phi_{16}(Z)=\frac1{4}\operatorname{concat}_{j=0}^{15} \frac{\widetilde z_j}{\max(\|\widetilde z_j\|,10^{-12})},\qquad s_{16}(U,V)=\langle\phi_{16}(U),\phi_{16}(V)\rangle.
  \label{eq:position}
\end{equation}
The factor $1/4=1/\sqrt{16}$ makes the inner product a mean cosine for nonzero rows. This compares lengths without target-token alignment but does not preserve every detail or certify word-order understanding.

Native reconstruction uses per-position argmax and stops at PAD, requiring the whole sequence rather than candidate selection. These diagnostics fit no new probes and exclude untrained CL-LP/LT heads. Section~\ref{sec:tts} separately trains a downstream speech readout. Each task measures specified readout access, not $\cnGap$ itself.

\subsection{Architecture and Length Policies}
\label{app:architecture}
The application architecture uses width 512, eight attention heads, four encoder blocks, two decoder blocks, feed-forward width 2048, and a 2048-position cap. Source masks exclude padding from resampling and attention; output masks exclude batch padding from losses. Valid queries originate from resampled source states, not zero padding. Application states retain final decoder layer normalization; mechanism runs remove it. With zero-based source and query indices, the resampler uses weights $\omega_{tj}\propto\exp(-(j-c_t)^2/0.3)$ normalized over valid source positions, where $c_t=t(N-1)/(M-1)$ for $M>1$ and $c_0=0$ for $M=1$. It returns the input unchanged when $M=N$. Gaussian diagnostics on raw mechanism states therefore do not transfer automatically to application states.

$U\{a,\ldots,b\}$ and $U[a,b]$ below denote discrete and continuous uniform draws. The prediction $\widehat L$ is the learned target length. The length head attention-pools detached encoder states and uses the source log length. Training samples near-oracle $M=L+U\{0,\ldots,8\}$, overestimated $M=\lceil LU[1.15,1.80]\rceil$, or underestimated $M=\lceil LU[0.50,0.95]\rceil$, clipped to the supported range. At progress $0,0.15,0.40,0.65,1$, the corresponding mixture probabilities are $(1,0,0)$, $(1,0,0)$, $(0.5,0.5,0)$, $(0.35,0.45,0.20)$, and $(0.20,0.60,0.20)$, with smoothstep interpolation. Only lengths, not target-token identities, enter decoder routing.

The reported native-head evaluation uses a single canvas, per-position argmax, and stopping at the first PAD; it does not expand and retry. PL setting practical routing uses $M=\max(1,\lceil\widehat L\rceil)$ for both queries and clean candidates. CL setting input routing uses $M=N$, the number of input bytes, as a length-sensitivity control rather than a trained length predictor. Oracle routing uses $M=L$. Capacity violations fail explicitly rather than truncate. The same extracted states feed retrieval and the native head. Oracle token cross-entropy covers every canonical target byte; empirical corpus loss per byte is reported descriptively, not substituted directly into the population sequence-entropy bound.

\subsection{Recoverable Channel and VISReg}
\label{app:mechanism}
The channel escapes literal \texttt{\#} as \texttt{\#\#} and inserts removable \texttt{\#\textasciitilde} markers; held-out patterns use \texttt{\#?} at word boundaries. A left-to-right inverse removes insertion markers and reverses escaping without the corruption seed. Thus case, punctuation, numbers, and canonical bytes remain recoverable. Natural spelling errors form a separate channel because unique recovery can fail. Canonical-content hashes determine splits and group all related views.

The seven configured views comprise two deterministic clean copies, three easy corruptions, and two stronger corruptions. The global-view indices are $\mathcal G=\{0,2\}$, indexed from zero. The available training entry point sets \texttt{global\_views=(0,2)}, and the loss implementation computes \texttt{z[:, global\_views].mean(1)} without detaching that center. This differs from PL0's detached view-0 target. For batch size $B$, view count $V$, target byte $c_{it}$, token probability $p_{ivt}$, state $z_{ivt}$, and target length $L_i$, let $\bar z_{it}=|\mathcal G|^{-1}\sum_{v\in\mathcal G}z_{ivt}$. The implemented predictive terms are
\begin{align}
  \mathcal L_{\mathrm{lat}}    & =\frac1{BV}\sum_{i,v}\frac1{L_id}\sum_{t=1}^{L_i}\|z_{ivt}-\bar z_{it}\|^2, \\
  \mathcal L_{\mathrm{ground}} & =\frac1{BV}\sum_{i,v}\frac1{L_i}\sum_{t=1}^{L_i}-\log p_{ivt}(c_{it}).
\end{align}
The latent center receives gradients. The latent runs have no canonical-token CE contribution to backbone training. The grounded runs use the token head on undetached states. Neither branch includes the application auxiliary losses. Table~\ref{tab:e-config} specifies the training configurations.

Pooled regularization uses $r_{iv}=L_i^{-1}\sum_{t=1}^{L_i}z_{ivt}$. Sampled-token regularization draws $T_i$ uniformly from $\{1,\ldots,L_i\}$ and sets $r_{iv}=z_{ivT_i}$, using the same position across all views of an item. Thus both use $B$ vectors per view. Token sampling targets a sentence-balanced mixture of valid positions, not Gaussianity at every position or of the joint canvas. Position and projection draws use separate seeded generators. VISReg uses 256 unit projections, coordinate standard deviations with divisor $B$, unit center/scale/shape weights, and the detached standard-deviation floor $0.1$ in Eq.~\eqref{eq:visreg}. Finite-batch Gaussian samples need not attain zero penalty.

\paragraph{Geometry diagnostic.}
The evaluation code separately computes a characteristic-function discrepancy. For $n$ vectors $r_i$, $J$ unit directions $u_j$, frequency $\omega$, and imaginary unit $\mathrm i$,
\begin{equation}
  D_{\mathrm{CF}}=\frac1{3J}\sum_{j=1}^J\int_{0}^{3}
  \left|\frac1n\sum_{i=1}^n e^{\mathrm i\omega u_j^\top r_i}-e^{-\omega^2/2}\right|^2\,d\omega.
  \label{eq:cf-diagnostic}
\end{equation}
The integral uses trapezoidal quadrature at 31 equally spaced frequencies; $J=32$ unit directions are seeded for evaluation. This is a diagnostic, not the CL setting training loss or a calibrated $p$-value. It has no sample-size multiplier. Evaluation also records covariance eigenvalues, covariance effective rank, and deviation from identity, with sampling units and finite-sample Gaussian references. This distinction permits assessment of geometry independently of the VISReg training objective.

No new readout is fitted in the reported evaluation. Native heads are assessed only where training supplied a token objective: PL0's detached-input head is included, while the untrained heads of CL-LP/CL-LT are excluded. Geometry uses all 40,000 sentence means and one deterministic valid token per sentence, before post-hoc centering or whitening. The token statistic is a position mixture, not a joint-canvas Gaussianity test. A separate set of 1,024 nonoverlapping validation examples defines the centering vector as the average of clean sentence means; no label fitting or test-dependent centering is used. Finite-sample Gaussian reference files exist in the evaluation output, but no calibrated Gaussian goodness-of-fit conclusion is inferred from the summary statistics reported here.

\subsection{Application Objective and Regularization}
\label{app:objective}
Let $v=0$ denote the clean anchor, $v>0$ its training corruptions (natural corruptions are not certified invertible), and $M_{iv}$ the requested canvas length. The coefficients $\eta_{\mathrm e},\gamma,\alpha,\beta,\zeta$ weight empty-tail, length, invariance, negation, and geometry terms, respectively. The combined application objective is Eq.~\eqref{eq:application} in the main text.

\paragraph{Content, termination, and length.}
Set $s_{iv}=\min(M_{iv},L_i+4)$, $y_{it}=c_{it}$ for $t\leq L_i$ and PAD otherwise, and $\mathcal T=\{(i,v,t):s_{iv}<t\leq M_{iv}\}$. Then
\begin{equation}
  \mathcal L_{\mathrm{CE}}=\frac1{BV}\sum_{i,v}\frac1{s_{iv}}\sum_{t\leq s_{iv}}-\log p_{ivt}(y_{it}),\qquad
  \mathcal L_{\mathrm{empty}}=\frac1{|\mathcal T|}\sum_{(i,v,t)\in\mathcal T}\sum_{k=1}^{257}p_{ivtk}^2.
  \label{eq:ce-empty}
\end{equation}
Here $p_{ivtk}$ is the predicted probability of class $k$ at output position $t$. CE grounds ordered bytes and a short PAD band. Empty-tail loss is zero for $\mathcal T=\varnothing$; otherwise its minimum $1/257$ encourages uniform tail predictions. Underestimated canvases supervise only a prefix and do not yield a full-content guarantee. For source length $N_{iv}$, a head using detached encoder states predicts the log length ratio $\rho_{\mathrm{len},iv}$ with
\begin{equation}
  \mathcal L_{\mathrm{len}}=\tfrac12\operatorname{mean}_{i,v}|\log L_i-\log N_{iv}-\rho_{\mathrm{len},iv}|,
  \qquad \widehat L=N\exp\!\bigl(\cnclip(\rho_{\mathrm{len}},-2.5,2.5)\bigr).
  \label{eq:length}
\end{equation}
$\cnclip(x,a,b)$ clips $x$ to $[a,b]$. This symmetric log-ratio loss trains the length head; routing is detached. PAD-based text generation and direct latent extraction use distinct length policies (Appendix~\ref{app:architecture}).

\paragraph{Surface invariance and polarity.}
Let $r_{iv}$ average the first $\min(L_i,M_{iv})$ states and $q_{iv}=r_{iv}/\max(\|r_{iv}\|,\epsilon)$, where $\epsilon>0$ prevents division by zero. Both sides of pooled invariance receive gradients:
\begin{equation}
  \mathcal L_{\mathrm{inv}}=\frac1{B(V-1)}\sum_{i,v>0}(1-\langle q_{iv},q_{i0}\rangle),\qquad
  \ell_{P\to N}^{(v)}=[s_--s_++m_1]_+ +[s_U-s_-+m_2]_+.
  \label{eq:inv-neg}
\end{equation}
Write $[a]_+=\max(a,0)$ and let $m_1,m_2$ be hinge margins; $\ell_{P\to N}^{(v)}$ is the per-view directional negation loss. For a premise--negation pair $(P,N)$, $s_+=\langle q_{Pv},q_{P0}\rangle$, $s_-=\langle q_{Pv},q_{N0}\rangle$, and $s_U$ averages similarity to detached unrelated anchors. $\mathcal L_{\mathrm{neg}}$ averages both directions over pairs and corrupted views, with $m_1=m_2=0.2$. The hinges favor $s_+>s_->s_U$: negations retain topical relatedness but are not positive views. This ranking bias uses paired data without a semantic teacher; invariance alone does not enforce order.

\paragraph{Geometry and configurations.}
For vectors $r_{iv}\in\cnR^d$, our VISReg adaptation~\citep{wu2026visreg} uses coordinate means $\mu_v$ and standard deviations $\sigma_v$ (divisor $B$), then matches standardized projections to Gaussian quantiles:
\begin{equation}
  \mathcal R_{\mathrm{VIS}}=\operatorname{mean}_{v,k}[\mu_{vk}^2+(\sigma_{vk}-1)^2]
  +\operatorname{mean}_{v,j,i}\left[a_{vj,(i)}-\Phi^{-1}\!\left(\frac{i}{B+1}\right)\right]^2,
  \label{eq:visreg}
\end{equation}
Here $k$ indexes coordinates, $j$ indexes projection directions, and $(i)$ denotes sorted batch order. The standardized state $\widetilde r_{iv}=(r_{iv}-\mu_v)/\max(\cnsg(\sigma_v),0.1)$ uses coordinatewise division and stop-gradient $\cnsg$; $a_{vj,(i)}$ sorts $u_j^\top\widetilde r_{iv}$ for unit $u_j$, and $\Phi$ is the standard Gaussian CDF. Center, scale, and shape have unit internal weights; 256 directions are sampled per call and shared across views. Application runs use pooled content states. CL setting runs use pooled or sampled-token states with outer weight $0.01$ (Appendix~\ref{app:mechanism}). This single definition applies to both series.

We use $(\eta_{\mathrm e},\gamma,\alpha,\beta)=(0.1,1,0.5,0.3)$. The historical unregularized recipe and PL2 set $\zeta=0/0.1$, respectively. The former retains all nongeometric losses. PL0 instead uses $\zeta=0.3$, detached-anchor latent agreement of weight one, and detached inputs to the token head. Its CE/empty terms train the head but not the backbone. Thus PL0--PL2 compares full recipes, whereas the historical unregularized recipe and PL2 differ only in the configured geometric term (Appendix~\ref{app:application}).

\subsection{Application Losses and Gradient Paths}
\label{app:application}
Application runs use the same VISReg definition and internal settings as Eq.~\eqref{eq:visreg}, applied to pooled available-content states. Their outer weights differ from the CL setting: PL0 uses $\zeta=0.3$, PL1 uses $0$, and PL2 uses $0.1$. Mean pooling during training uses the available canonical-content mask; inference does not receive the true query length for this purpose.

For PL0, canvas lengths are shared across views of an item. Let $\mathcal M=\{(i,v,t):v>0,1\leq t\leq M_{iv}\}$. Its additional latent term is
\begin{equation}
  \mathcal L_{\mathrm{PL0,lat}}=\frac1{d|\mathcal M|}
  \sum_{(i,v,t)\in\mathcal M}\|z_{ivt}-\cnsg(z_{i0t})\|^2.
\end{equation}
PL0 uses logits $W\cnsg(z)+b$, so both CE and empty-tail losses update the head only. Invariance, negation, geometry, and latent agreement still affect its backbone. PL1/PL2 use $Wz+b$; CE and empty-tail losses can therefore update their backbones. Length-head inputs are detached in every configuration. Invariance differentiates through the corrupted view and clean anchor, while unrelated negation anchors are detached. PL0 is neither the auxiliary-free mechanism arm nor a faithful reproduction of LeJEPA's complete training recipe.

The mean squared probability sum in~\eqref{eq:ce-empty} satisfies $\sum_{k=1}^{257}p_k^2\geq1/257$ by Cauchy--Schwarz, with equality at uniform probabilities. It suppresses confident predictions after the supervised PAD band and does not itself train PAD. Content accuracy excludes the PAD band; sequence metrics count missing suffixes when canvases are too short. An underestimated-canvas CE loss cannot certify sufficiency of the entire target.

\begin{table}[ht]
\centering\small\setlength{\tabcolsep}{3pt}\renewcommand{\arraystretch}{1.12}
\caption{CL setting design. All four use oracle target lengths and VISReg weight 0.01. Only G sends canonical-token loss gradients into the representation.}
\label{tab:e-config}
\begin{tabular*}{\linewidth}{@{\extracolsep{\fill}}lll@{}}
\toprule
\thead{Run} & \thead{Predictive objective} & \thead{Vectors regularized} \\
\midrule
CL-GP & Canonical-token grounding & Mean of canvas rows \\
CL-GT & Canonical-token grounding & One sampled row per item \\
CL-LP & Cross-view latent agreement & Mean of canvas rows \\
CL-LT & Cross-view latent agreement & One sampled row per item \\
\bottomrule
\end{tabular*}
\end{table}
\subsection{Executed Retrieval and Semantic Protocols}
\label{app:evaluation}
Within each PL/CL suite, candidate identities and paired queries are fixed across all severities. Full-gallery search uses unit-normalized pooled states or Eq.~\eqref{eq:position}; the latter interpolates 16 relative positions and has no absolute-position length-mismatch penalty. The same routing applies to queries and clean candidates. No dynamic time warping, target-token matching, or best permutation enters the score. Centered controls subtract the clean calibration mean before constructing the same vectors. All reported failures remain in the query denominator; a failed query receives zero retrieval credit. No input is silently truncated. All reported PL/CL retrieval conditions have zero failures and full 40,000-candidate coverage.

If $g$ candidates score strictly above a tie of size $e$, the relevant item's expected Recall@$k$ contribution is $\min(\max(k-g,0),e)/e$ and reciprocal rank is $e^{-1}\sum_{r=g+1}^{g+e}1/r$. Exact score ties can yield fractional counts. This is particularly relevant for CL-LT, whose nearly common states produce poor raw cosine discrimination even on clean self-retrieval. Centering is estimated once on the disjoint 1,024-example calibration cohort, rather than fitted to individual evaluation targets. The changed-query LexNorm confidence intervals resample its 1,928 canonical identities, retaining duplicate-associated queries. Intervals for multiple contrasts are descriptive and are not familywise-corrected tests.

\paragraph{Data provenance.}
Training data come from the English--Singlish G2P compilation~\citep{english_singlish_g2p}. Its documented source inventory combines LJSpeech transcriptions (\texttt{LJ}) by Keith Ito and Linda Johnson~\citep{ito2017ljspeech}, LibriSpeech transcriptions (\texttt{LIBR}) by Vassil Panayotov and colleagues~\citep{panayotov2015librispeech}, community-authored English Wikipedia text (\texttt{WIKI})~\citep{wikipedia}, the NUS SMS corpus (\texttt{NUS}) by Tao Chen and Min-Yen Kan~\citep{chen2013nus}, and CANNOT negation data (\texttt{CANNOT}) by Miriam Ansch\"utz and colleagues~\citep{anschutz2023negation}. CANNOT is itself a derived resource produced with a rule-based negation tool. The compilation combines and processes existing resources; it does not confer authorship of their original sentences, annotations, or recordings. We credit the compilation and these source creators separately and claim no new collection of the underlying text.

The corpus documentation describes more than ten million text--phoneme pairs with canonical English text, informal/Singlish variants, and negation pairs. The compilation was assembled by the authors and is available at \url{https://huggingface.co/datasets/avalonai/english-singlish-g2p}. The prepared-data counts below describe the combined corpus used in our experiments. Component sources retain their own attribution and licensing terms.

CANOPE pretraining and the text diagnostics use only graphemic text, encoded as UTF-8 bytes, rather than the corpus's phoneme modality. The text-side fields are \texttt{text} (canonical sentence), \texttt{unnormalized\_text} (informal/Singlish variant), and \texttt{negation}; a literal \texttt{-} marks an unavailable variant or negation. Although the corpus also provides eSpeak NG transcriptions in \texttt{phoneme} and \texttt{phoneme\_negation}, neither field supplies model inputs, targets, or auxiliary supervision. The legacy loader nevertheless requires a nonempty phoneme field when filtering rows; reproduction must retain this field to preserve the selected cohort. These experiments concern text representations and canonical-byte recovery, not G2P prediction. The separate downstream MatchaTTS baseline uses its own phonemizer, not the compilation's stored phoneme annotations (Appendix~\ref{app:tts}). The corpus-level size describes the source collection; the filtered PL and prepared CL counts below describe the data used in the reported protocols.

The PL loader reproduces the original filtered corpus and seeded validation split: 9,595,548 filtered rows, 8,635,993 training rows, and 959,555 validation rows. The CL prepared corpus has 8,656,553 training, 470,530 validation, and 468,465 test rows, split by the hash of exact canonical text; duplicates remain in one partition. Each suite samples 40,000 unique targets with selection seed 9028 (evaluation seed 9027 plus one), and an additional 1,024 calibration targets with zero overlap. The CL source manifest reports an inverse-channel audit over 4,096 accepted examples and all strengths of both marker channels; the evaluation channel also checks recovery when generating its views. These are validation-cohort analyses, and the PL evaluation does not rerun a complete train/validation content-overlap audit. No numerical result from a separate test partition is substituted for these cohorts.

\paragraph{External extraction.}
Published checkpoints are \path{princeton-nlp/sup-simcse-bert-base-uncased} (supervised pooler output), \path{tum-nlp/NegMPNet} (published sentence-transformers modules), \path{Qwen/Qwen3-Embedding-0.6B} (last non-padding token), and \path{google/byt5-small} (masked mean of frozen encoder states). All vectors are unit-normalized. The ByT5 result uses neither generation nor task-specific fine-tuning. The configured input cap is 512 tokenizer tokens, further limited by the model interface; NegMPNet's recorded cap is 75. Capacity checks precede encoding, and the recorded external comparisons have full valid coverage. CANOPE uses width-512 canvases and a 2,048-byte/position cap; Eq.~\eqref{eq:position} produces an 8,192-coordinate vector, so parameter count alone would not establish retrieval-storage efficiency.

Qwen's canonical retrieval queries use the instruction: ``Retrieve the canonical text with the same meaning as the query, preserving negation, entities, numbers, and word order.'' Candidates and symmetric semantic pairs are encoded without that query instruction. For every model, LexNorm targets are deduplicated annotated canonical strings and are held fixed when query whitespace changes. Added whitespace is a controlled surface perturbation and may disappear under a subword tokenizer; robustness differences therefore concern the complete representation pipeline. The primary changed-input subset avoids credit for already canonical queries, while the complete original cohort remains in the accompanying results.

\paragraph{Semantic benchmarks and synthetic controls.}
SemAntoNeg supplies 3,152 three-choice groups (9,456 scored pairs), using the annotated equivalent candidate as the positive. PAWS uses the 8,000-example \texttt{labeled\_final/test} split and reports threshold-free ROC AUC. STS-B uses the 1,379-pair \path{sentence-transformers/stsb} test split, and the SICK-derived result uses the 9,927-pair \path{mteb/sickr-sts} split named \texttt{test}. We label the latter by its packaged source rather than imply the original SICK test cardinality. Both report Spearman correlation. The finite-pair intersection contains every example for these runs, so common-cohort metrics retain the full recorded coverage.

Synthetic controls use 512 base groups per type (negation, number, entity, and role swap), each in both directions, giving 1,024 decisions per type. A noisy query chooses between its same-content anchor and a specified semantic foil; ties receive half credit. Whitespace noise is applied identically to all methods. These template controls do not establish open-domain semantic equivalence or isolate word order from every other cue. We do not report unexecuted state-shuffle, random-encoder, or newly fitted readout controls as empirical findings.

\begin{table}[!htbp]
\centering\small
\caption{Run names and their source identities. Short IDs in the text refer to this convention; raw result files retain the original checkpoint keys.}
\label{tab:run-names}
\begin{tabular*}{\linewidth}{@{\extracolsep{\fill}}lll@{}}
\toprule
\thead{Run} & \thead{Source ID} & \thead{Defining setting} \\
\midrule
PL0/JEPA-based & \texttt{a0\_jepa\_s42} & Detached-anchor latent agreement \\
PL1/Baseline & \texttt{a10\_dec2\_s42} & Token supervision, $\zeta=0$ \\
PL2/VISReg-Weak & \texttt{a2\_visreg\_s42} & Pooled VISReg, $\zeta=0.1$ \\
PL3/VISReg-Stronger & \texttt{a3\_visreg\_kuat\_s42} & Pooled VISReg, $\zeta=0.5$ \\
PL4/No-Invariance & \texttt{a4\_no\_inv\_s42} & $\alpha=0$ \\
PL5/No-Negation & \texttt{a5\_no\_neg\_s42} & $\beta=0$ \\
PL6/No-Empty & \texttt{a6\_no\_empty\_s42} & $\eta_{\mathrm e}=0$ \\
PL7/Zero-Second-Margin & \texttt{a9\_single\_hinge\_s42} & $m_2=0$ \\
\bottomrule
\end{tabular*}
\end{table}
The historical keys \path{a1_ce_s42}, \path{a7_no_soft_s42}, and \path{a8_no_uniform_s42} are distinct from the revised PL1 and PL7.

\paragraph{Checkpoint and compute scope.}
All training runs used a combination of NVIDIA A100 GPUs with 80\,GB memory and NVIDIA H100 GPUs with 96\,GB memory. These are the training devices; the forward-only evaluation infrastructure is reported separately below.

The recorded PL0/PL2/PL3 external checkpoints share width 512, eight heads, four encoder layers, two decoder layers, and step 1,199,440. Their optimizer settings are AdamW, learning rate $5\times10^{-5}$, weight decay 0.01, batch size 36, and five configured epochs. Table~\ref{tab:run-names} defines the revised run names and their source IDs. PL4, PL5, PL6, and PL7 modify the historical unregularized recipe, not PL2. PL1/Baseline is the evaluated checkpoint with source key \texttt{a10\_dec2\_s42}, with two decoder layers, batch size 36, and eight unrelated negatives. The historical unregularized checkpoint was not evaluated; it is distinct from the renamed PL1. In PL7, $m_2=0$ removes the second positive margin, not necessarily its hinge term. PL5 is retained only in the historical measurement record. Its evaluated checkpoint has a different recorded training budget, so it is excluded from controlled component comparisons.

Evaluation uses PyTorch 2.11.0+cu128 and NumPy 2.2.6. PL and the external benchmark ran on an A100 40GB; CL ran on an H100 NVL reporting 93.1 GiB. No hardware-speed comparison is made. PL/CL use forward-only extraction with batch size 4; external encoders use batch size 16; retrieval query chunks contain 64 vectors. No backbone or readout parameters are updated. Numeric summaries, paired intervals, figure-generation code, and the compiled manuscript accompany the LaTeX source.

\FloatBarrier
\section{Additional Empirical Results}
\begin{table}[t]
\centering
\small\setlength{\tabcolsep}{3pt}\renewcommand{\arraystretch}{1.12}
\caption{PL setting at natural severity 3. Rank and CER use oracle routing; Recall@1 (R1) uses relative positions. Percentages except rank.}
\label{tab:a-main}
\begin{tabular*}{\linewidth}{@{\extracolsep{\fill}}lrrrrr@{}}
\toprule
\thead{Model} & \thead{Pooled rank} & \thead{Token rank} & \thead{Oracle R1 $\uparrow$} & \thead{Predicted R1 $\uparrow$} & \thead{CER $\downarrow$} \\
\midrule
PL0/JEPA-based & 15.76 & 15.76 & 13.46 & 6.31 & 87.50 \\
PL2/VISReg-Weak & 15.82 & 33.04 & 98.80 & 98.80 & 23.09 \\
PL3/VISReg-Stronger & 36.06 & 65.11 & 99.16 & 99.19 & 22.51 \\
\bottomrule
\end{tabular*}
\end{table}

\begin{table}[t]
\centering
\small\setlength{\tabcolsep}{3pt}\renewcommand{\arraystretch}{1.12}
\caption{CL setting relative-position Recall@1 (\%) at severity 3. Input routing uses source length, not a learned length predictor.}
\label{tab:e-main}
\begin{tabular*}{\linewidth}{@{\extracolsep{\fill}}lrrrr@{}}
\toprule
\thead{Model} & \thead{Recov. oracle $\uparrow$} & \thead{Recov. input $\uparrow$} & \thead{Heldout oracle $\uparrow$} & \thead{Heldout input $\uparrow$} \\
\midrule
CL-GP & 99.985 & 97.475 & 84.278 & 84.665 \\
CL-GT & 99.980 & 16.532 & 89.472 & 31.220 \\
CL-LP & 0.520 & 0.090 & 8.183 & 6.759 \\
CL-LT & 0.747 & 0.002 & 0.715 & 0.014 \\
\bottomrule
\end{tabular*}
\end{table}

\label{app:results}
Tables~\ref{tab:a-full}, \ref{tab:external}, \ref{tab:semantic}, and~\ref{tab:controls} in the main paper report all PL checkpoints and external comparisons. Tables~\ref{tab:a-main}, \ref{tab:e-main}, and~\ref{tab:e-decode} below add oracle/predicted PL comparisons and CL retrieval and reconstruction details. The accompanying machine-readable summaries retain all measured severities and readouts.

\begin{table}[!htbp]
\centering
\small\setlength{\tabcolsep}{3pt}\renewcommand{\arraystretch}{1.12}
\caption{Native-head reconstruction at E severity 3. CER and exact match (EM) are percentages; CE is corpus nats per target byte under oracle routing. CL-LP/CL-LT heads were not trained and are excluded.}
\label{tab:e-decode}
\begin{tabular*}{\linewidth}{@{\extracolsep{\fill}}llrrrrr@{}}
\toprule
\thead{Model} & \thead{Channel} & \thead{Or. CER $\downarrow$} & \thead{Or. EM $\uparrow$} & \thead{Or. CE $\downarrow$} & \thead{In. CER $\downarrow$} & \thead{In. EM $\uparrow$} \\
\midrule
CL-GP & recoverable & 0.13 & 98.97 & 0.0094 & 91.84 & 0.09 \\
CL-GP & heldout & 18.13 & 6.86 & 16.3316 & 17.62 & 6.83 \\
CL-GT & recoverable & 0.15 & 98.78 & 0.0109 & 91.70 & 0.09 \\
CL-GT & heldout & 19.08 & 6.86 & 12.8150 & 17.06 & 6.83 \\
\bottomrule
\end{tabular*}
\end{table}

\subsection{Semantic Generalization and Synthetic Controls}
\label{app:semantic-results}
Canonical identity retrieval and semantic equivalence exhibit markedly different behavior. Although PL3 achieves 100\% accuracy on synthetic role-swap controls, its SemAntoNeg accuracy is only 9.11\%. Thus, successful discrimination of controlled content changes does not establish general semantic equivalence recognition. The evidence supports CANOPE's utility for canonical identity matching while revealing limitations in how the evaluated similarity functions expose semantic relationships. Tables~\ref{tab:semantic} and~\ref{tab:controls} in the main paper retain the complete reported semantic comparisons, including weak performance. Their protocols appear in Appendix~\ref{app:evaluation}.

\FloatBarrier
\subsection{Raw Geometry and Its Interpretation}
\label{app:geometry-results}
\begin{table}[!htbp]
\centering
\small\setlength{\tabcolsep}{3pt}\renewcommand{\arraystretch}{1.12}
\caption{Raw CL geometry at recoverable severity 3, oracle routing. $\bar v$ is mean coordinate variance, $D_\Sigma=\|\widehat\Sigma-I\|_F/\sqrt d$, and $D_{\rm CF}$ is Eq.~\eqref{eq:cf-diagnostic}. Arrows indicate closer agreement with the zero-mean, identity-covariance Gaussian reference, not better content access; $\bar v$ has target 1. Token statistics use one valid token per sentence.}
\label{tab:e-geometry}
\begin{tabular*}{\linewidth}{@{\extracolsep{\fill}}llrrrrr@{}}
\toprule
\thead{Model} & \thead{States} & \thead{Eff. rank} & \thead{$\|\mu\|$ $\downarrow$} & \thead{$\bar v$} & \thead{$D_\Sigma$ $\downarrow$} & \thead{$D_{\rm CF}$ $\downarrow$} \\
\midrule
CL-GP & Pool & 55.08 & 6.94 & 0.73763 & 2.456 & 0.03175 \\
CL-GP & Token & 41.59 & 6.94 & 26.37623 & 109.941 & 0.15207 \\
CL-GT & Pool & 114.14 & 1.04 & 0.02654 & 0.977 & 0.40322 \\
CL-GT & Token & 194.47 & 1.05 & 0.82981 & 1.375 & 0.00280 \\
CL-LP & Pool & 5.36 & 32.53 & 0.04498 & 1.103 & 0.69940 \\
CL-LP & Token & 15.57 & 32.53 & 0.06938 & 1.101 & 0.65024 \\
CL-LT & Pool & 3.68 & 543.35 & 0.00038 & 1.000 & 1.18631 \\
CL-LT & Token & 28.59 & 543.35 & 0.00266 & 0.997 & 1.17967 \\
\bottomrule
\end{tabular*}
\end{table}

Table~\ref{tab:e-geometry} separates pooled and sampled-token diagnostics. Isotropic covariance requires $\Sigma=\sigma^2I$; the standard Gaussian target additionally requires zero mean, unit variance, and Gaussian shape. Low effective rank indicates unequal eigenvalues. A covariance discrepancy near one can also result from near-zero variance. These distinct diagnostics complement the content tests.

CL-GT has substantially lower token characteristic-function discrepancy than CL-GP (0.00280 versus 0.15207), but CL-GT's token covariance deviation from identity remains 1.375 and its effective rank is 194.47. Improved projected shape diagnostics therefore do not justify replacing measured geometry with an assumption of exact isotropy. CL-LT has low variance, low effective rank, and a dominant common mean; its near-zero raw agreement MSE must be interpreted with those scales. Mean-centering improves clean identity access without demonstrating preservation of all ordered bytes. The theoretical Gaussian construction and these learned-state diagnostics answer different questions.

\begin{figure}[!htbp]
\centering
\includegraphics[width=\linewidth]{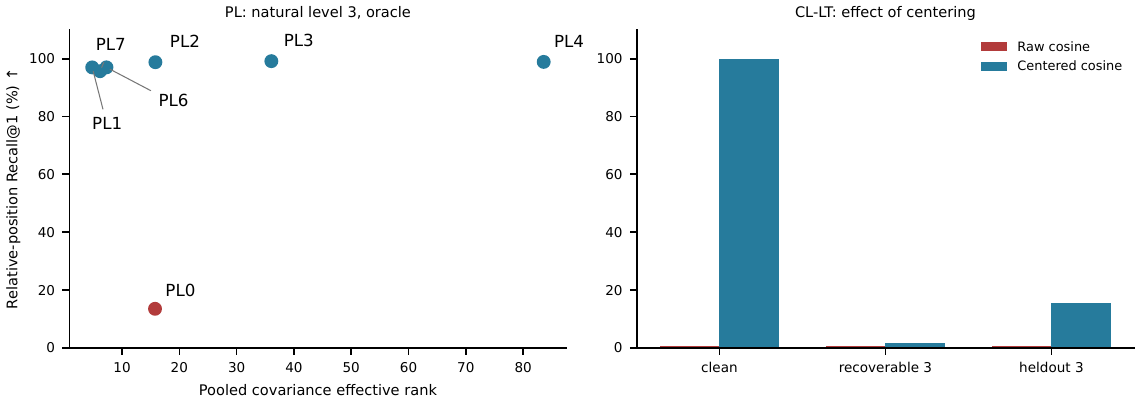}
\caption{Pooled effective rank and content access. The left panel uses PL natural severity 3 with oracle lengths for the principal configurations. The right panel shows that mean subtraction restores CL-LT clean identity access much more than cross-view retrieval. Both panels summarize full-cohort measurements.}
\label{fig:geometry-access}
\end{figure}

\FloatBarrier
\paragraph{Geometry and mean subtraction.}
CL-LT has pooled mean norm 543.35, coordinate variance $3.79\times10^{-4}$, rank 3.68, and raw cross-view MSE $6.67\times10^{-7}$: agreement occurs at a highly concentrated scale. Centering restores clean retrieval from 0.76\% to 100\%, but severity-3 retrieval rises only from 0.75\% to 1.58\% (Figure~\ref{fig:centering}). Clean access does not imply corruption robustness. The first three PCA components retain 9.90\%/11.01\% of CL-GP/GT clean variance versus 88.44\% for CL-LT. Thus similar clean/corrupted clouds can coexist with concentrated geometry and poor access.

\paragraph{Why small disagreement need not mean robust separation.}
A scale illustration clarifies these diagnostics without assuming a training trajectory. Let $Z^{(v)}=m+\epsilon[s(C)+e_v(C)]$, where $m$ is a shared vector, $s(C)$ carries content, $e_v(C)$ is a view-dependent perturbation, and $\epsilon>0$. For finite second moments,
\begin{equation}
\cnE\|Z^{(1)}-Z^{(2)}\|^2
=\epsilon^2\cnE\|e_1(C)-e_2(C)\|^2.
\label{eq:agreement-scale}
\end{equation}
As $\epsilon$ decreases, absolute disagreement vanishes, but both inter-target separation and view perturbations shrink at the same rate. Their relative scale need not improve. If $m\ne0$ is fixed, uncentered cosine similarities also approach one as $\epsilon\to0$ for finite residuals. Removing the common mean can expose clean distinctions while leaving corruption-induced mismatches. This explains how the reported diagnostics can coexist; it does not identify the learned residuals or prove that the optimizer follows this path. In particular, VISReg's variance term penalizes vanishing variance, so the illustration concerns the agreement term, not a global optimum of the combined objective.

\subsection{Paired 3D Views and Distribution Geometry}
\label{app:visual-evidence}
The visual diagnostics use a fixed 4,096-item subset, distinct from the 40,000-item full-gallery summaries. PCA is fit to clean raw states separately for each model and representation type. The same basis projects its clean and corrupted views. Distribution panels display 1,200 points; trajectory panels display 12 matched identities. Figure~\ref{fig:a-identities} uses the PL natural channel; the pooled distribution grid (Figure~\ref{fig:e-distributions}) and Figure~\ref{fig:e-identities} use CL recoverable corruption, all with oracle lengths. Panel sizes are identical within each grid. Bases and units differ across models; common axes and limits apply within each three-panel distribution row.

The pooled distribution grid shows all four CL objectives as averaged text vectors. The matched views below add identity correspondence. Appendix~\ref{app:complete-visuals} extends both views to sampled tokens, PL natural corruption, and the CL heldout channel, and indexes the complete visual atlas.

\clearpage
\begin{figure}[!ht]
\centering
\includegraphics[width=.88\linewidth]{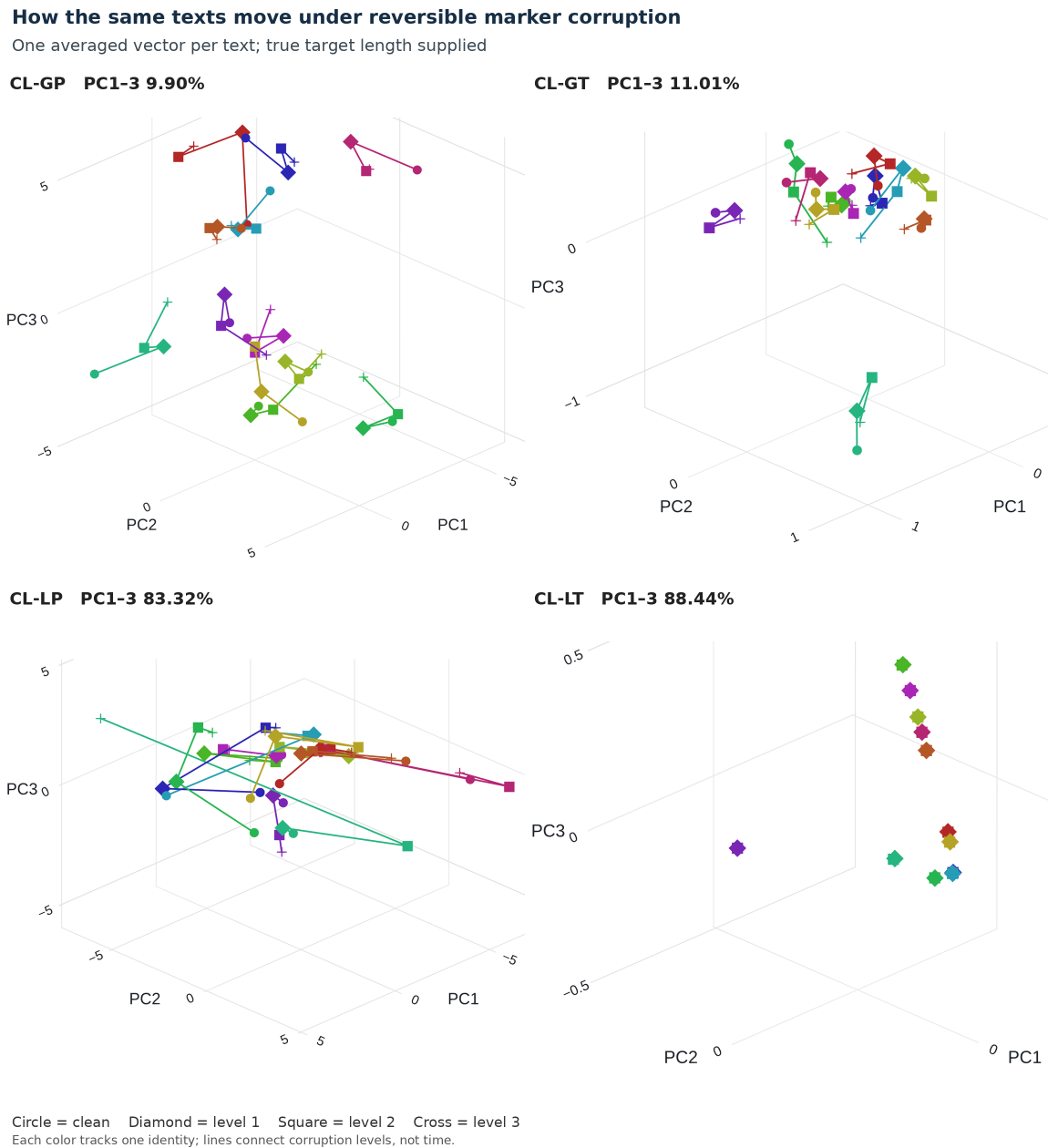}
\caption{Matched CL identities in a $2\times2$ grid. Each panel uses pooled states, oracle lengths, and its own clean PCA basis. Colors identify 12 matched texts; markers distinguish clean and recoverable severities 1--3. Lines connect views rather than time steps. CL-LP exhibits visible displacement for several identities, while some concentrated representations show short trajectories despite poor full-gallery retrieval. The figure illustrates selected within-model behavior, not an estimate of retrieval accuracy.}
\label{fig:e-identities}
\end{figure}

\paragraph{Pairwise margins and Gaussian projections.}
The supplied cosine histograms compare the correct identity with a random wrong identity. Their ECDFs use the margin $s(q,c^+)-s(q,c^-_{\mathrm{random}})$, whereas Recall@1 requires the correct candidate to outrank the full gallery. A positive random-foil margin can therefore coexist with poor Recall@1. The supplied Q--Q plots compare four seeded projections with normal quantiles; their standardized rows remove projection mean and scale. Close agreement on these directions supports the projected shape diagnostic but does not test every direction or the joint canvas. The token mixture and the 3D PCA projections also concern different sampling units and subspaces.

\clearpage
\subsection{Corruption Sensitivity and Positional Access}
\begin{figure}[!htbp]
\centering
\includegraphics[width=\linewidth]{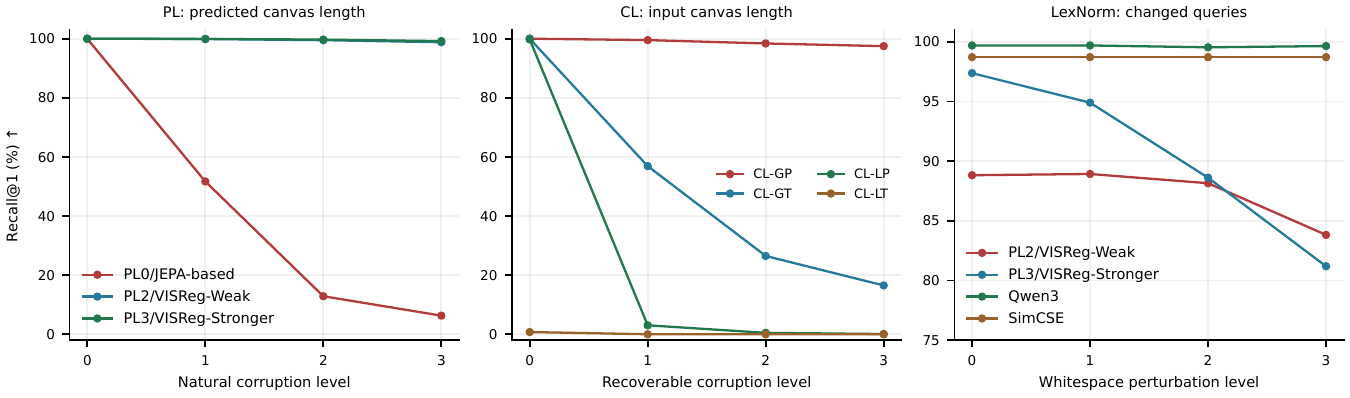}
\caption{Relative-position retrieval across corruption strengths. The panels use PL predicted lengths, CL input lengths, and LexNorm changed-input queries with extra whitespace. Level 0 denotes clean PL/CL inputs and the original noisy LexNorm query. Channels and cohorts differ, so severity scales are not directly comparable.}
\label{fig:robustness}
\end{figure}
These within-checkpoint comparisons isolate the choice of similarity readout. The benefit of relative positions is largest under the strongest whitespace perturbation. It supports the utility of canvas structure for canonical identity access without a claim that the 16-position score recovers every byte or all semantic relations.

\clearpage
\subsection{Supporting Readout Controls}

The main-text comparisons retain the original full-cohort centering and positional-access plots below.
\begin{figure}[!htbp]
\centering
\includegraphics[width=.62\linewidth]{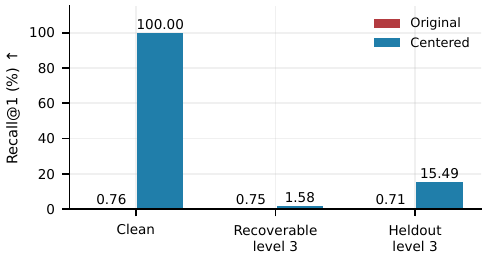}
\caption{Centering restores clean access, not robustness. CL-LT, oracle lengths, relative-position scoring, 40,000 targets. Blue subtracts the clean mean of 1,024 disjoint calibration examples; red uses raw states. Labels give Recall@1 (\%).}
\label{fig:centering}
\end{figure}

\begin{figure}[!htbp]
\centering
\includegraphics[width=.62\linewidth]{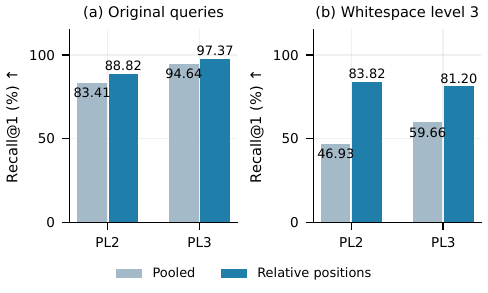}
\caption{Position improves access within a fixed checkpoint and predicted length. Gray pools states; blue compares 16 relative positions. All 1,941 changed-input queries search the same 1,954 targets. At maximum whitespace perturbation, PL2/PL3 gain 36.89/21.54 points.}
\label{fig:position-access}
\end{figure}

\clearpage
\section{Downstream Text-to-Speech: Protocol and Audit}
\label{app:tts}

\subsection{Data, Training, and the Frozen Interface}
The downstream experiment uses LJSpeech 1.1~\citep{ito2017ljspeech}, with normalized transcriptions paired to the original single-speaker recordings. The preparation code shuffles the 13,100 utterance IDs with seed 42 and makes a 70/30 split (9,170 training and 3,930 validation utterances). Each ID has a stored clean transcription and a fixed augmented version; both target the same recording. The conservative augmenter attempts up to eight medium-strength rule-based corruptions and accepts a nonempty changed string only when one minus the character-level \texttt{SequenceMatcher} ratio is at most 0.10. It otherwise retains the clean text. This bound is a string-similarity heuristic, not a certified information-preserving channel or a Levenshtein-error bound.

The training loader chooses the stored augmented branch with probability 0.7 using a hash of training seed, epoch, and utterance ID; it also fixes the per-epoch example order across systems. Validation always selects the stored augmented branch, while the reported downstream evaluation synthesizes both branches. No target audio is corrupted. We trained all three systems for 1,000 epochs using the default submission-script configuration: Adam with learning rate $10^{-4}$, zero weight decay, batch size 36, training seed 42, mixed bfloat16 precision, and gradient clipping at 5.0. The launcher selects one A100 80\,GB for the baseline and one H100 96\,GB per frozen system. The accompanying source and Slurm configuration specify these training settings.

All three systems initialize from the official LJSpeech MatchaTTS checkpoint~\citep{mehta2024matcha}. The native baseline fine-tunes its text encoder and acoustic model. Its frontend applies \texttt{english\_cleaners2} (including lowercasing and eSpeak phonemization) and intersperses blank tokens. PL0 and PL2 instead receive UTF-8 bytes without inserted blanks. The integration restores their canonical-canvas backbones from the latent-agreement and token-grounded checkpoints, respectively, disables all backbone gradients, and keeps them in evaluation mode. Canvas length is the ceiling of the model's predicted canonical length, bounded below by one; over-capacity inputs or predictions raise an error rather than being silently truncated. Neither system uses oracle target lengths, a canonical-token prefix, nor the CANOPE token head at inference.

For both frozen systems, a trainable LayerNorm and linear map project each width-512 canvas row to 192 channels. The integration reuses MatchaTTS's pretrained acoustic-mean projection, duration predictor, and flow-matching decoder. The adapter and acoustic modules train while CANOPE stays frozen; the duration predictor receives detached adapter features, as implemented in the interface. Thus this is latent conditioning of a trained speech generator, not text correction followed by an unchanged TTS model. The common downstream architecture makes PL0/PL2 an informative accessibility comparison, but their canvas lengths, state distributions, and learned alignments can differ.

Synthesis uses ten flow steps, temperature 1, length scale 1, and synthesis seed 9027, followed by a shared pretrained HiFi-GAN vocoder~\citep{kong2020hifigan} at 22,050\,Hz. Its random seed is derived from the utterance ID and input view; systems share that seed for a given ID/view, but clean and augmented views use different derived seeds. Consequently, the clean-to-augmented contrast includes the realized synthesis variation. We evaluate the generated speech with Whisper large-v3. The archived CSVs retain both raw and normalized references and hypotheses, but not the ASR command-line flags. The accompanying evaluator implements English text normalization and unprompted transcription (temperature zero, no previous-text conditioning); the exact ASR invocation is not independently documented. The audit below uses the archived normalized strings directly.

\subsection{Recomputed Word Error Rates and Paired Uncertainty}
\label{app:tts-wer}
Let $e_{imv}=S_{imv}+D_{imv}+I_{imv}$ denote the word edit count for utterance $i$, model $m$, and view $v$, and let $n_i$ be its normalized clean-reference word count. We report
\begin{equation}
\operatorname{WER}_{mv}=100\frac{\sum_i e_{imv}}{\sum_i n_i},\qquad
\Delta_m=\operatorname{WER}_{m,\mathrm{aug}}-\operatorname{WER}_{m,\mathrm{clean}},\qquad
D=\Delta_{\mathrm{PL2}}-\Delta_{\mathrm{base}}.
\label{eq:tts-wer}
\end{equation}
These are corpus ratios, not means of utterance WERs. All six cells contain the same 3,930 unique IDs, one recorded synthesis seed, and 66,489 reference words. We verified unique model/view/ID keys, identical references across the six cells, reference-token counts, every stored utterance ratio, and each total edit count against a fresh word-level Levenshtein calculation on the archived normalized strings. The recomputed corpus totals match both archived summary CSVs. There are 3,923 unique normalized reference strings; the sampling unit remains the audio utterance ID. The CSVs contain scores for every expected cell, but they do not carry synthesis/ASR failure flags for the full cohort. Complete score coverage alone therefore does not establish that every synthesis or ASR call succeeded; empty hypotheses remain in the metric.

\begin{table}[!htbp]
\centering\small\setlength{\tabcolsep}{3pt}\renewcommand{\arraystretch}{1.12}
\caption{Audited downstream error counts and utterance-paired percentile intervals. S/D/I denote substitutions, deletions, and insertions. The word denominator is 66,489 in every row.}
\label{tab:tts-full}
\begin{tabular*}{\linewidth}{@{\extracolsep{\fill}}llrrrrr@{}}
\toprule
\thead{System} & \thead{Input} & \thead{S $\downarrow$} & \thead{D $\downarrow$} & \thead{I $\downarrow$} & \thead{WER $\downarrow$ [95\% interval]} & \thead{Zero-WER clips $\uparrow$} \\
\midrule
MatchaTTS & Clean & 868 & 259 & 345 & 2.21 [2.04, 2.40] & 3,083 \\
MatchaTTS & Augmented & 4,910 & 701 & 1,654 & 10.93 [10.47, 11.40] & 1,647 \\
Frozen PL2 & Clean & 6,657 & 1,398 & 988 & 13.60 [13.21, 14.01] & 816 \\
Frozen PL2 & Augmented & 10,880 & 2,085 & 1,359 & 21.54 [21.01, 22.09] & 458 \\
Frozen PL0 & Clean & 39,958 & 24,689 & 1,259 & 99.12 [98.33, 100.10] & 0 \\
Frozen PL0 & Augmented & 39,465 & 25,231 & 1,272 & 99.22 [98.45, 100.15] & 0 \\
\bottomrule
\end{tabular*}
\end{table}

For uncertainty, we draw 10,000 bootstrap samples of the 3,930 utterance IDs with replacement using NumPy seed 9027, retaining all three systems and both views for each selected ID. We recompute the numerator and word denominator on every draw and take the 2.5th and 97.5th percentiles. This preserves pairing and estimates sampling variation conditional on the fitted checkpoints and realized synthesis/ASR outputs. It does not measure variation across training seeds, other ASR engines, or independent domains. WER can exceed 100\% because insertions are included.

The corruption increments are 8.71 points [8.26, 9.18] for the baseline, 7.94 [7.43, 8.44] for PL2, and 0.09 [$-1.03$, 1.21] for PL0. The paired contrast $D=-0.77$ points has interval [$-1.34$, $-0.22$]. PL2 nevertheless has 10.62 points more augmented WER than the baseline [10.07, 11.18]. Its advantage over PL0 is 77.67 points [76.72, 78.72]. These contrasts describe the evaluated runs; they are not multiple-comparison-adjusted hypothesis tests.

PL0's clean errors are dominated by 39,958 substitutions and 24,689 deletions, with no zero-WER utterances. Its near-zero corruption increment accompanies failure on the clean view and therefore does not demonstrate useful invariance. The baseline has 3,083 zero-WER clean clips and 1,647 augmented clips; PL2 has 816 and 458. PL0 has 44 empty normalized hypotheses on clean audio and 43 on augmented audio, versus none for the other systems. This pattern is consistent with poor linguistic accessibility under the trained interface, but it does not identify the population information gap or distinguish representation loss from optimization, alignment, length, and readout mismatch.

\subsection{Scope of the Downstream Evidence}
The native baseline is an end-to-end adaptation reference with a pretrained phoneme encoder; the CANOPE systems replace that encoder with frozen byte-based states and a newly trained adapter. A casing change may disappear before entering the baseline encoder yet remain visible to CANOPE. The comparison therefore does not isolate the causal effect of freezing or of the CANOPE objective. Likewise, the small reduction in PL2's corruption increment is conditional on its higher clean error and this particular perturbation mixture; it is not a general robustness guarantee.

The 3,930 items are held out from the reported acoustic fine-tuning split, not demonstrated to be unseen during pretraining. The shared MatchaTTS initializer was trained on LJSpeech, and LJSpeech transcriptions are among CANOPE's pretraining sources. A pretraining-to-evaluation overlap audit and original-recording Whisper WER are unavailable. We therefore avoid interpreting these numbers as out-of-domain transfer, a ground-truth intelligibility ceiling, or a clean test-set estimate. Whisper WER~\citep{radford2023whisper,openai2023whisperv3} measures ASR-mediated lexical fidelity and can include recognizer errors; no human listening score or perceptual naturalness claim is made.

\subsection{Reproducibility and Waveform Displays}

The accompanying \path{analyze_tts.py} recreates the audit, tables, bootstrap intervals, and spectrogram panels from the archived CSVs and samples. Source names \texttt{baseline}, \texttt{canope}, and \texttt{jepa} map to MatchaTTS, frozen CANOPE-PL2, and frozen CANOPE-PL0, respectively; the historical \texttt{jepa} name is not a claim that PL0 reproduces an unmodified JEPA method. The package retains the original source mapping and a machine-readable audit of counts, audio hashes, and display settings.

The final ten sample pages show every utterance in the archived sample subset, retaining its recorded order. We do not select a further subset by WER. The original selection procedure is undocumented, so these examples are illustrative rather than a random sample of the evaluation cohort. Each page includes all six waveforms as mel-spectrograms, the exact clean and augmented inputs, and the normalized Whisper hypotheses joined by utterance ID, model, and view. The plots are recomputed from the generated WAV files, not from saved decoder mel targets. All 60 WAV hashes, durations, sampling rates, and mel-frame counts match their sidecar metadata. No original reference recordings were supplied in the sample archive, so no ground-truth-audio panel is shown.

We use a 1,024-sample periodic Hann window, 256-sample hop, 384-sample reflection padding at both ends, and 80 Slaney area-normalized mel filters over 0--8\,kHz, matching the supplied Matcha acoustic transform. Magnitudes are displayed as $20\log_{10}(M/M_{\max})$, clipped to [$-80$, 0]\,dB, where $M_{\max}$ is shared across all 60 clips. Time axes are shared within each six-panel example without time warping; gray beyond a waveform's end denotes absence of samples. These plots expose timing and spectral differences but do not independently establish which words were spoken or how natural the audio sounds.

\clearpage
\section{Probability and Information Identities Used in This Paper}
\label{app:primer}
This appendix explains the identities needed for the content-gap and readout arguments. It also separates population quantities from the finite-cohort diagnostics. The information identities follow standard definitions and chain rules~\citep[Chapter~2]{cover1991elements}; the applications below use this paper's variables and examples.

\subsection{Random Variables, Expectations, and Conditional Distributions}
Uppercase letters denote random variables, while lowercase letters denote realizations. For example, $C$ is a random target and $c$ is one particular string. An expectation averages a function under the joint law of its arguments. For discrete $C$ and possibly continuous $Z$,
\begin{equation}
\cnE[f(C,Z,S)]=\cnE_{Z,S}\left[\sum_c p(c\mid Z,S)f(c,Z,S)\right].
\end{equation}
Thus $\cnE$ without a subscript is shorthand when the variables are clear. In the KL term of Eq.~\eqref{eq:decomposition}, the inner divergence already sums over $c$ at fixed $(z,s)$. The outer $\cnE_{Z,S}$ averages that divergence across contexts. Omitting this outer average would leave a context-dependent quantity rather than population log-loss.

The notation $C\perp Z\mid(X,S)$ means that after $(X,S)$ is known, $Z$ gives no additional information about $C$. It permits $C$ and $Z$ to be strongly dependent before conditioning. A deterministic $Z=F(X,S)$ satisfies it because $Z$ is then already fixed by $(X,S)$. Independent encoder randomness also satisfies it if it introduces no target-dependent side channel.

\subsection{Entropy, Log-Loss, and the Information Gap}
For a discrete target, conditional entropy is
\begin{equation}
H(C\mid Z,S)=\cnE_{Z,S}\left[-\sum_c p(c\mid Z,S)\log p(c\mid Z,S)\right].
\end{equation}
For two discrete laws, $D_{\mathrm{KL}}(p\|q)=\sum_c p(c)\log[p(c)/q(c)]$, with value $+\infty$ if $q$ assigns zero probability to a positive-probability event under $p$. It measures distributional mismatch and is zero exactly when the laws agree. Conditional entropy is uncertainty under the true conditional law. A readout $q$ instead incurs log-loss $\cnE[-\log q(C\mid Z,S)]$. Subtracting the true entropy leaves an expected KL divergence, which is nonnegative. Hence an inaccurate readout can have high loss even when the target is fully determined by $Z$.

The standard identity
\begin{equation}
I(C;X\mid Z,S)=H(C\mid Z,S)-H(C\mid X,Z,S)
\end{equation}
measures how much $X$ reduces the remaining uncertainty after $(Z,S)$ is given. Under the conditional-independence assumption, its final entropy becomes $H(C\mid X,S)$, yielding Eq.~\eqref{eq:gap-definition}. This identifies a processing loss relative to the input, not necessarily the entire uncertainty about $C$.

For a concrete example, let $C=(B_1,B_2)$ contain independent fair bits, $X=C$, and $Z=B_1$. In nats, $H(C)=2\log2$, $H(C\mid X)=0$, and $H(C\mid Z)=\log2$. The encoder loses one bit, so $\cnGap=\log2$. If instead $Z=(B_2,B_1)$, the full representation loses no information. A position-local head still fails, because it sees the wrong independent bit at each position. This is the difference between information loss and local readout mismatch.

If the input itself contains only $X=B_1$ and the encoder preserves $Z=X$, then $H(C\mid X)=\log2$ but $\cnGap=0$. The irreducible uncertainty arose before the encoder. Recoverable corruption removes this confound by ensuring $H(C\mid X,L)=0$.

\subsection{Why Length Is a Separate Information Variable}
At a given length $\ell$, the chain rule expands the entropy of the sequence into $\ell$ conditional token entropies. Each term is at most $\log K$, which proves Eq.~\eqref{eq:length-entropy}. Equality requires the uniform distribution over all $K^\ell$ strings, not ordinary linguistic text. Repetition can add almost no new uncertainty, whereas independent symbols add a constant amount per position.

For an infinite stationary finite-alphabet process with entropy rate $h$, the block entropy satisfies
\begin{equation}
h=\lim_{n\to\infty}\frac{H(C_1,\ldots,C_n)}n,
\qquad H(C_1,\ldots,C_n)=nh+o(n).
\end{equation}
This is an asymptotic statement about a source process~\citep{shannon1948communication}. It does not imply that arbitrary variable-length sentences, or their meanings, contain exactly $h$ units of information per token. Conditioning on the event $L=\ell$ can change the token law.

Because sequence length is a function of the sequence, $H(L\mid C)=0$. The joint-entropy chain rule therefore yields $H(C)=H(L)+H(C\mid L)$ when finite. CL supplies $L$ and studies the remaining ordered content. PL predicted inference must infer a length from $X$. Its output shape is itself part of the representation available to the readout. Stopping gradients through the length-head input does not supply the unknown length or change this inference-time information structure.

\subsection{Covariance, Eigenvalues, and the Quantity Called Rank}
For a random vector $R$ with finite second moments,
\begin{equation}
\Sigma=\cnCov(R)=\cnE[(R-\cnE R)(R-\cnE R)^\top].
\end{equation}
An eigenvector $u_i$ with unit norm identifies a direction whose projected variance is $u_i^\top\Sigma u_i=\lambda_i$. Equal eigenvalues mean equal variance in all directions. The eigenbasis describes directions, while the spectrum lists their variances.

Algebraic rank counts nonzero eigenvalues. Covariance effective rank instead applies entropy to their relative weights $p_i=\lambda_i/\sum_j\lambda_j$ and exponentiates the result. It is sensitive to the unevenness of the spectrum even if every eigenvalue remains positive. For example, both $\operatorname{diag}(1.8,0.2)$ and $\operatorname{diag}(1.2,0.8)$ have algebraic rank two, but their effective ranks are 1.3841 and 1.9601. This is the function differentiated in Appendix~\ref{app:threshold}.

Neither covariance isotropy nor effective rank determines the full distribution. An isotropic covariance can occur in a non-Gaussian law, and a standard Gaussian sample has unequal empirical eigenvalues at finite sample size. Our Gaussian reference uses matched sample count and dimension to contextualize this sampling effect.

\subsection{Population Guarantees and Measured Diagnostics}
Theorems use population expectations. An empirical token-loss average is an estimate, and a per-byte corpus average is not the expected summed sequence loss used in the information bound. Paired bootstrap intervals resample canonical identities while keeping each comparison paired; they describe evaluation-cohort uncertainty conditional on the checkpoint.

Recall@1 asks whether the true target outranks every candidate in the gallery. A positive cosine margin against one random wrong target is a weaker event. Similarly, a 3D principal-component plot shows only three directions. Short trajectories indicate stability in that displayed subspace; interpreting robustness also requires the retained variance, correspondence structure, and full-gallery retrieval curves.

\clearpage
\section{Complete Distribution and Paired-View Comparisons}
\label{app:complete-visuals}
The distribution and paired-view grids in this section complete the coverage across all seven models with supplied feature diagnostics. PL includes natural corruption; CL includes both recoverable and heldout corruption. Every combination is shown for pooled and sampled-token states. The primary pooled CL distribution is in Figure~\ref{fig:e-distributions}, and the primary pooled paired views are in Figures~\ref{fig:a-identities} and~\ref{fig:e-identities}. All feature-level plots supply the true target length (oracle length). Pooled means one vector obtained by averaging the valid positions of a text; token means one sampled valid position per text. Recoverable denotes reversible inserted markers; heldout denotes the alternative marker pattern defined in Section~\ref{sec:results}. The full-cohort tables and curves separately report PL predicted-length and CL input-length performance.

\begin{figure}[!htbp]
\centering
\includegraphics[width=\linewidth]{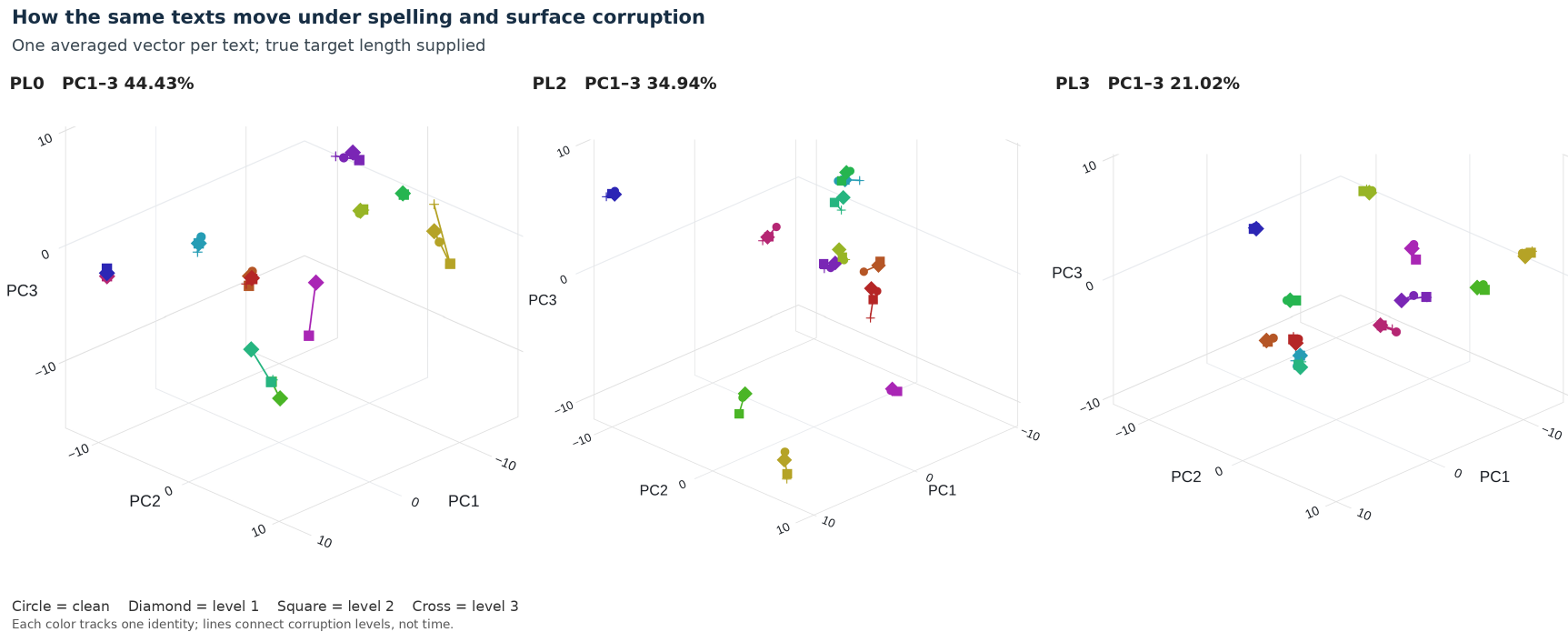}
\caption{Following the same identities through natural corruption. Each panel shows 12 texts, one color per text. Follow a line from the clean circle through the diamond, square, and cross at corruption levels 1, 2, and 3. The displayed states are averaged text vectors with true target lengths supplied. Axes are fitted on clean states separately for each model. Some PL0 views stay close even though full-gallery retrieval is poor in Figure~\ref{fig:pl-evidence}; visual proximity must therefore be read together with identity access. Percentages state how much clean variance the three displayed axes retain.}
\label{fig:a-identities}
\end{figure}

\begin{figure}[t]
\centering
\includegraphics[width=.90\linewidth]{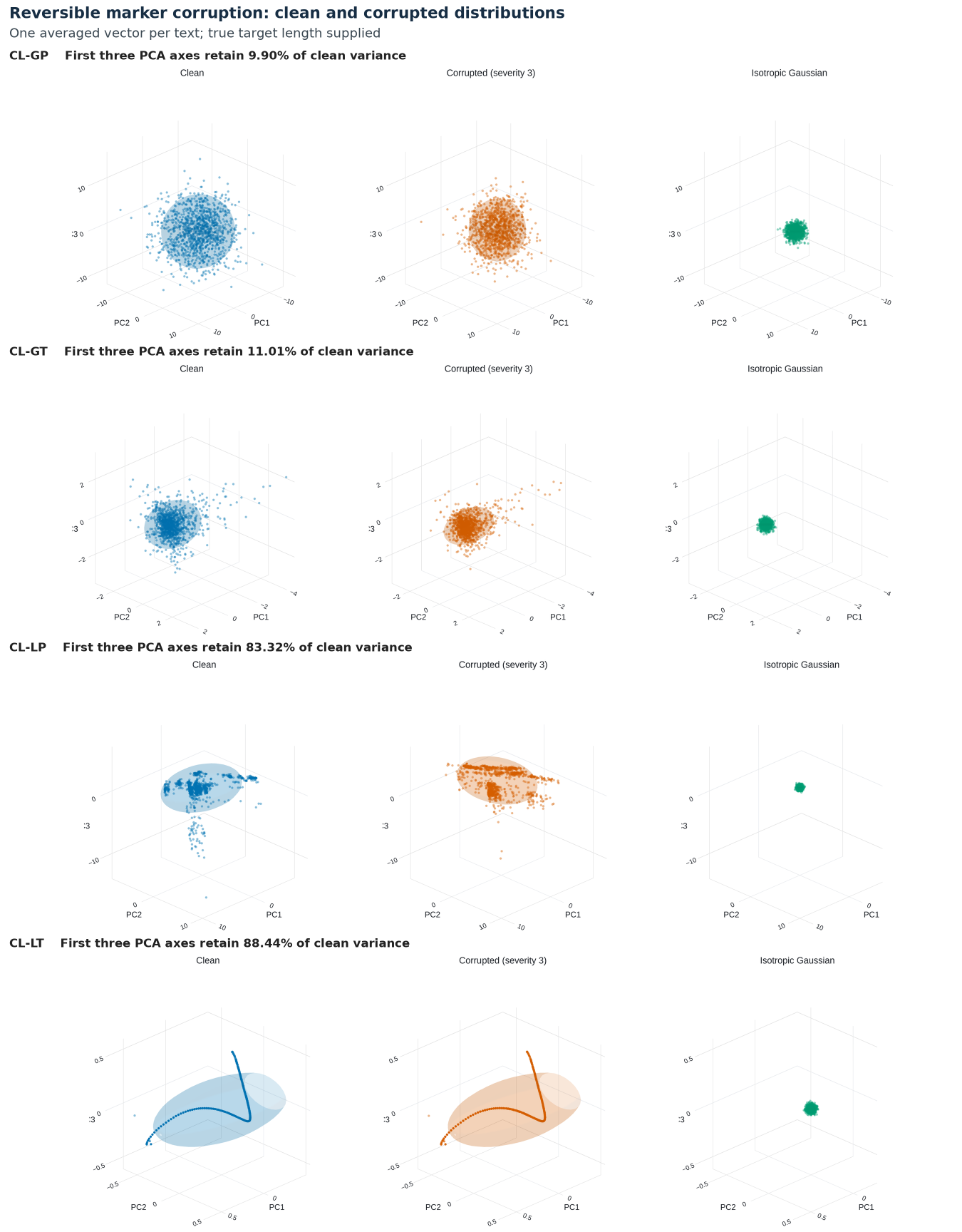}
\caption{What the geometry diagnostic sees in the four CL configurations. Rows identify the models; columns show clean text, reversible corruption at level 3, and an isotropic Gaussian reference. Each point is one averaged text vector; true target lengths are supplied. The same clean PCA axes and scale apply across a row. CL-GP/GT spread most variance beyond the three displayed axes, whereas CL-LP/LT concentrate it in this subspace. Compare these clouds with retrieval in Figure~\ref{fig:cl-length}, rather than reading a compact cloud as evidence of content preservation. Axes are fitted to 4,096 clean examples; each cloud displays 1,200 points. Surfaces span two standard deviations, not confidence regions.}
\label{fig:e-distributions}
\end{figure}
\clearpage
\paragraph{How to read the grids.}
A distribution row compares clean states, severity-3 states, and the supplied standard isotropic Gaussian reference. The clean PCA basis, coordinate units, and limits are fixed within the row. Each paired panel instead follows 12 individual canonical identities through clean and severity levels 1--3. A color identifies a target; a circle marks clean, a diamond level 1, a square level 2, and a cross level 3. The connecting lines show a change across corruption levels, not temporal dynamics. Inspect whether identities remain separated as well as how far their views move. Full-gallery retrieval measures the corresponding identity access beyond these selected examples and three coordinates.

The first three principal components retain a model-dependent fraction of variance. For example, the clean pooled CL-LP and CL-LT projections retain 83.32\% and 88.44\%, while the corresponding token projections retain 59.56\% and 37.50\%. The grids therefore expose different subspaces and scales, not a shared coordinate system for measuring cross-model distances. Table~\ref{tab:visual-subset} gives the exact subset metrics needed to interpret them.
\begin{table}[!htbp]
\centering\small\setlength{\tabcolsep}{3pt}\renewcommand{\arraystretch}{1.12}
\caption{Clean 4,096-item visualization subset, oracle length. The dimensions $d_{90}$ and $d_{95}$ are the smallest numbers of principal components explaining 90\% and 95\% of variance. These subset ranks need not equal the full-cohort ranks.}
\label{tab:visual-subset}
\begin{tabular*}{\linewidth}{@{\extracolsep{\fill}}llrrrr@{}}
\toprule
\thead{Model} & \thead{Representation} & \thead{Effective rank} & \thead{$d_{90}$} & \thead{$d_{95}$} & \thead{PC1--3 (\%)} \\
\midrule
PL0 & pooled & 15.62 & 15 & 18 & 44.43 \\
PL0 & token & 15.63 & 15 & 18 & 44.43 \\
PL2 & pooled & 15.54 & 12 & 14 & 34.94 \\
PL2 & token & 34.22 & 31 & 48 & 25.55 \\
PL3 & pooled & 35.49 & 29 & 37 & 21.02 \\
PL3 & token & 65.41 & 58 & 85 & 15.44 \\
CL-GP & pooled & 54.09 & 41 & 52 & 9.90 \\
CL-GP & token & 42.53 & 34 & 46 & 17.53 \\
CL-GT & pooled & 108.23 & 97 & 131 & 11.01 \\
CL-GT & token & 164.74 & 135 & 177 & 5.46 \\
CL-LP & pooled & 4.65 & 5 & 6 & 83.32 \\
CL-LP & token & 14.25 & 20 & 35 & 59.56 \\
CL-LT & pooled & 3.59 & 4 & 6 & 88.44 \\
CL-LT & token & 28.28 & 32 & 44 & 37.50 \\
\bottomrule
\end{tabular*}
\end{table}

\begin{table}[!htbp]
\centering\small\setlength{\tabcolsep}{3pt}\renewcommand{\arraystretch}{1.12}
\caption{Coverage of the complete accompanying visual atlas. Each registered source PNG appears on its own landscape page. The panel index supplies page numbers, original paths, notes, and SHA-256 hashes.}
\label{tab:atlas-coverage}
\begin{tabular*}{\linewidth}{@{\extracolsep{\fill}}lrrr@{}}
\toprule
\thead{Diagnostic} & \thead{PL} & \thead{CL} & \thead{Total} \\
\midrule
Full-cohort summary charts & 5 & 10 & 15 \\
Full-cohort covariance spectra & 2 & 3 & 5 \\
Clean-subset spectra & 6 & 8 & 14 \\
3D distributions & 6 & 16 & 22 \\
Paired 3D trajectories & 6 & 16 & 22 \\
Projected Gaussian Q--Q plots & 6 & 16 & 22 \\
Positive/random-foil cosines & 3 & 8 & 11 \\
Random-foil margin ECDFs & 3 & 8 & 11 \\
\midrule
Total & 37 & 85 & 122 \\
\bottomrule
\end{tabular*}
\end{table}

\paragraph{Data and reproducibility.}
The source package includes all 122 registered PNGs, their render and selection manifests, the clean-subset JSON metrics, the full-cohort CSV summaries, and the paired uncertainty summaries. The CSV hashes match the source identities in both supplied geometry manifests. Display names use PL/CL; original checkpoint keys remain in machine-readable files for traceability. The archives supply rendered feature plots and feature hashes, but no latent arrays or per-point PCA coordinates. The included scripts regenerate compositions and CSV-based figures; recomputing feature projections requires the original arrays. The atlas includes all supplied spectral, Gaussian-projection, pairwise-score, and centering panels listed in Table~\ref{tab:atlas-coverage}.
\clearpage
\begin{figure}[!ht]
\centering
\includegraphics[width=.94\linewidth,height=.82\textheight,keepaspectratio]{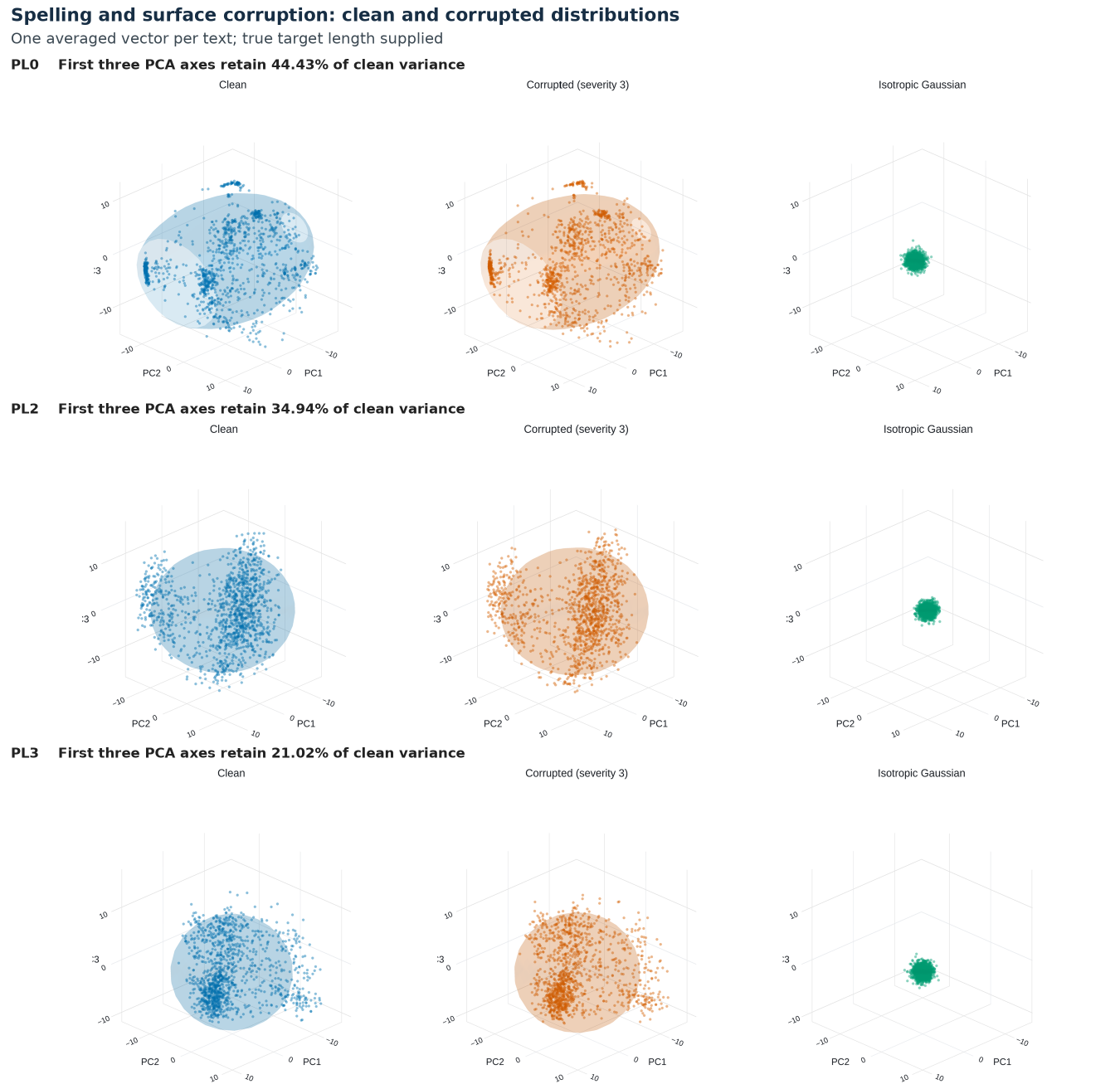}
\caption{PL natural distributions of pooled states with oracle length. Each row shows clean, severity 3, and the isotropic Gaussian reference in the same clean PCA coordinates. All supplied models are included. Clouds contain 1,200 points; surfaces span two standard deviations and are not confidence regions.}
\label{fig:PL-natural-pooled-distribution}
\end{figure}
\clearpage
\begin{figure}[!ht]
\centering
\includegraphics[width=.94\linewidth,height=.82\textheight,keepaspectratio]{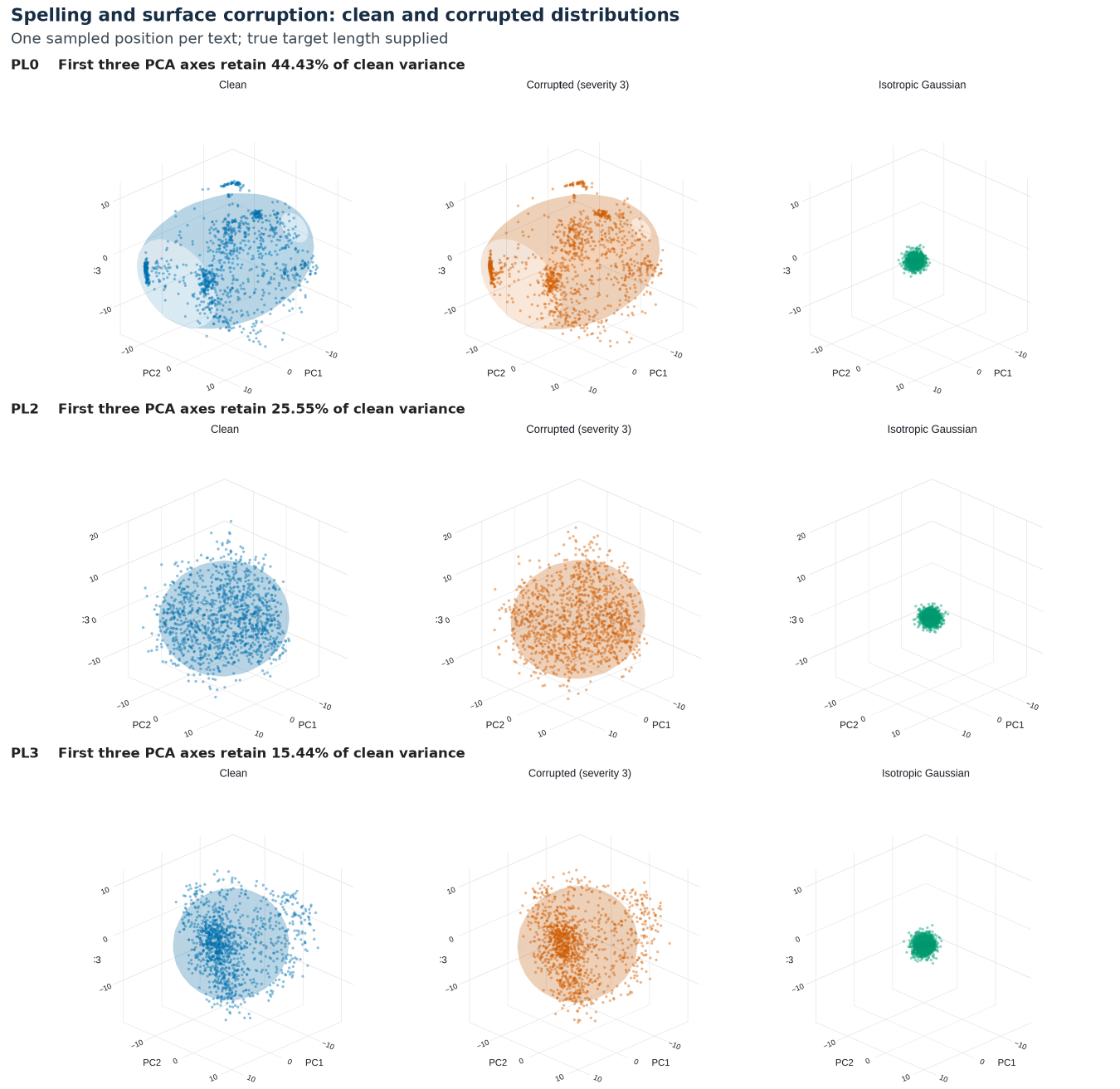}
\caption{PL natural distributions of token states with oracle length. Each row shows clean, severity 3, and the isotropic Gaussian reference in the same clean PCA coordinates. All supplied models are included. Clouds contain 1,200 points; surfaces span two standard deviations and are not confidence regions.}
\label{fig:PL-natural-token-distribution}
\end{figure}
\clearpage
\begin{figure}[!ht]
\centering
\includegraphics[width=\linewidth,height=.80\textheight,keepaspectratio]{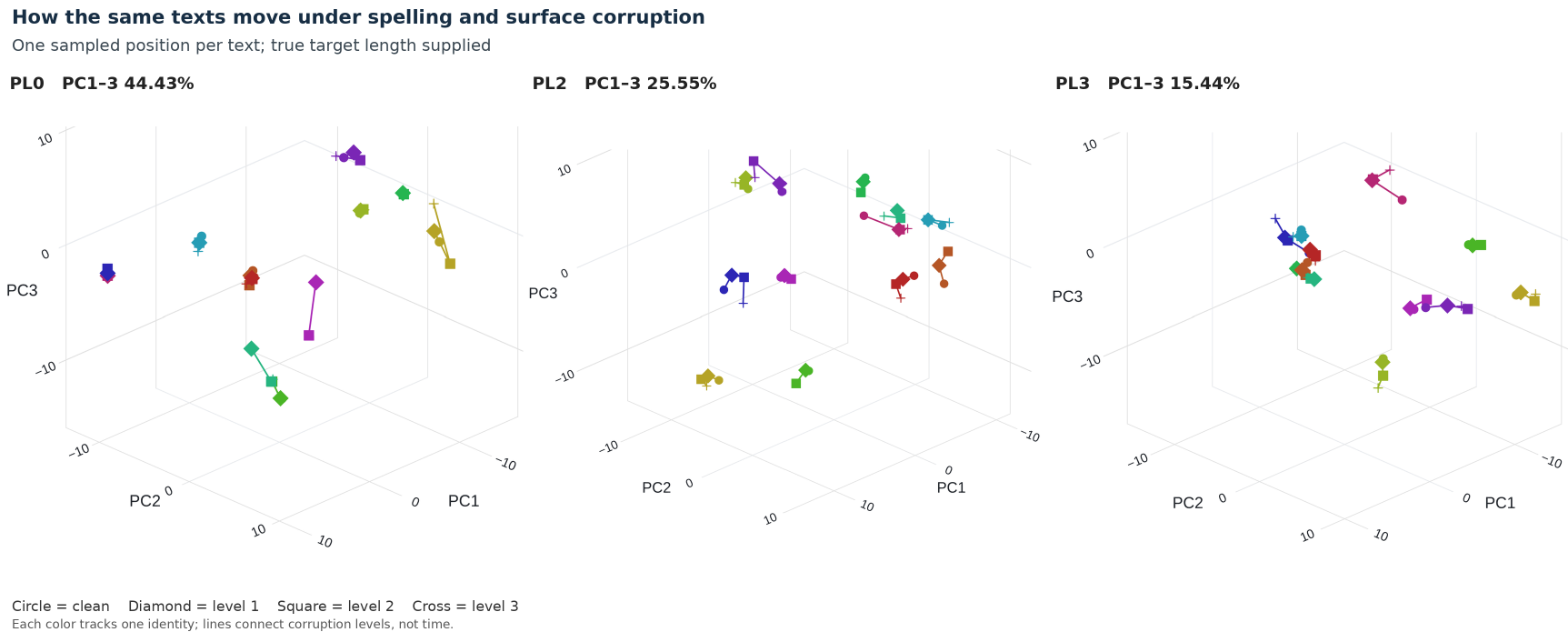}
\caption{Paired PL natural trajectories for token states with oracle length. Each panel follows 12 canonical identities through clean and severity levels 1--3 using one clean PCA basis per model. Colors identify targets and markers identify corruption levels. Differences in axes and captured variance require within-model interpretation.}
\label{fig:PL-natural-token-paired}
\end{figure}
\clearpage
\begin{figure}[!ht]
\centering
\includegraphics[width=\linewidth]{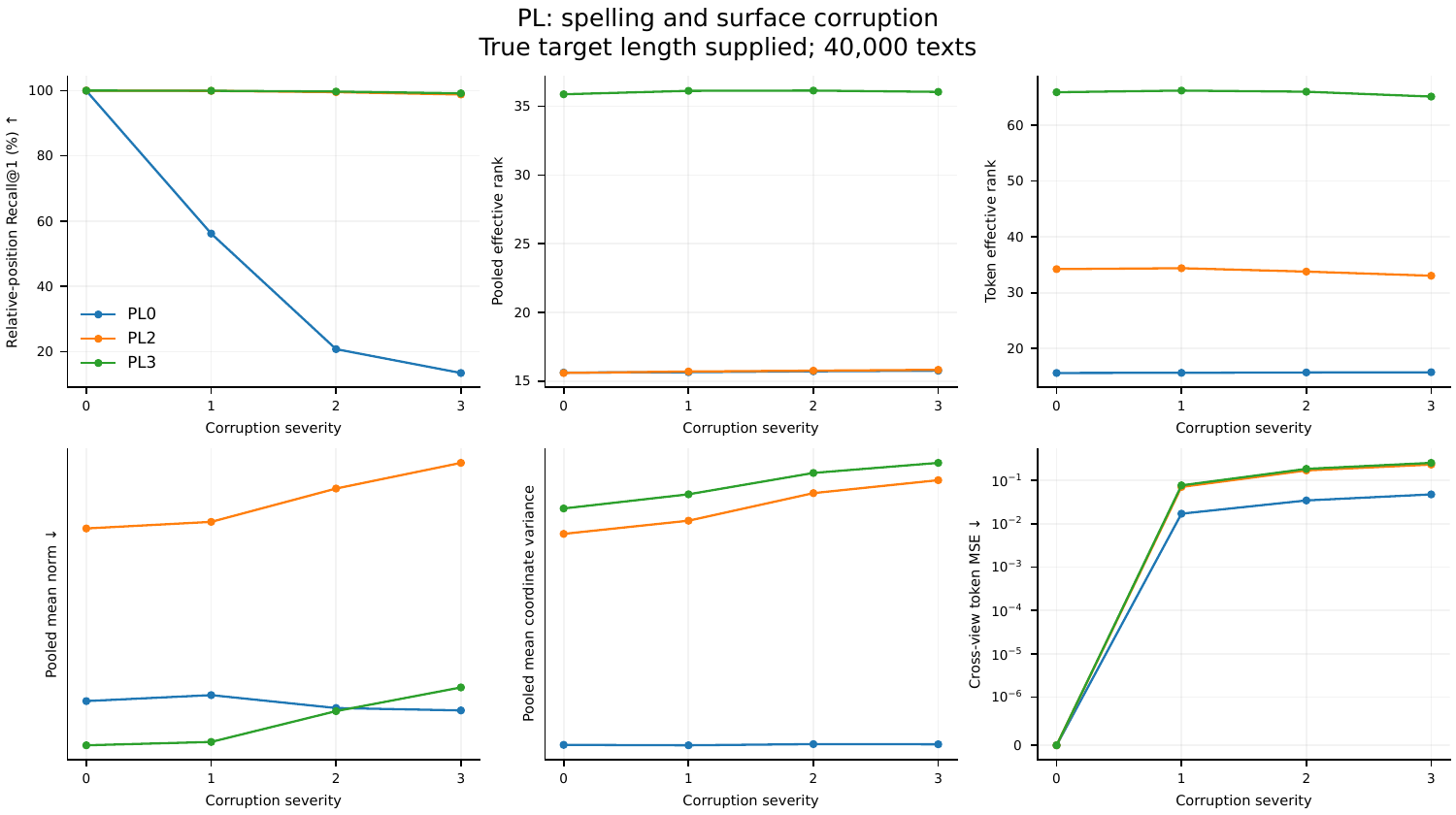}
\caption{Full-cohort PL natural diagnostics across severity, using oracle length. Retrieval, pooled and token effective rank, mean magnitude, variance scale, and cross-view error are shown separately. The bottom row uses a symmetric logarithmic scale with a linear region near zero. These curves use the original 40,000-item numeric summaries, independently of the smaller visualization subset.}
\label{fig:PL-natural-dashboard}
\end{figure}
\clearpage
\begin{figure}[!ht]
\centering
\includegraphics[width=.94\linewidth,height=.82\textheight,keepaspectratio]{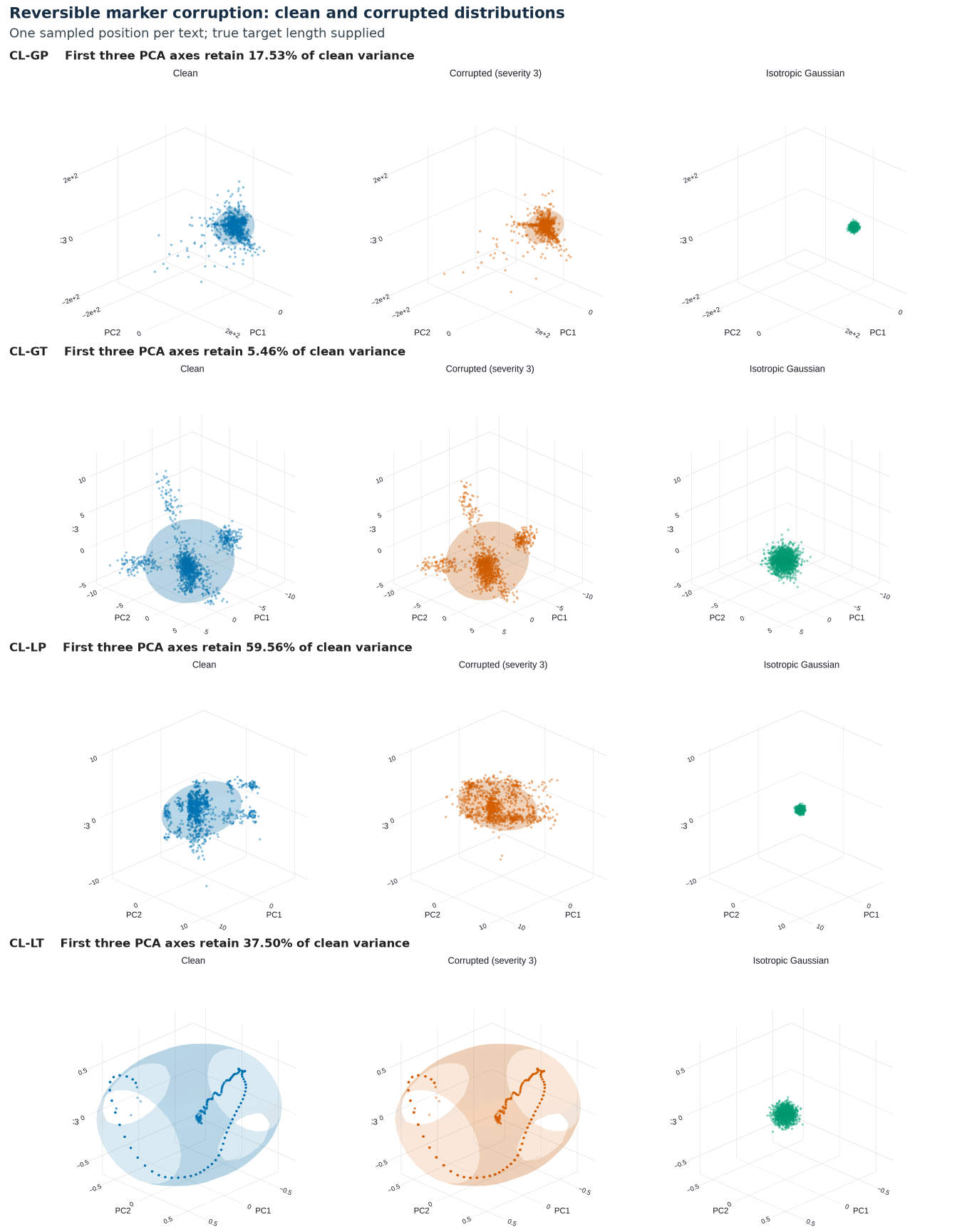}
\caption{CL recoverable distributions of token states with oracle length. Each row shows clean, severity 3, and the isotropic Gaussian reference in the same clean PCA coordinates. All supplied models are included. Clouds contain 1,200 points; surfaces span two standard deviations and are not confidence regions.}
\label{fig:CL-recoverable-token-distribution}
\end{figure}
\clearpage
\begin{figure}[!ht]
\centering
\includegraphics[width=\linewidth,height=.80\textheight,keepaspectratio]{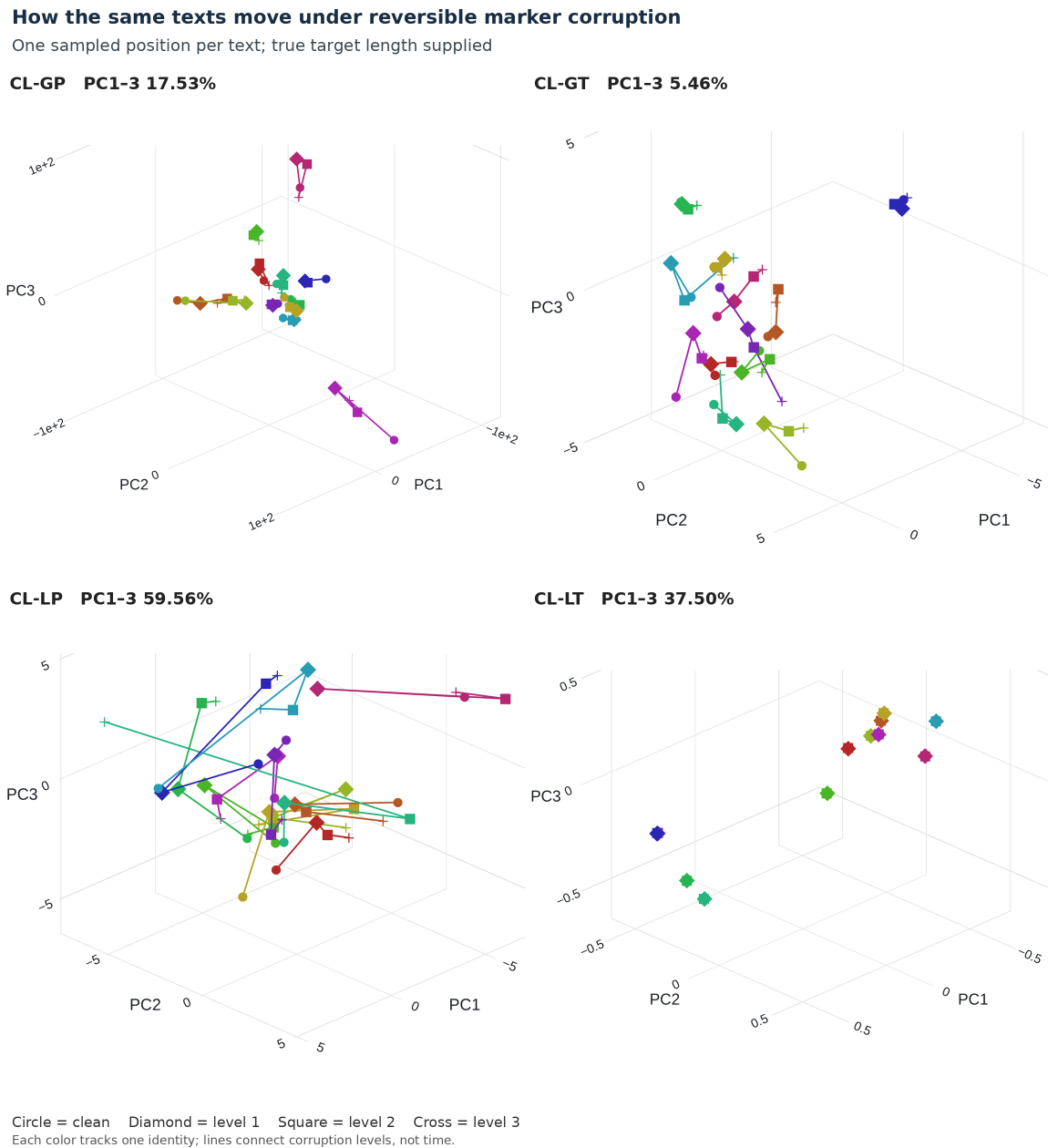}
\caption{Paired CL recoverable trajectories for token states with oracle length. Each panel follows 12 canonical identities through clean and severity levels 1--3 using one clean PCA basis per model. Colors identify targets and markers identify corruption levels. Differences in axes and captured variance require within-model interpretation.}
\label{fig:CL-recoverable-token-paired}
\end{figure}
\clearpage
\begin{figure}[!ht]
\centering
\includegraphics[width=\linewidth]{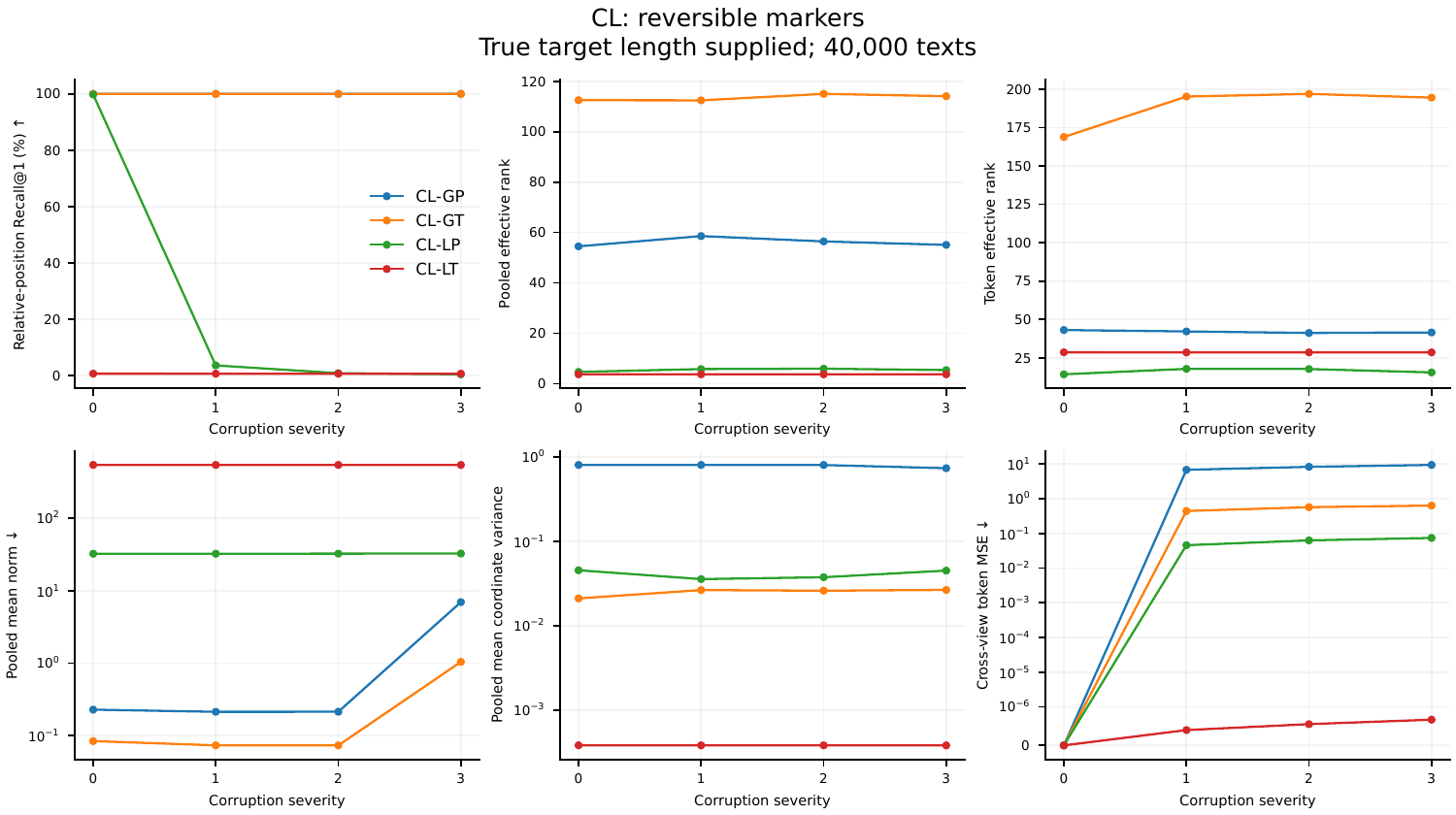}
\caption{Full-cohort CL recoverable diagnostics across severity, using oracle length. Retrieval, pooled and token effective rank, mean magnitude, variance scale, and cross-view error are shown separately. The bottom row uses a symmetric logarithmic scale with a linear region near zero. These curves use the original 40,000-item numeric summaries, independently of the smaller visualization subset.}
\label{fig:CL-recoverable-dashboard}
\end{figure}
\clearpage
\begin{figure}[!ht]
\centering
\includegraphics[width=.94\linewidth,height=.82\textheight,keepaspectratio]{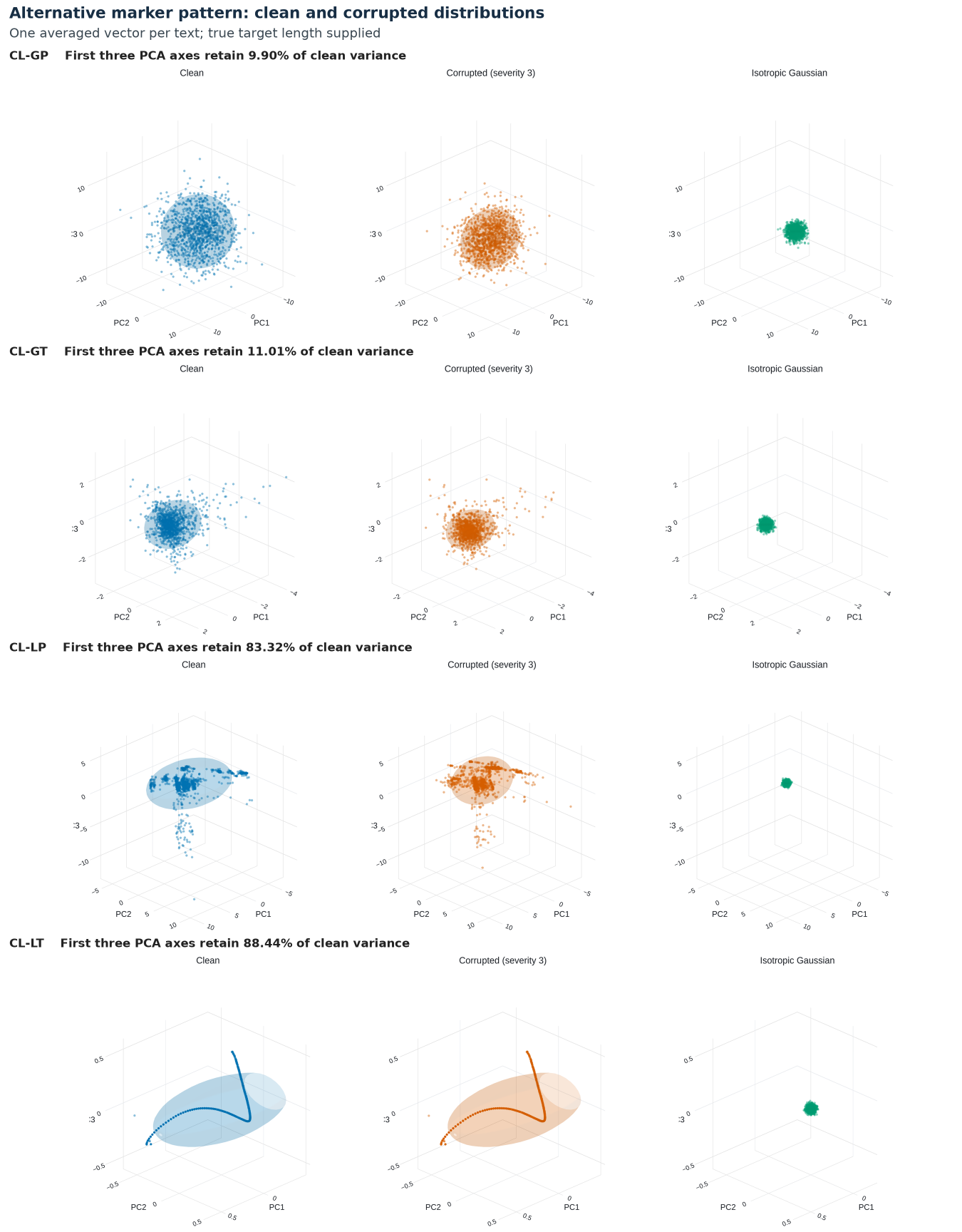}
\caption{CL heldout distributions of pooled states with oracle length. Each row shows clean, severity 3, and the isotropic Gaussian reference in the same clean PCA coordinates. All supplied models are included. Clouds contain 1,200 points; surfaces span two standard deviations and are not confidence regions.}
\label{fig:CL-heldout-pooled-distribution}
\end{figure}
\clearpage
\begin{figure}[!ht]
\centering
\includegraphics[width=\linewidth,height=.80\textheight,keepaspectratio]{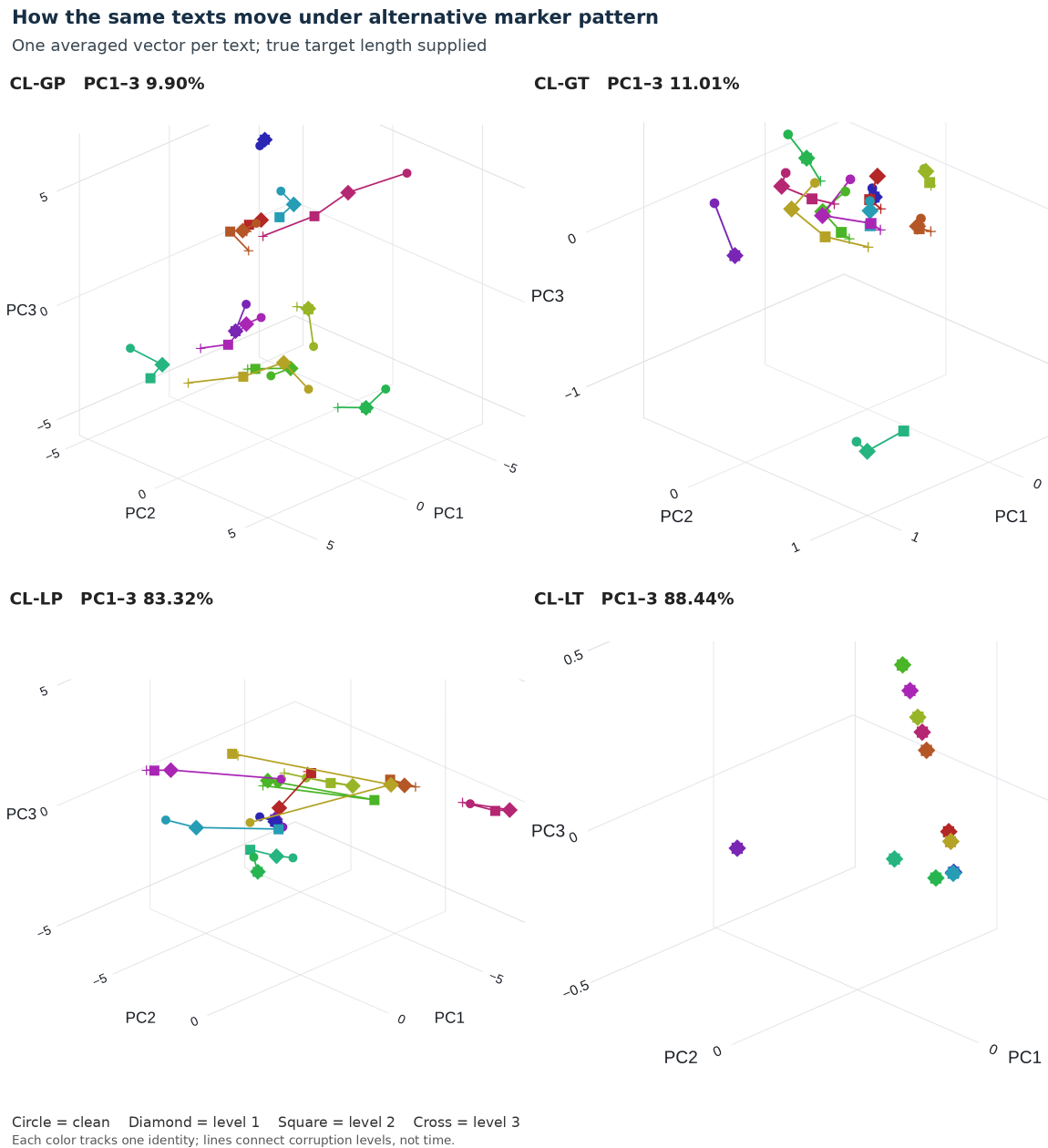}
\caption{Paired CL heldout trajectories for pooled states with oracle length. Each panel follows 12 canonical identities through clean and severity levels 1--3 using one clean PCA basis per model. Colors identify targets and markers identify corruption levels. Differences in axes and captured variance require within-model interpretation.}
\label{fig:CL-heldout-pooled-paired}
\end{figure}
\clearpage
\begin{figure}[!ht]
\centering
\includegraphics[width=.94\linewidth,height=.82\textheight,keepaspectratio]{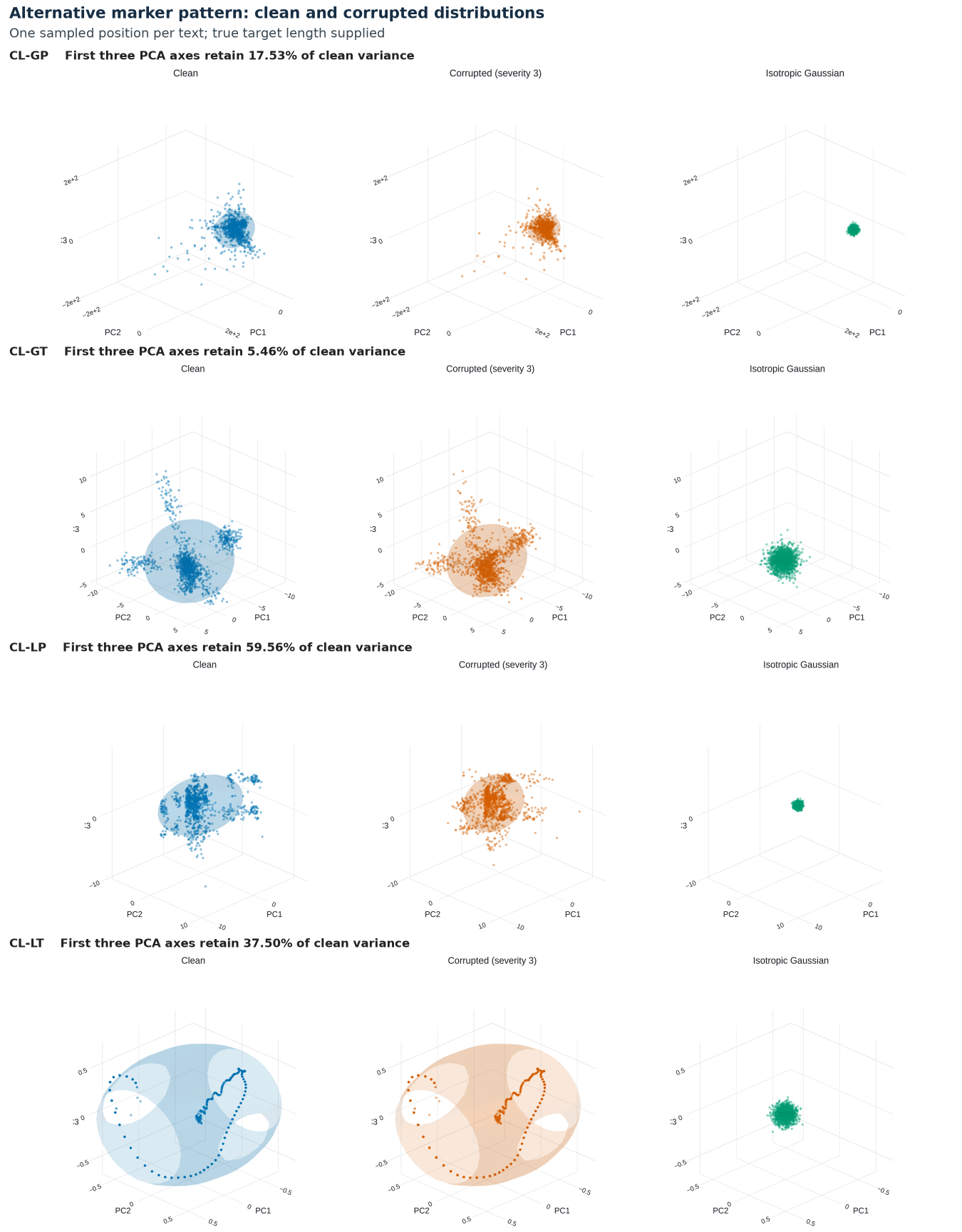}
\caption{CL heldout distributions of token states with oracle length. Each row shows clean, severity 3, and the isotropic Gaussian reference in the same clean PCA coordinates. All supplied models are included. Clouds contain 1,200 points; surfaces span two standard deviations and are not confidence regions.}
\label{fig:CL-heldout-token-distribution}
\end{figure}
\clearpage
\begin{figure}[!ht]
\centering
\includegraphics[width=\linewidth,height=.80\textheight,keepaspectratio]{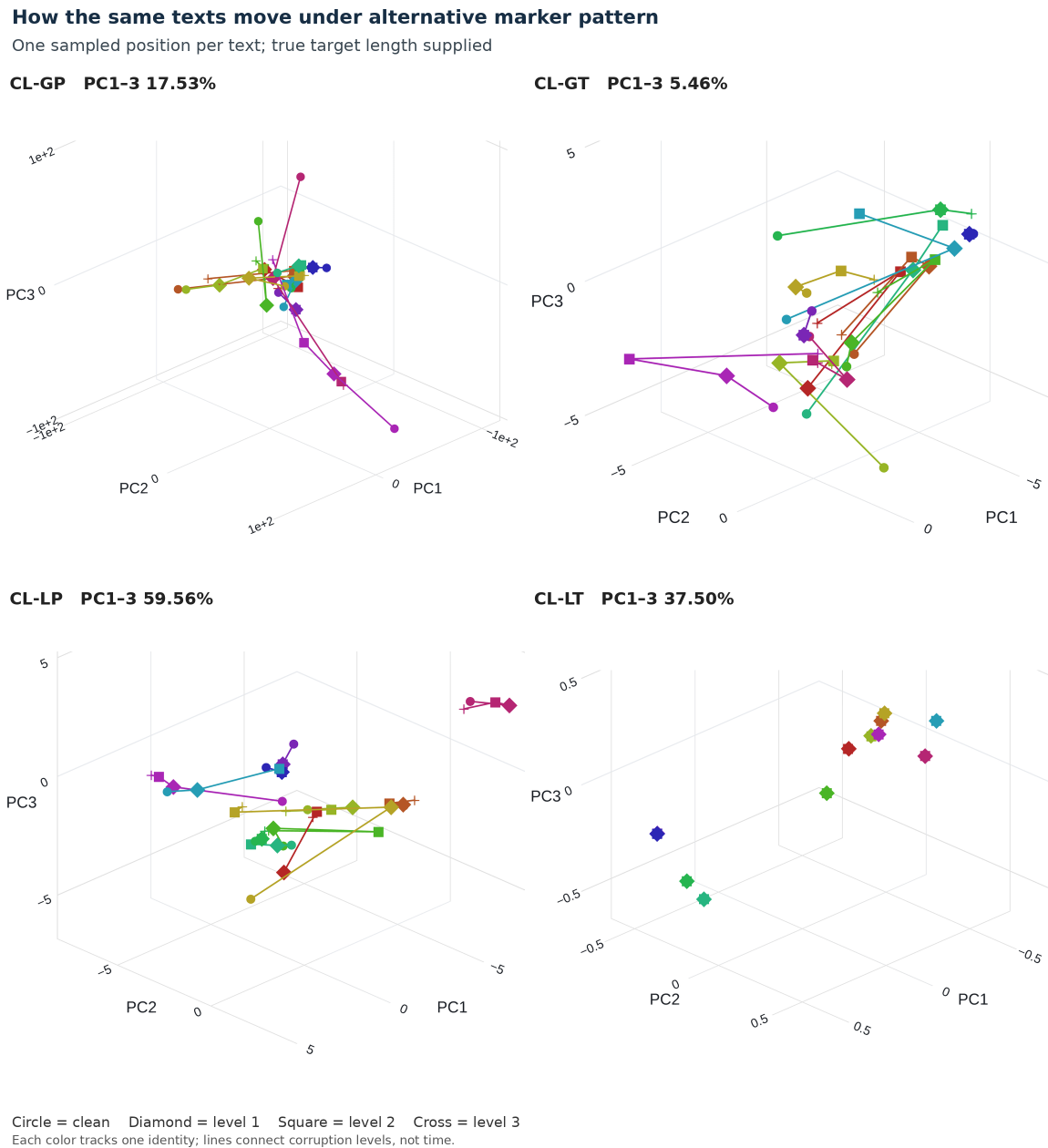}
\caption{Paired CL heldout trajectories for token states with oracle length. Each panel follows 12 canonical identities through clean and severity levels 1--3 using one clean PCA basis per model. Colors identify targets and markers identify corruption levels. Differences in axes and captured variance require within-model interpretation.}
\label{fig:CL-heldout-token-paired}
\end{figure}
\clearpage
\begin{figure}[!ht]
\centering
\includegraphics[width=\linewidth]{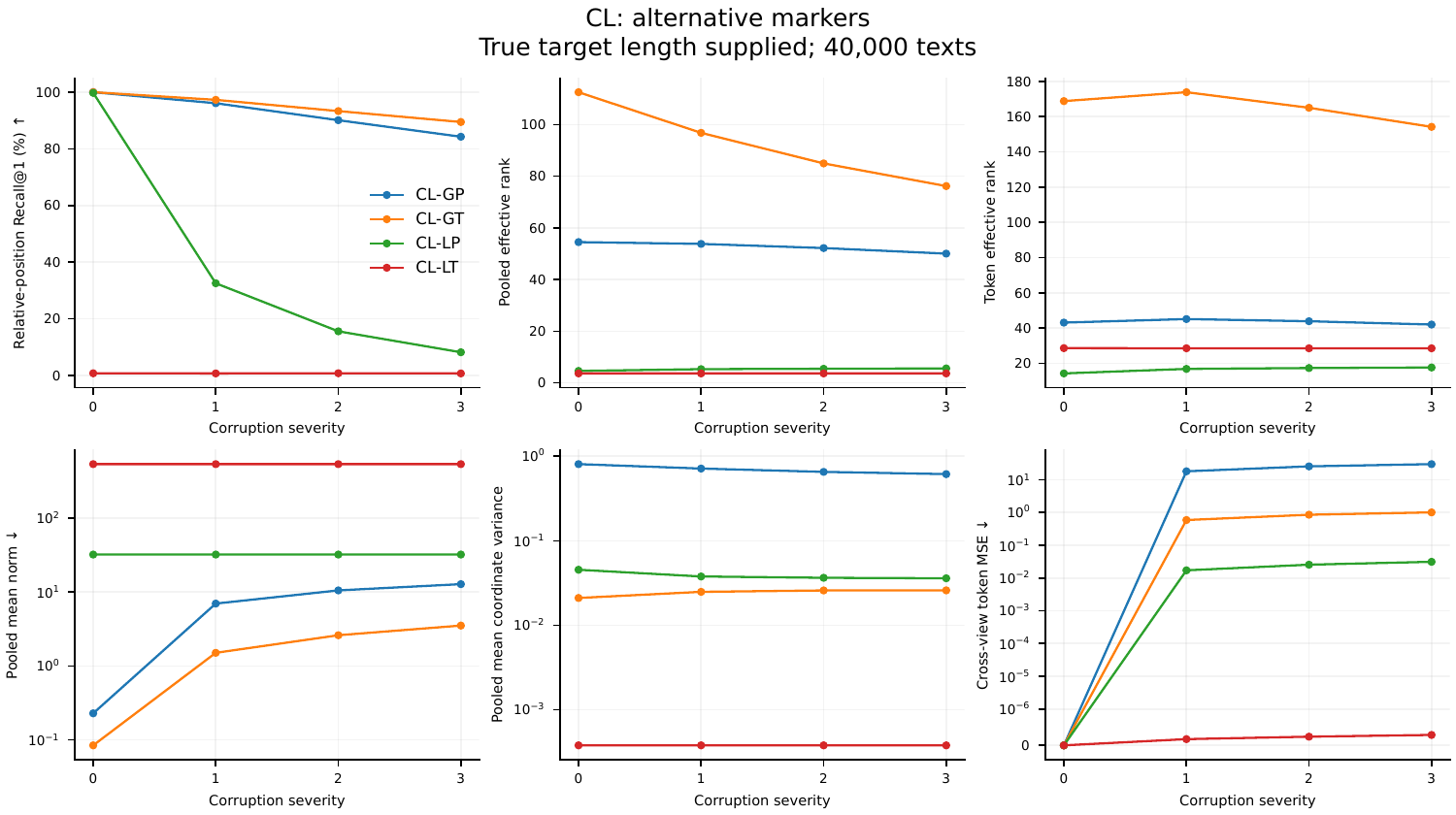}
\caption{Full-cohort CL heldout diagnostics across severity, using oracle length. Retrieval, pooled and token effective rank, mean magnitude, variance scale, and cross-view error are shown separately. The bottom row uses a symmetric logarithmic scale with a linear region near zero. These curves use the original 40,000-item numeric summaries, independently of the smaller visualization subset.}
\label{fig:CL-heldout-dashboard}
\end{figure}

\clearpage
\section{Downstream Speech Samples}
\label{app:tts-samples}
\subsection*{Sample 1: LJ033-0072}
\noindent\textbf{Reference/clean input:} I then stepped off of it and the officer picked it up in the middle and it bent so.\par
\noindent\textbf{Augmented input:} I theN stepped off of it and the officer picked IT up in the middle and it bent so.\par
\begin{figure}[!htbp]
\centering
\includegraphics[width=\linewidth]{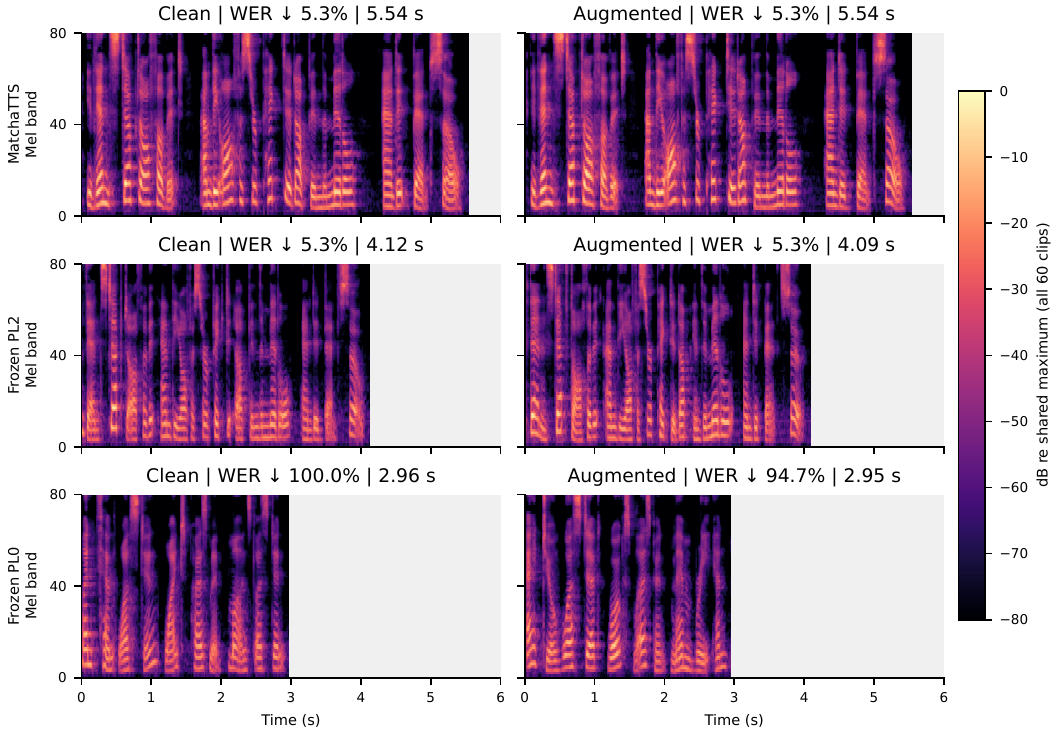}
\caption{LJ033-0072: all three systems under both input views. WER uses the clean reference; gray after a panel ends indicates no generated audio. Axes are shared within this example; the color reference is shared across all ten examples.}
\label{fig:tts-sample-01}
\end{figure}
\noindent\textbf{Whisper hypotheses (normalized, as scored).}
\par\smallskip
\begingroup\small
\noindent\begin{tabular}{@{}p{0.13\linewidth}p{0.405\linewidth}p{0.405\linewidth}@{}}
\toprule
\thead{System} & \thead{Clean audio} & \thead{Augmented audio} \\
\midrule
MatchaTTS & he then stepped off of it and the officer picked it up in the middle and it bent so & they then stepped off of it and the officer picked it up in the middle and it bent so \\
Frozen PL2 & then stepped off of it and the officer picked it up in the middle and it bent so & then stepped off of it and the officer picked it up in the middle and it bent so \\
Frozen PL0 & when kent was still white as mid months was still & at your worst at works placement memory and \\
\bottomrule
\end{tabular}\par\endgroup
\clearpage
\subsection*{Sample 2: LJ006-0145}
\noindent\textbf{Reference/clean input:} He could indulge in snuff if a snuff-taker,\par
\noindent\textbf{Augmented input:} He cou,d indulge ib snuff if a snuff-taker,\par
\begin{figure}[!htbp]
\centering
\includegraphics[width=\linewidth]{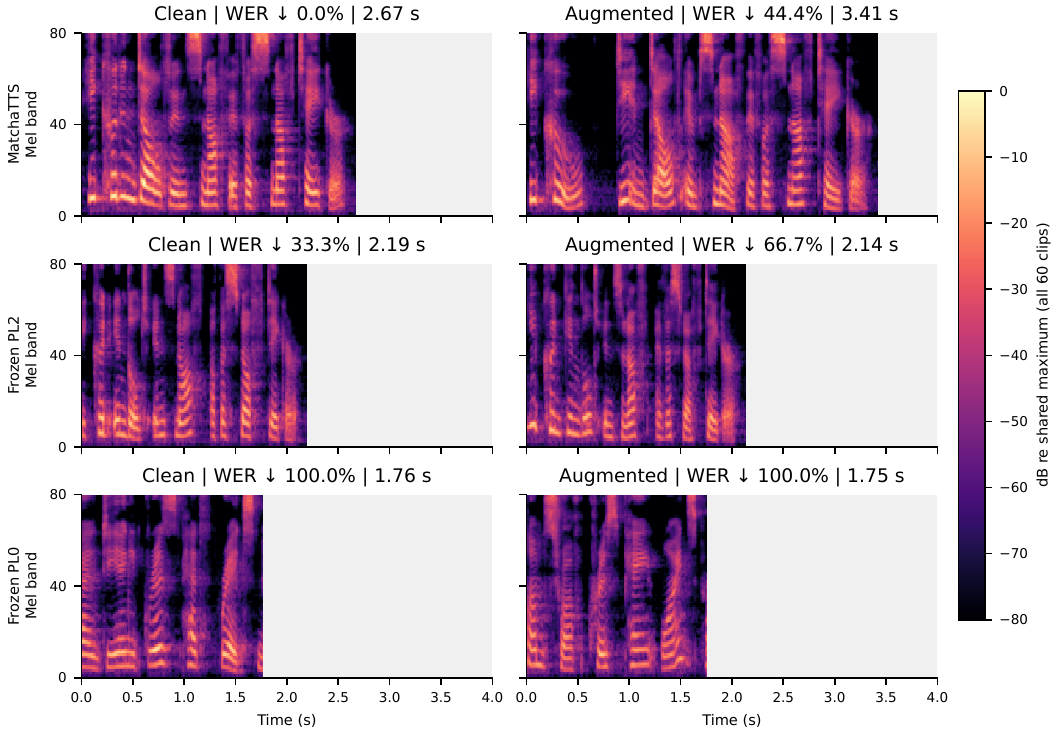}
\caption{LJ006-0145: all three systems under both input views. WER uses the clean reference; gray after a panel ends indicates no generated audio. Axes are shared within this example; the color reference is shared across all ten examples.}
\label{fig:tts-sample-02}
\end{figure}
\noindent\textbf{Whisper hypotheses (normalized, as scored).}
\par\smallskip
\begingroup\small
\noindent\begin{tabular}{@{}p{0.13\linewidth}p{0.405\linewidth}p{0.405\linewidth}@{}}
\toprule
\thead{System} & \thead{Clean audio} & \thead{Augmented audio} \\
\midrule
MatchaTTS & he could indulge in snuff if a snuff taker & he could be an adult and snuff if a snuff taker \\
Frozen PL2 & it could indulge in snuff of a snuff digger & you can kindle it in snuff at the snuff taker \\
Frozen PL0 & \emph{Empty hypothesis} & plums truffle crisp paint wine spray \\
\bottomrule
\end{tabular}\par\endgroup
\clearpage
\subsection*{Sample 3: LJ002-0239}
\noindent\textbf{Reference/clean input:} also for prisoners committed by the Admiralty Court.\par
\noindent\textbf{Augmented input:} also for prisoners committed by the ADmIralTY Court.\par
\begin{figure}[!htbp]
\centering
\includegraphics[width=\linewidth]{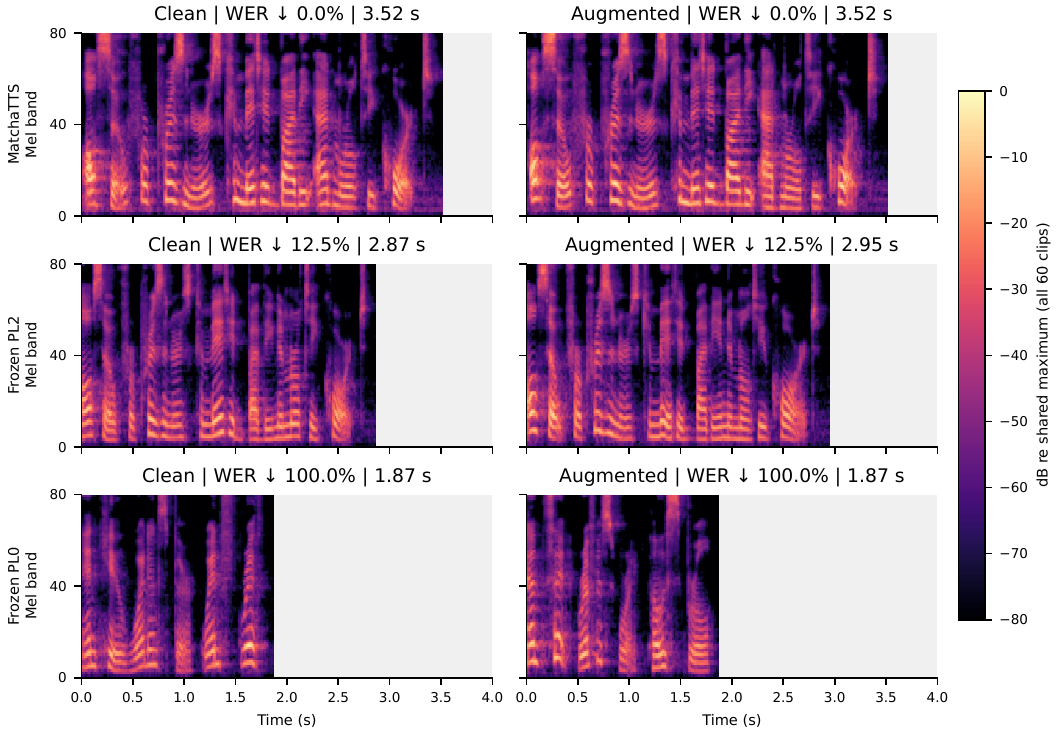}
\caption{LJ002-0239: all three systems under both input views. WER uses the clean reference; gray after a panel ends indicates no generated audio. Axes are shared within this example; the color reference is shared across all ten examples.}
\label{fig:tts-sample-03}
\end{figure}
\noindent\textbf{Whisper hypotheses (normalized, as scored).}
\par\smallskip
\begingroup\small
\noindent\begin{tabular}{@{}p{0.13\linewidth}p{0.405\linewidth}p{0.405\linewidth}@{}}
\toprule
\thead{System} & \thead{Clean audio} & \thead{Augmented audio} \\
\midrule
MatchaTTS & also for prisoners committed by the admiralty court & also for prisoners committed by the admiralty court \\
Frozen PL2 & also for prisoners committed by the mimralty court & also for prisoners committed by the indemnity court \\
Frozen PL0 & and he would not spare winfrey & that is it riffus roy post sprawl \\
\bottomrule
\end{tabular}\par\endgroup
\clearpage
\subsection*{Sample 4: LJ038-0296}
\noindent\textbf{Reference/clean input:} Marina Oswald also testified that her husband had used a bus to return home.\par
\noindent\textbf{Augmented input:} Marina Oswald also tstfd that her hsb had used a bus to return home.\par
\begin{figure}[!htbp]
\centering
\includegraphics[width=\linewidth]{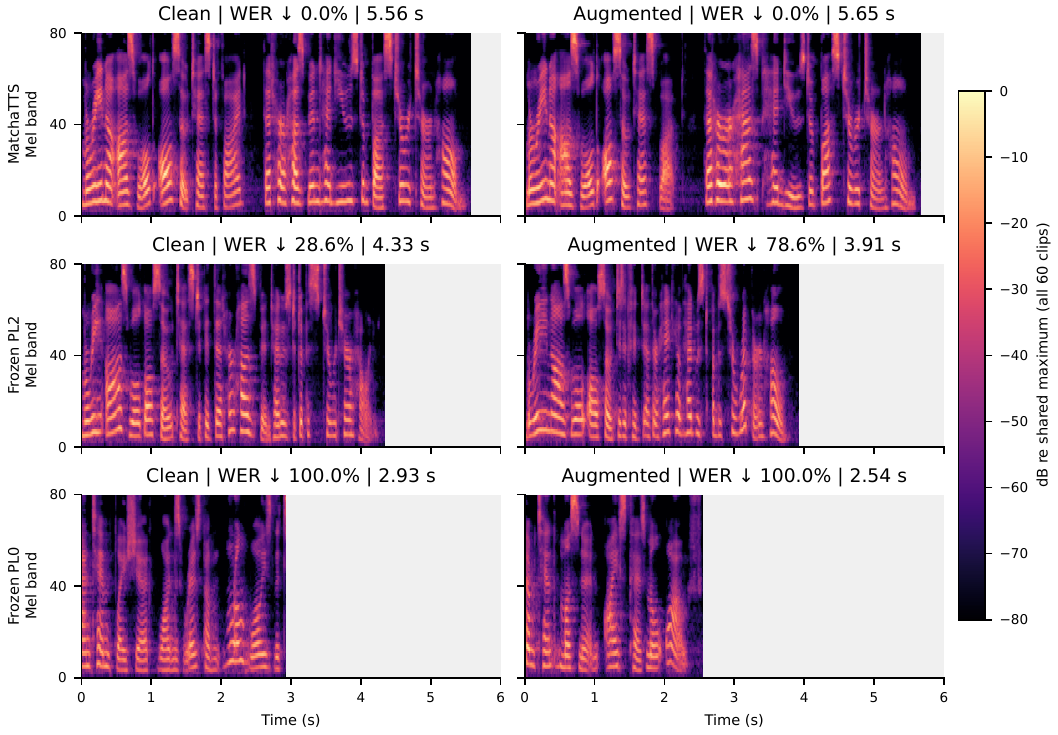}
\caption{LJ038-0296: all three systems under both input views. WER uses the clean reference; gray after a panel ends indicates no generated audio. Axes are shared within this example; the color reference is shared across all ten examples.}
\label{fig:tts-sample-04}
\end{figure}
\noindent\textbf{Whisper hypotheses (normalized, as scored).}
\par\smallskip
\begingroup\small
\noindent\begin{tabular}{@{}p{0.13\linewidth}p{0.405\linewidth}p{0.405\linewidth}@{}}
\toprule
\thead{System} & \thead{Clean audio} & \thead{Augmented audio} \\
\midrule
MatchaTTS & marina oswald also testified that her husband had used a bus to return home & marina oswald also testified that her husband had used a bus to return home \\
Frozen PL2 & marie oswald also testified that her husband had his deputist to return home & marine oswald also disappeared at the head of mister rippon is home \\
Frozen PL0 & and template shared one is rest of the world & the 2nd tent must know an ice cold mill of \\
\bottomrule
\end{tabular}\par\endgroup
\clearpage
\subsection*{Sample 5: LJ014-0142}
\noindent\textbf{Reference/clean input:} was strewn in front of the dock, and sprinkled it towards the bench with a contemptuous gesture.\par
\noindent\textbf{Augmented input:} was strewn in front of the dock, and sprinkled it tOWards the bench with a contemptuous gesture.\par
\begin{figure}[!htbp]
\centering
\includegraphics[width=\linewidth]{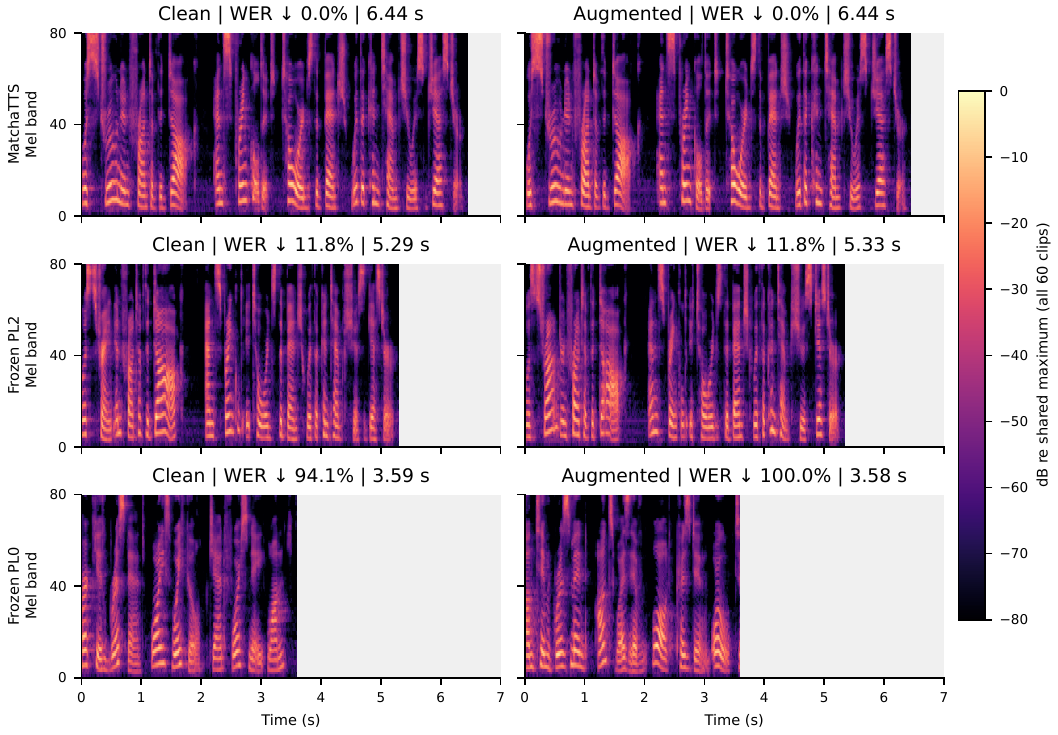}
\caption{LJ014-0142: all three systems under both input views. WER uses the clean reference; gray after a panel ends indicates no generated audio. Axes are shared within this example; the color reference is shared across all ten examples.}
\label{fig:tts-sample-05}
\end{figure}
\noindent\textbf{Whisper hypotheses (normalized, as scored).}
\par\smallskip
\begingroup\small
\noindent\begin{tabular}{@{}p{0.13\linewidth}p{0.405\linewidth}p{0.405\linewidth}@{}}
\toprule
\thead{System} & \thead{Clean audio} & \thead{Augmented audio} \\
\midrule
MatchaTTS & was strewn in front of the dock and sprinkled it towards the bench with a contemptuous gesture & was strewn in front of the dock and sprinkled it towards the bench with a contemptuous gesture \\
Frozen PL2 & was strained in front of the dock and sprinkled it towards the bench with a contemptuous gister & was surrounded in front of the dock and sprinkled it towards the bench with a contemptuous dister \\
Frozen PL0 & perkins brest and west waco jan 14 1981 & tim grismond antwesif walt christian orlton \\
\bottomrule
\end{tabular}\par\endgroup
\clearpage
\subsection*{Sample 6: LJ012-0270}
\noindent\textbf{Reference/clean input:} She had realized all her effects, and brought them with her furniture to Greenacre's lodgings. The two when married were to emigrate to Hudson's Bay.\par
\noindent\textbf{Augmented input:} She h ad realized all her effects, andbrought them with Herfurniture to Greenacres lodgings.The two when married were to emigrate to HudsonsBay.\par
\begin{figure}[!htbp]
\centering
\includegraphics[width=\linewidth]{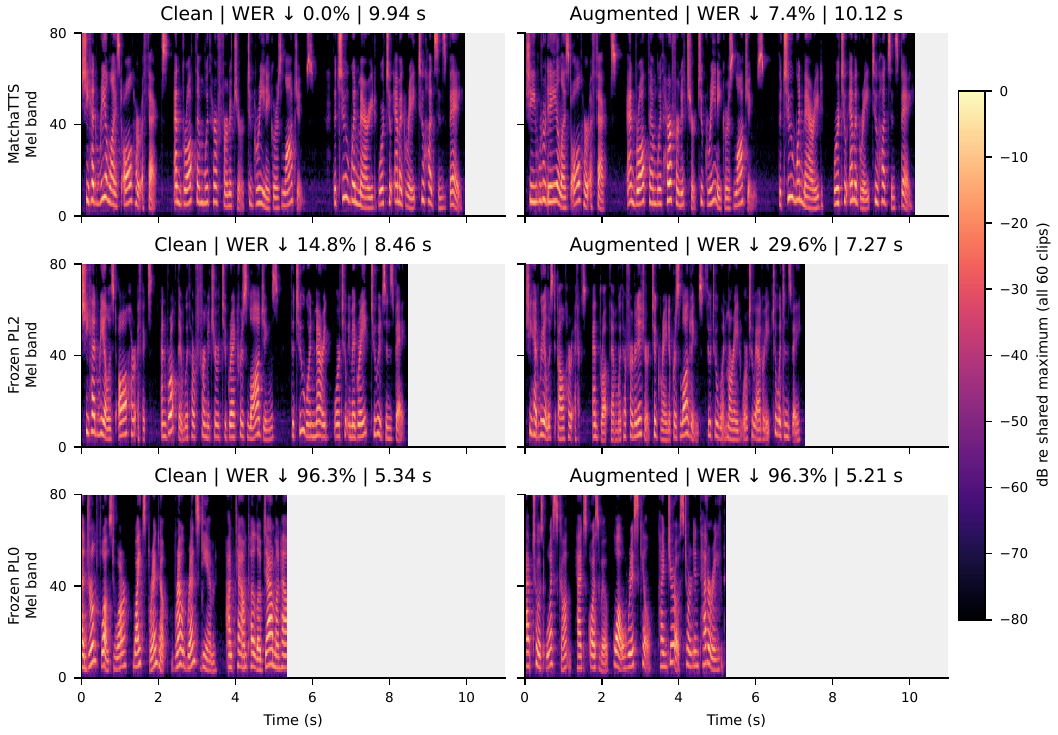}
\caption{LJ012-0270: all three systems under both input views. WER uses the clean reference; gray after a panel ends indicates no generated audio. Axes are shared within this example; the color reference is shared across all ten examples.}
\label{fig:tts-sample-06}
\end{figure}
\noindent\textbf{Whisper hypotheses (normalized, as scored).}
\par\smallskip
\begingroup\small
\noindent\begin{tabular}{@{}p{0.13\linewidth}p{0.405\linewidth}p{0.405\linewidth}@{}}
\toprule
\thead{System} & \thead{Clean audio} & \thead{Augmented audio} \\
\midrule
MatchaTTS & she had realized all her effects and brought them with her furniture to greenacre is lodgings the 2 when married were to emigrate to hudson is bay & she radialized all her effects and brought them with her furniture to greenacre is lodgings the 2 when married were to emigrate to hudson is bay \\
Frozen PL2 & she had realized all her ethics and brought them with her furniture to grecker is lodgings the 2 when married or to immigrate to hudson is bay & she had realized all her effects and brought them with her furniture to gray nipper is lodgings through to when moraine wore to agarty to witton is bay \\
Frozen PL0 & drum flows to our white sprint of brown rents dm on town pillow down on how & apto is wescon heads oswald while rhys kill hanjuro is turn in petare \\
\bottomrule
\end{tabular}\par\endgroup
\clearpage
\subsection*{Sample 7: LJ011-0213}
\noindent\textbf{Reference/clean input:} married the said Ellen Turner.\par
\noindent\textbf{Augmented input:} married the said eLlen Turner.\par
\begin{figure}[!htbp]
\centering
\includegraphics[width=\linewidth]{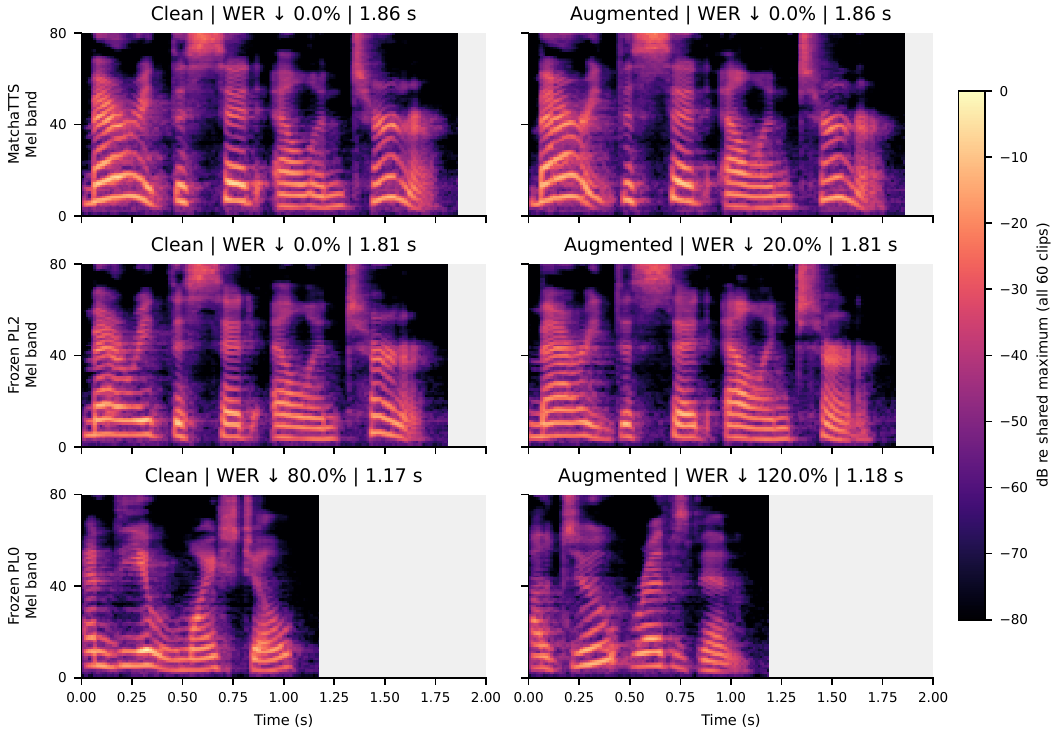}
\caption{LJ011-0213: all three systems under both input views. WER uses the clean reference; gray after a panel ends indicates no generated audio. Axes are shared within this example; the color reference is shared across all ten examples.}
\label{fig:tts-sample-07}
\end{figure}
\noindent\textbf{Whisper hypotheses (normalized, as scored).}
\par\smallskip
\begingroup\small
\noindent\begin{tabular}{@{}p{0.13\linewidth}p{0.405\linewidth}p{0.405\linewidth}@{}}
\toprule
\thead{System} & \thead{Clean audio} & \thead{Augmented audio} \\
\midrule
MatchaTTS & married the said ellen turner & married the said ellen turner \\
Frozen PL2 & married the said ellen turner & married the said alan turner \\
Frozen PL0 & and the moisture & i will do reverend is bim \\
\bottomrule
\end{tabular}\par\endgroup
\clearpage
\subsection*{Sample 8: LJ007-0211}
\noindent\textbf{Reference/clean input:} In our late visits, they say, "we have seen manifest indications of a retrograde movement in this respect,\par
\noindent\textbf{Augmented input:} In ourlate visits, they say, "we haveSEEN manifest indications of a retrograde movementin this rspct,\par
\begin{figure}[!htbp]
\centering
\includegraphics[width=\linewidth]{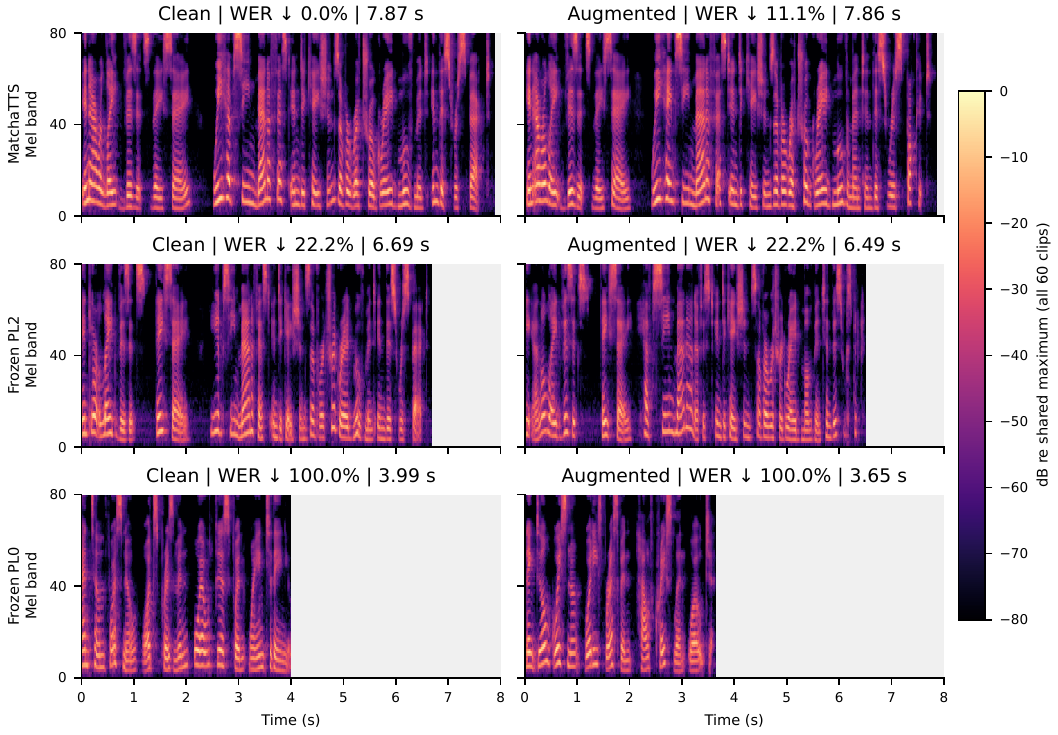}
\caption{LJ007-0211: all three systems under both input views. WER uses the clean reference; gray after a panel ends indicates no generated audio. Axes are shared within this example; the color reference is shared across all ten examples.}
\label{fig:tts-sample-08}
\end{figure}
\noindent\textbf{Whisper hypotheses (normalized, as scored).}
\par\smallskip
\begingroup\small
\noindent\begin{tabular}{@{}p{0.13\linewidth}p{0.405\linewidth}p{0.405\linewidth}@{}}
\toprule
\thead{System} & \thead{Clean audio} & \thead{Augmented audio} \\
\midrule
MatchaTTS & in our late visits they say we have seen manifest indications of a retrograde movement in this respect & in our late visits they say we have seen manifest indications of a retrograde movement in this wrist pocket \\
Frozen PL2 & in your late visits they say you have seen manifest indications of retrigarid movement in this respect & the annalite visits they say have seen manifest indications of a retrograde movement in this respect \\
Frozen PL0 & anthem for snow what is prison well this point went to the new & thank you \\
\bottomrule
\end{tabular}\par\endgroup
\clearpage
\subsection*{Sample 9: LJ038-0221}
\noindent\textbf{Reference/clean input:} and the second month's rent was paid on either April two or April three.\par
\noindent\textbf{Augmented input:} and the second months rent was paid on eITHer April two or April three.\par
\begin{figure}[!htbp]
\centering
\includegraphics[width=\linewidth]{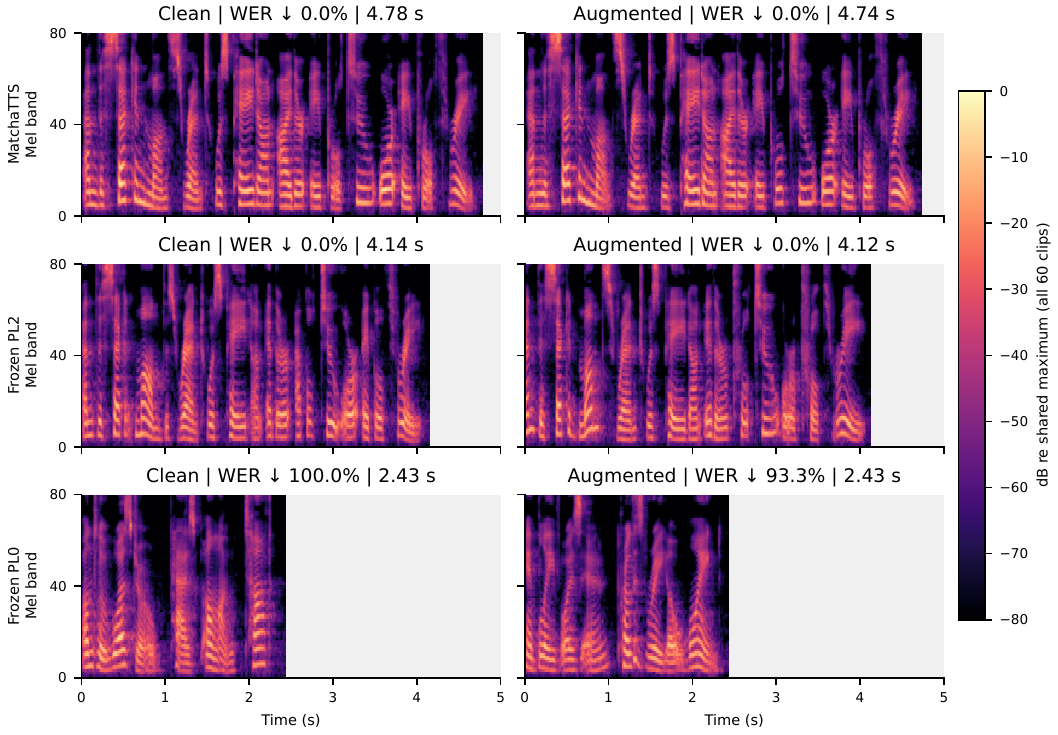}
\caption{LJ038-0221: all three systems under both input views. WER uses the clean reference; gray after a panel ends indicates no generated audio. Axes are shared within this example; the color reference is shared across all ten examples.}
\label{fig:tts-sample-09}
\end{figure}
\noindent\textbf{Whisper hypotheses (normalized, as scored).}
\par\smallskip
\begingroup\small
\noindent\begin{tabular}{@{}p{0.13\linewidth}p{0.405\linewidth}p{0.405\linewidth}@{}}
\toprule
\thead{System} & \thead{Clean audio} & \thead{Augmented audio} \\
\midrule
MatchaTTS & and the 2nd month is rent was paid on either april 2 or april 3 & and the 2nd month is rent was paid on either april 2 or april 3 \\
Frozen PL2 & and the 2nd month is rent was paid on either april 2 or april 3 & and the 2nd month is rent was paid on either april 2 or april 3 \\
Frozen PL0 & uncle wuzdrow has a long tot & can not believe what is in professor regal is mind \\
\bottomrule
\end{tabular}\par\endgroup
\clearpage
\subsection*{Sample 10: LJ006-0020}
\noindent\textbf{Reference/clean input:} William Crawford had been one of the promoters and managers of the Philanthropic Society's farm school.\par
\noindent\textbf{Augmented input:} William crawfrd had been one OF the promoters and managers of the Philanthropic Societys farm sch.\par
\begin{figure}[!htbp]
\centering
\includegraphics[width=\linewidth]{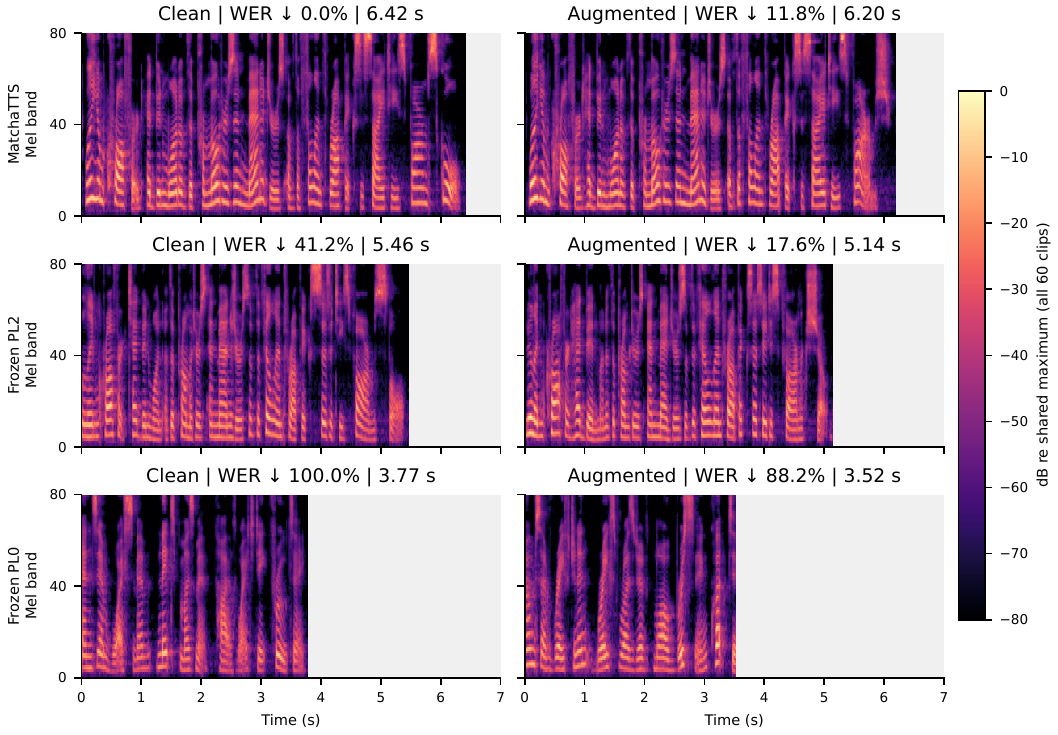}
\caption{LJ006-0020: all three systems under both input views. WER uses the clean reference; gray after a panel ends indicates no generated audio. Axes are shared within this example; the color reference is shared across all ten examples.}
\label{fig:tts-sample-10}
\end{figure}
\noindent\textbf{Whisper hypotheses (normalized, as scored).}
\par\smallskip
\begingroup\small
\noindent\begin{tabular}{@{}p{0.13\linewidth}p{0.405\linewidth}p{0.405\linewidth}@{}}
\toprule
\thead{System} & \thead{Clean audio} & \thead{Augmented audio} \\
\midrule
MatchaTTS & william crawford had been one of the promoters and managers of the philanthropic society is farm school & william crawford had been one of the promoters and managers of the philanthropic society is farms \\
Frozen PL2 & lynn crawford tedbin one of the promoters and managers of the philanthropic societies farms & lillian crawford had been one of the prompters and managers of the philanthropic society is farm show \\
Frozen PL0 & and then we met my husband kyle is wife dave fruits & of gravestone and rice residue while bristling cut to \\
\bottomrule
\end{tabular}\par\endgroup

\section{Computational Properties, Training Recipe, and Responsible Asset Use}
\label{app:reproducibility}
This appendix describes computational cost, the training configuration, and the source-code workflow. It also records the provenance and reuse conditions of the datasets and software used in the experiments.

\subsection{Computational Properties and Sample Budget}
\label{app:complexity}
Let $B$ be the number of sequences actually encoded per view, $V$ the number of views, $N$ the padded input length, $M$ the latent-canvas length, $d$ the width, $h$ the number of attention heads, and $L_e,L_d$ the encoder and decoder depths. Let $K=257$ be the output vocabulary size. The bounds describe dense operations in the supplied implementation. Additional sampled negatives are included in $B$; the sampler's nominal batch size need not equal this effective count.

A forward pass through the encoder, Gaussian resampling, decoder, and token head costs
\begin{align*}
O\!\left(BV\left[ L_e(Nd^2+N^2d)
 +L_d((N+M)d^2+(M^2+MN)d)
 +NMd+MdK\right]\right).
\end{align*}
The terms account for projection/feed-forward layers, encoder self-attention, decoder self/cross-attention, Gaussian resampling, and token logits. Backpropagation has the same asymptotic arithmetic order. The decoder produces all positions in parallel, but attention remains quadratic in sequence length. Gaussian resampling is evaluated even when an equal-length identity branch replaces its result, so this computation is included in the bound.

The parameter count scales as $O((L_e+L_d)d^2+Kd)$, apart from positional buffers; gradients and AdamW states require the same asymptotic parameter storage. A conservative dense training-activation bound is
\begin{align*}
O\!\left(BV\left[(L_eN+L_d(M+N))d
 +h\{L_eN^2+L_d(M^2+MN)\}+MN+MK\right]\right).
\end{align*}
Attention kernels can reduce materialized score storage without changing the dense arithmetic bound. Resampling processes target positions in chunks of at most 256, bounding the temporary weight block by $O(BV\min(M,256)N)$. The training bound also accounts for concatenated outputs and tensors retained for backward.

For $P$ random projections, VISReg projection and per-projection sorting cost $O(V(BdP+PB\log B))$, in addition to centering and scale terms. With projection chunks of size $P_c$, the forward workspace is bounded by $O(VBd+VBP_c+dP_c)$, excluding backward state. The CL defaults use $P=P_c=256$. Token regularization samples one position per sentence, so this branch operates on $B$ sampled states per view rather than all $BM$ positions. Pooling and dense cross-view agreement still require $O(BVMd)$ operations.

For $n_q$ queries and $n_g$ candidates, the exact positional retrieval diagnostic costs $O(n_qn_gRd)$, where $R=16$ sampled relative positions give $Rd=8{,}192$ coordinates. A query block of size $Q$ needs $O(Qn_g)$ score storage, in addition to stored features of order $O((n_q+n_g)Rd)$. The default $Q=64$ limits score workspace but does not remove the quadratic all-pairs cost. Saving full canvases instead requires $O(nMd)$ storage. Forming and decomposing a $d$-dimensional covariance matrix costs $O(nd^2+d^3)$.

Over $S$ optimizer steps, training costs $S$ times the per-step forward/backward cost, plus optimizer updates. PL and CL use a five-epoch budget and nominal training batch size 36; their samplers determine the corresponding number of optimizer steps. The prepared PL and CL training sets contain 8,635,993 and 8,656,553 rows, respectively; their split construction and duplicates are described above. Evaluation uses 40,000 unique targets and 1,024 disjoint calibration targets, while TTS uses 9,170 training and 3,930 validation utterances. These empirical budgets specify the scale of the experiments; the theoretical results concern information retention and population risk. Bootstrap intervals describe the sampled evaluation cohort and do not quantify variation across retraining seeds.

\subsection{Executable Defaults and Training Details}
\label{app:training-recipe}
The effective settings are obtained by merging the shared configuration with each variant's overrides, then applying unoverridden parser and MatchaTTS configuration defaults. All models use a nominal training batch size of 36. The accompanying code package contains the full \path{configs/experiment.json}, \path{provenance/parser_defaults.json}, the MatchaTTS configuration tree, pinned requirements, and a Slurm launcher that prints every command with \texttt{--dry-run}. Together, these files specify the full execution recipe.

\paragraph{PL.}
The launcher uses seed 42, width 512, eight heads, four encoder layers, two decoder layers, a 2,048-position cap, zero dropout, and resampling temperature 0.3. AdamW uses learning rate $5\times10^{-5}$, weight decay 0.01, betas $(0.9,0.98)$, 5\% warmup followed by cosine decay to 0.0005 of the initial rate, gradient clipping at 1, and enabled automatic mixed precision. Training lasts five epochs with no additional positive step cap. The nominal batch is 36, with eight workers, bucket factor 32, and prefetch factor 4. Each canonical item supplies three easy and two hard augmented views in addition to its clean view. The sampler requests six informal and sixteen negated examples, and the default unrelated pool has sixteen entries. The remaining shared objective settings are $n_{\rm pad}=4$, $\tau=0.5$, $\eta=0.1$, $\gamma=1$, $\alpha=0.5$, $\beta=0.3$, and $m_1=m_2=0.2$.

PL0 selects the latent objective with $\lambda=1$ and VISReg weight 0.3. PL1 uses batch 36 and eight unrelated entries. PL2 and PL3 set VISReg weights 0.1 and 0.5. PL4, PL5, PL6, and PL7 respectively set $\alpha$, $\beta$, $\eta$, and $m_2$ to zero. Checkpointing and validation occur every 2,000 steps; logging occurs every 50 steps. The probe interval is 4,000 steps with 400 probe epochs and 160 evaluation texts. The historical PL5 budget exception and the recorded PL0/PL2/PL3 checkpoint steps remain as stated in Appendix~\ref{app:evaluation}; the variant mapping identifies the corresponding experimental comparisons.

\paragraph{CL.}
The default training seed is 1; the data seed is 2027. The nominal batch is 36 with eight workers, and the width, heads, encoder/decoder depths, length cap, dropout, resampling temperature, learning rate, decay, warmup, gradient clip, and five-epoch budget match the settings above. CL uses oracle lengths and recoverable corruption. All four primary arms use VISReg weight 0.01; LP/LT use latent agreement and GP/GT use token grounding, with pooled/token regularization selected by the final letter. VISReg center, scale, and shape weights are one, with shape standard-deviation floor 0.1 and global views 0 and 2. The data preparation fractions are 0.90/0.05/0.05, the streaming shuffle buffer is 10,000, insertion probability is 0.1, and the maximum insertion run is two. The round-trip audit checks 4,096 accepted examples. Nonfinite batches follow the configured skip policy. Logging is every 50 steps, validation/checkpointing every 2,000, and monitoring every 4,000.

\paragraph{Speech and evaluation.}
The MatchaTTS recipe uses seed 42, Adam at $10^{-4}$ with zero weight decay, batch 36, twelve data workers, gradient clipping at 5, mixed bfloat16 precision, and 1,000 epochs. Its architecture and frozen-state interfaces are provided in the code and Appendix~\ref{app:tts}. Stored augmentation is selected with probability 0.7. Evaluation uses ten flow steps, temperature and length scale one, and Whisper large-v3 with float16 inference, English normalization, and beam size five. Text evaluation uses seed 9027, batch four, retrieval blocks of 64, and 500 bootstrap replicates. The executable TTS default is 1,000 bootstrap replicates; the retrospective manuscript WER audit uses 10,000. The executable geometry subset defaults to 2,048 items; the historical PCA diagnostic uses the separately reported 4,096-item subset. The audit and plotting settings identify the corresponding reported analyses.

\paragraph{Environment and selection record.}
The supplied recipe targets Python 3.10--3.12, PyTorch/torchaudio 2.8.0, torchvision 0.23.0, and CUDA 12.8 wheels. All other direct Python requirements are version-pinned, including NumPy 1.26.4, Transformers 4.53.3, Datasets 3.6.0, Lightning 2.5.2, and sentence-transformers 5.0.0. GCC/C++, eSpeak NG, FFmpeg, and a compatible CUDA/cuDNN runtime are system prerequisites. Setup checks dependency consistency and records the installed environment. The evaluation environment is reported separately in Appendix~\ref{app:evaluation}. Training uses the author-confirmed configuration and explicit variant settings described above. The Slurm launcher records resource requests and leaves cluster-specific account and partition fields configurable.

\subsection{Reproduction Workflow}
\label{app:reproduction-scope}
The companion archive supplies data preparation, PL/CL training, feature extraction, text evaluation, speech training, and figure generation, together with pinned dependencies and Slurm instructions. The workflow prepares the corpus, trains the selected models, and passes their outputs to the evaluation and plotting stages. The README gives the setup commands, configuration fields, and output locations; \texttt{--dry-run} prints the complete command and dependency plan before submission.

Corpus input is configured through \texttt{CANOPE\_DATASET\_ID} or the documented local CSV/JSONL interface. The compilation is hosted at \url{https://huggingface.co/datasets/avalonai/english-singlish-g2p}. The schema, filtering rules, split construction, and prepared-data counts are specified in Appendix~\ref{app:evaluation}. Run manifests record configuration, data provenance, and evaluation identities so that comparisons use consistent inputs. The numeric summaries and row-level speech results also support the supplied figure scripts and paired WER audit.

\subsection{Attribution, Licenses, Consent, and Sensitive Content}
\label{app:asset-governance}
\paragraph{Attribution and licenses.}
The component corpora are credited in Appendix~\ref{app:evaluation}. Reused encoders, MatchaTTS, HiFi-GAN, and Whisper are cited in the evaluation and speech sections. Benchmark attribution includes LexNorm~\citep{baldwin2015lexnorm}, SemAntoNeg~\citep{vahtola2022semantoneg}, PAWS~\citep{zhang2019paws}, STS-B~\citep{cer2017sts}, and SICK~\citep{marelli2014sick}. The authors assembled the training compilation from the component corpora identified in Appendix~\ref{app:evaluation}. The code preserves upstream software notices, and its requirements identify external libraries and versions.

The compilation card specifies Apache-2.0, while component data retain their original terms. LJSpeech declares its audio and transcriptions public domain; LibriSpeech specifies CC BY 4.0; and CANNOT specifies CC BY-SA 4.0.\footnote{Creator and dataset pages: \url{https://keithito.com/LJ-Speech-Dataset/}, \url{https://www.openslr.org/12/}, and \url{https://huggingface.co/datasets/tum-nlp/cannot-dataset}.} Wikipedia text is distributed under attribution/share-alike terms, with the applicable version determined by the source revision.\footnote{Wikipedia's licensing transition is documented at \url{https://creativecommons.org/2023/06/29/wikipedia-moves-to-cc-4-0-licenses/}.} The supplied MatchaTTS and HiFi-GAN source includes MIT license notices. The companion asset inventory records asset-specific provenance and licensing information; separately distributed data and model weights remain subject to their respective release terms. The compilation label does not replace these component conditions.

\paragraph{Data-provider and consent context.}
This study reuses publicly released corpora distributed by their original providers and curators. It involved no new participant recruitment, crowdsourcing, or collection of private messages. The authors' contribution to the dataset was aggregation and preparation for the experiments. Collection procedures and participant-consent conditions belong to the original source releases and are described in the cited curator documentation; aggregation does not introduce a new participant-consent process. This statement describes the context of secondary use rather than asserting independently obtained consent from every person mentioned in the source text.

\paragraph{Potentially sensitive content.}
Personal messages, user-authored text, and historical material can contain identifying details, offensive language, or stereotypes. Informal-language and negation transformations can preserve or alter this content. The experiments analyze aggregate representation and speech metrics; data-validity and invertibility checks address the experimental protocol rather than sensitive-content screening. Reuse of individual examples should therefore follow the source terms and take the original content context into account.

\end{document}